\newtheorem{theorem}{Theorem}
\newtheorem{lemma}{Lemma}
\newtheorem{example}{Example}
\newcommand{\ie}{i.e.}
\newcommand{\eg}{e.g.}%
\newcommand{\os}{\mathbb{R}^p}
\newcommand{\oss}{\mathbb{Z}_{\geqslant 0}} 
\newcommand{\osn}{\mathbb{Z}^p_{\geqslant 0}}
\newcommand{\osp}{\mathbb{R}^p_+}
\newcommand{\Nmp}{\mathbb{Z}_{\geqslant0}^n}
\newcommand{\Bmp}{\{0,1\}^n}
\newcommand{\Aeps}{A_{\varepsilon}}
\newcommand{\Ac}{\mathcal{A}}
\newcommand{\Lc}{\mathcal{L}}
\newcommand{\cl}{{\emph c}}
\newcommand{\hc}{{\phi^h}}
\newcommand{\sfc}{{\phi^s}}
\newcommand{\mcskp}{\texttt{MCS-Approx}\xspace}
\newcommand{\rib}{\texttt{IntRe}\xspace}
\newcommand{\rcb}{\texttt{CoRe}\xspace}
\newcommand{\bdd}{\texttt{BDD}\xspace}
\newcommand{\pmcs}{\texttt{ParetoMCS}\xspace}
 \newcommand{\wdom}{\preceqq} 
 \newcommand{\dom}{\preceq} 
 \newcommand{\sdom}{\prec} 
\newtheorem{problem}{Problem}
\newcommand\ag[1]{}
\newcommand\agi[1]{}
\newcommand\jc[1]{}
\newcommand\jci[1]{}
\newcommand\vmm[1]{}
\newcommand\vmmi[1]{}
\newcommand\dv[1]{}
\newcommand\dvi[1]{}
\newcommand\cb[1]{}
\newcommand\cbi[1]{}
\newcommand\jrf[1]{}
\newcommand\jrfi[1]{}
\newcommand\ag[1]{\todo[color=cyan]{{\textbf{AG:} #1}}}
\newcommand\agi[1]{\todo[inline,color=cyan]{{\textbf{AG:} #1}}}
\newcommand\jc[1]{\todo[color=green,size=\footnotesize]{{\textbf{JC:} #1}}}
\newcommand\jci[1]{\todo[inline,color=green]{{\textbf{JC:} #1}}}
\newcommand\vmm[1]{\todo[color=olive]{{\textbf{VMM:} #1}}}
\newcommand\vmmi[1]{\todo[inline,color=olive]{{\textbf{VMM:} #1}}}
\newcommand\dv[1]{\todo[color=orange]{{\textbf{DV:} #1}}}
\newcommand\dvi[1]{\todo[inline,color=orange]{{\textbf{DV:} #1}}}
\newcommand\cb[1]{\todo[color=purple]{{\textbf{CB:} #1}}}
\newcommand\cbi[1]{\todo[inline,color=purple]{{\textbf{CB:} #1}}}
\newcommand\jrf[1]{\todo[color=yellow]{{\textbf{JRF:} #1}}}
\newcommand\jrfi[1]{\todo[inline,color=yellow]{{\textbf{JRF:} #1}}}
\def\ps@pprintTitle{%
  \let\@oddhead\@empty
  \let\@evenhead\@empty
  \def\@oddfoot{\reset@font\hfil\thepage\hfil}
  \let\@evenfoot\@oddfoot
}
\journal{Computers \& Operations Research}
\begin{document}

\begin{frontmatter}
    \title{Exact and approximate determination of the Pareto set using minimal
correction subsets}

    \author[inesc-id]{{\sc A. P. Guerreiro}\corref{cor1}}\ead{andreia.guerreiro@tecnico.ulisboa.pt}
    \author[inesc-id]{{\sc J. Cortes}}
    \author[lamsade]{{\sc D. Vanderpooten}}
    \author[lamsade]{{\sc C. Bazgan}}
    \author[inesc-id]{{\sc I. Lynce}}
    \author[inesc-id]{{\sc V. Manquinho}}
    \author[cegist]{{\sc J. R. Figueira}}
    \address[inesc-id]{INESC-ID, Instituto Superior T\'{e}cnico, Universidade de Lisboa, Portugal}
    \address[lamsade]{LAMSADE (CNRS UMR7243), University Paris-Dauphine, PSL Research University, France}
    \address[cegist]{CEGIST, Instituto Superior T\'{e}cnico,  Universidade de Lisboa, Portugal}

    \cortext[cor1]{Corresponding author}

    \begin{abstract}
    \noindent Recently, it has been shown that the enumeration of Minimal Correction Subsets (MCS) of Boolean formulas allows solving Multi-Objective Boolean Optimization (MOBO) formulations. However, a major drawback of this approach is that most MCSs do not correspond to Pareto-optimal solutions. In fact, one can only know that a given MCS corresponds to a Pareto-optimal solution when all MCSs are enumerated. Moreover, if it is not possible to enumerate all MCSs, then there is no guarantee of the quality of the approximation of the Pareto frontier. This paper extends the state of the art for solving MOBO using MCSs. First, we show that it is possible to use MCS enumeration to solve MOBO problems such that each MCS necessarily corresponds to a Pareto-optimal solution. Additionally, we also propose two new algorithms that can find a $(1+\varepsilon)$-approximation of the Pareto frontier using MCS enumeration. Experimental results in several benchmark sets show that the newly proposed algorithms allow finding better approximations of the Pareto frontier than state-of-the-art
    algorithms, and with guaranteed approximation ratios.
    \end{abstract}
    \begin{keyword}
    Multi-Objective Boolean Optimization, Minimal Correction Subsets, Approximation Algorithms
    \end{keyword}

\end{frontmatter}




\section{Introduction}
\label{sec:intro}

\noindent In the last decade, new algorithms and tools for Boolean optimization
have been developed that rely on successive calls to a highly effective
Propositional Satisfiability (SAT) solver. As a result, these new algorithms
have been used in several application domains such as 
timetabling~\citep{asin2012curriculum}, fault localization in {\tt C} programs~\citep{majumdar-pldi11}, and
design debugging~\citep{veneris-fmcad07}, 
among others~\citep{DBLP:conf/ndss/FengBMDA17}.

Despite its success in Boolean optimization with a single objective function, 
there are few algorithms for Multi-Objective Boolean Optimization (MOBO)
that are based on successive calls to a SAT solver.
However, there are some exceptions. For instance, the Guided Improvement Algorithm
(GIA)~\citep{jackson2009guided}, is implemented in the optimization engine of the
Z3 solver~\citep{DBLP:conf/tacas/BjornerPF15} for finding Pareto-optimal solutions of 
Satisfiability Modulo Theories (SMT) instances with multiple objective functions.
More recently, new logic-based algorithms have been proposed 
that are based on the enumeration of Minimal Correction Subsets
(MCSs)~\citep{DBLP:conf/sat/Terra-NevesLM17} or 
$P$-minimal models~\citep{DBLP:conf/cp/SohBTB17}. 
These logic-based algorithms have shown to be successful at solving MOBO 
instances when the constraint set is hard to satisfy. 

Logic-based algorithms suffer from some drawbacks. For instance,
if not all MCSs are enumerated, there is no guarantee on the quality
of the provided solutions. Similarly, while a $P$-minimal model 
corresponds to a Pareto-optimal solution, if not all $P$-minimal models
are found, there is no guarantee that the Pareto-optimal solutions found
are representative of the Pareto front. Moreover, this approach also requires
an expensive encoding of the objective functions into SAT. In some cases,
the SAT formula becomes so large that the SAT solver is unable to handle the
formula.

In this work, logic-based algorithms for MOBO are extended in several ways.
First, we start by proving that if one uses the order encoding to encode the
objective functions into SAT, then there is a one-to-one correspondence
between an MCS and a Pareto-optimal solution. Moreover, we propose new
logic-based algorithms to obtain an \emph{a priori} guaranteed $(1+\varepsilon)$-approximation of the Pareto 
frontier. These algorithms are based on approximate encodings of the objective 
functions that are usually much smaller than the full encoding into SAT.
Finally, an extensive experimental analysis is performed on the proposed
algorithms which includes a comparison with other state-of-the-art logic-based algorithms 
for MOBO.

The paper is organized as follows. The background information is introduced in Section~\ref{sec:background}. New ways of encoding the objective functions and
their properties are explained in Section~\ref{s:contr}, these are at the core of the
approximation algorithms introduced in Section~\ref{s:algs}. The algorithms are then
experimentally compared against state-of-the art ones in Section~\ref{s:exp}. Finally,
concluding remarks are drawn in Section~\ref{s:conc}.


\section{Background: Concepts, definitions, and notation}
\label{sec:background}

\noindent This section provides the relevant background information for the paper. 
The formalization of multi-objective Boolean optimization problems is introduced, 
as well as the main concepts, definitions and corresponding notation used in the remainder
of the paper.

\subsection{Multi-objective Boolean optimization}
\label{sec:mobo}

\noindent The Multi-Objective Boolean Optimization (MOBO) problem can be defined as
minimizing $p$ objective functions defined over a set of $n$ (Boolean) variables and $m$ constraints as follows:

\begin{problem}[Multi-Objective Boolean Optimization]
\begin{equation*}
    \begin{array}{llr}
        \min  & f(x)= \left(f_1(x) = \sum_{j=1}^n c_j^1 l_j,\ldots,f_k(x) = \sum_{j=1}^n c_j^k l_j ,\ldots,f_p(x) = \sum_{j=1}^n c_j^p l_j\right)\\
        \mbox{\rm{subject to}:} \\
        & \sum\limits_{j=1}^n a_{ij} l_j \geqslant b_i, &  i \in \{1 \ldots m \}\\
        & l_j \in \{x_j, \bar{x}_j\},\, x_j \in \{ 0, 1 \}, & j \in \{1 \ldots n\}
    \end{array}
\end{equation*}
\label{prob:MOBP}
\end{problem}

In this formulation, constraints are linear Pseudo-Boolean 
constraints~\citep{DBLP:series/faia/RousselM09}, and each objective function $f_k(x)$ is 
also a linear expression defined over a set of Boolean variables.
Each coefficient, $a_{ij}$, and each right-hand-side, $b_i$, are assumed to be non-negative integers. This is not restrictive since any  negative term such as $-a_{ij}x_j$ with $a_{ij}>0$ can be replaced by $a_{ij}\bar{x}_j - a_{ij}$, where $\bar{x}_j=1-x_j$, to obtain an equivalent expression with only non-negative coefficients.
For example, 
$-2x_1 - 3x_2 + 2 x_3 \geqslant -2 \Leftrightarrow 2(1-x_1) + 3(1-x_2) + 2 x_3 \geqslant -2 + 2 + 3 \Leftrightarrow 2\bar{x}_1 + 3\bar{x}_2 + 2 x_3 \geqslant 3$.
The transformed constraint always has a non-negative right-hand side. Otherwise, it would be trivially satisfied and could be safely removed.

Let $X\subseteq\Bmp$ denote the set of feasible solutions in the decision space ($\Bmp$),
\ie, the set of solutions that satisfy the problem constraints.
The vector-valued function $f:X\to\osn$ maps a feasible solution to an (objective) vector, or point, in the objective space, $\osn$, where $p$ is the number of objectives.
The set $Y\subseteq\osn$ is the image set of $X$, \ie, $Y=f(X)$, which represents the set of feasible vectors, or feasible points.
Hence, to each feasible solution $x\in X$ corresponds an objective vector 
$f(x) = (f_1(x),\ldots,f_p(x))$, where $f(x)\in Y$. 

Given two points in the objective space, $z,z'\in\osn$, $z$ is said to \emph{weakly dominate} $z'$ if $z_k\leqslant z'_k$ for all $k\in\{1,\ldots,p\}$, where $z_k$ and $z'_k$ denote the $k$-th
coordinate of $z$ and $z'$, respectively. This is represented as $z \wdom z'$. 
Point $z$ is said to \emph{dominate} $z'$ if $z \wdom z'$ and $z \neq z'$. 
This is represented as $z \dom z'$. Point $z$ is said to \emph{strictly dominate} $z'$ if $z_k < z'_k$ for all $k\in\{1,\ldots,p\}$.
This is represented as $z \sdom z'$.
The points $z$ and $z'$ are said to be \emph{incomparable}, or \emph{mutually nondominated}, if neither $z\wdom z'$ nor $z'\wdom z$~\citep[see, e.g.,][]{DBLP:books/daglib/Ehrgott05}.
Weak dominance may be extended to point sets. Given
$A,A'\subset\os$, $A$ is said to weakly dominate $A'$ if, for each $z'\in A'$,
there exists $z\in A$ such that $z\wdom z'$~\citep{ZitzlerTLFF03}. 

A feasible solution $x$ is said to be an \emph{efficient} (or \emph{Pareto-optimal}) solution
if there is no other solution $x' \in X$ such that $f(x')\dom f(x)$, in which case $f(x)$ 
is said to be a \emph{nondominated point}. 
The set of all efficient solutions is known as the
\emph{efficient set} (or \emph{Pareto set}). The corresponding image set is known as
\emph{the nondominated set} (or \emph{Pareto front}), which is here represented by $Y_N\subseteq Y$.
Given a MOBO formulation, the goal is (typically) to find the set of all nondominated points
and an efficient solution corresponding to each such point.
If enumerating all nondominated points is not possible (\eg, it takes too long, or $Y_N$ is too
large), an alternative is to find a $(1+\varepsilon)$-approximation.
A set~$A_{\varepsilon}\subseteq X$ of feasible solutions is called a \emph{$(1+\varepsilon)$-approximation}, if for any feasible solution~$x'\in X$, there exists a solution~$x\in A_{\varepsilon}$ such that:
\begin{align*}
f_j(x) \leqslant (1+\varepsilon) f_j(x') \text{ for all } j\in\{1,\dots,p\}
\end{align*}
where $\varepsilon\geqslant 0$.
An approximation ratio of an arbitrary point set, $A\subset\osp$,
to a given reference point set, $R\subset\osp$ (\eg, the Pareto front, $R=Y_N$),
can be calculated from the (multiplicative) $\epsilon$-indicator~\citep{ZitzlerTLFF03}:
$$
I_\epsilon(A,R) = \max_{r\in R} \min_{a\in A} \max_{k\in\{1,...,p\}} \left\{\frac{a_k}{r_k}\right\} 
$$
If $A_{\varepsilon}$ is a $(1+\varepsilon)$-approximation then we have $I_\epsilon(f(A_{\varepsilon}),Y_N)\leqslant 1+\varepsilon$.
A \emph{lower bound set}, $L\subset\os$, \ie, a set of mutually nondominated points that weakly
dominates $Y_N$, can be easier to compute than $Y_N$ and provides an upper bound on the
approximation ratio of any point set $A\subseteq Y$, \ie,
$I_\epsilon(A,L) \geqslant I_\epsilon(A,Y_N)$~\citep{ZitzlerKT08}.
With a slight abuse of notation, when referring to a solution $x\in X$, for which we explicitly assign 
values to the variables, we will refer to it as an assignment $\nu$ and define it using set notation
(\eg, $\nu=\{x_1=1, x_2=0\}$). We will also extend the notation to state that $\nu\in X$,
and use $f(\nu)$.
Next, we present an example illustrating some of the introduced definitions. 
\begin{example}
Let $f(x) = (2 x_1 + x_2 +  \bar{x}_3,\,\bar{x}_1 +x_2  + 2 x_3)$ denote the vector with two objective functions to minimize such that the set of constraints $\{ x_1 + x_2 + x_3 \geqslant 2 \}$ needs to be satisfied. The four feasible solutions correspond to the following assignments $\nu^i$ with objective vectors $f(\nu^i)$:
\begin{itemize}
    \item $\nu^1 = \{ x_1 = 1, x_2 = 1, x_3 = 0 \}$  with $f(\nu^1) = (4,1)$
    \item $\nu^2 = \{ x_1 = 1, x_2 = 0, x_3 = 1 \}$  with $f(\nu^2) = (2,2)$
    \item $\nu^3 = \{ x_1 = 0, x_2 = 1, x_3 = 1 \}$  with $f(\nu^3) = (1,4)$
    \item $\nu^4 = \{ x_1 = 1, x_2 = 1, x_3 = 1 \}$  with $f(\nu^4) = (3,3)$
\end{itemize}
In this case, assignments $\nu^1$, $\nu^2$ and $\nu^3$ correspond to the three efficient solutions of this problem. Hence, $Y_N=\{(1,4), (2,2),(4,1)\}$. Moreover, $A_\varepsilon=\{\nu^2\}$  is a $(1+\varepsilon)$-approximation for any $\varepsilon \geqslant 1$. Observe that this set has smaller cardinality than $Y_N$ and that  $I_\epsilon(f(A_\varepsilon),Y_N)=2$. Remark also that $\nu^4$, which is not efficient, also forms a $(1+\varepsilon)$-approximation of small cardinality, but for any $\varepsilon \geqslant 2$.
\label{ex:mobo-example}
\end{example}

\subsection{Boolean optimization problems as maximum satisfiability problems}
\label{s:bg:bo2ms}

\noindent Boolean optimization problems can be encoded as logic-based problems. 
Section~\ref{s:bg:maxsat} introduces the Maximum Satisfiability (MaxSAT) problem, defines Minimal Correction Subsets (MCSs), and explains how an optimal solution to a MaxSAT problem can be found by
enumerating all MCSs.
Section~\ref{sec:mcs-ur} describes an encoding of a (single-objective) Boolean optimization 
problem as a MaxSAT problem, which will be later extended to the multi-objective case in Section~\ref{s:contr}.

\subsubsection{Maximum satisfiability and minimal correction subsets}
\label{s:bg:maxsat}
\noindent In propositional logic, a \emph{formula} is 
defined over a set of Boolean variables $x_1,\ldots,x_n \in \{0,1\}$. 
In formulas expressed in \emph{Conjunctive Normal Form (CNF)}, a \emph{clause} represents a disjunction of literals, where each \emph{literal} is a variable (\eg, $x_1$) or its negation  (\eg, $\bar{x}_1$), and the formula represents a conjunction of clauses.
For example, $\phi=\{(x_1 \vee \bar{x}_2),(\bar{x}_1\vee x_2 \vee x_3)\}$ is a CNF formula with two clauses. 
A literal $x_j$ is satisfied by an assignment $\nu$ if $\nu$ assigns 1 to $x_j$
(\ie, $x_j=1$), whereas a negated literal $\bar{x}_j$ is satisfied by $\nu$
if $\nu$ assigns 0 to $x_j$ (\ie, $x_j=0$). 
An assignment is said to satisfy a clause if it satisfies at least one literal 
in the clause, and it is said to satisfy a formula if it satisfies all clauses 
in the formula. 
A complete assignment that satisfies a given formula is also called a \emph{feasible} 
assignment or a \emph{model}.
A formula $\phi$ is said to be \emph{satisfiable} if there exists an
assignment that satisfies $\phi$. Otherwise, $\phi$ is said to be \emph{unsatisfiable}.

\begin{example}
Consider the following CNF formulas $\phi=\{(x_1\vee \bar{x}_2),(\bar{x}_1\vee \bar{x}_2)\}$,
and  $\phi'=\{(x_1\vee \bar{x}_2),(\bar{x}_1\vee \bar{x}_2),(x_2)\}$. Formula $\phi$ is
satisfiable and $\nu=\{x_1=1, x_2=0\}$, denoted more compactly as $\nu = (1,0)$, is a model of $\phi$, whereas formula $\phi'$ is
unsatisfiable.
\end{example}

Given a CNF formula $\phi$, the Propositional Satisfiability (SAT) problem
consists of finding a satisfiable assignment to $\phi$ or prove that
such assignment does not exist.
The Maximum Satisfiability (MaxSAT) problem is an optimization version of the 
SAT problem.
In general, a MaxSAT formula $\phi$ consists of two sets of clauses,
the hard ($\hc$) and the soft ($\sfc$) clauses such that $\phi=\hc \cup \sfc$.
This paper assumes unweighted MaxSAT formulas, but weighted MaxSAT is also 
treated in the literature~\citep{manya-handbook09}.
Assuming that a feasible assignment for $\hc$ exists, the goal is to find 
an assignment that satisfies all hard clauses and as many soft clauses 
as possible, \ie, minimizes the number of unsatisfied soft clauses.
Observe that usually there is no assignment that simultaneously satisfies all hard 
clauses in $\hc$ and all soft clauses in $\sfc$.

A \emph{correction subset} of a MaxSAT formula $\phi$ is a
subset of soft clauses $C\subseteq\sfc$ such that $\hc\cup(\sfc\setminus C)$ is satisfiable.
Additionally, if $\hc \cup (\sfc\setminus C) \cup \{c\}$ is unsatisfiable for all $c\in C$,
then $C$ is a \emph{Minimal Correction Subset (MCS)}.
That is, $C$ is minimal in the sense that none of the proper subsets of $C$ is a correction subset.

A complete assignment $\nu$ is said to \emph{correspond} to the MCS $C$ if $\nu$ 
satisfies $\hc\cup(\sfc\setminus C)$. Note that since $C$ is an MCS, then $\nu$ does 
not satisfy any clause in $C$.
Given a MaxSAT formula $\phi$, an optimal solution to $\phi$ corresponds to 
the MCS $C$ with minimum size. Hence, if we were to enumerate all MCSs 
of $\phi$~\citep{DBLP:conf/sat/PrevitiMJM17,DBLP:conf/ijcai/GregoireIL18},
then we can simply select an MCS of minimum size as an optimal solution for~$\phi$.
\begin{example}
Let $\hc=\{(\bar{x_1} \vee \bar{x_2} \vee \bar{x_3}), (x_1 \vee x_2), 
(\bar{x}_1 \vee x_2 \vee x_3)\}$ and $\sfc=\{(\bar{x}_1),(\bar{x}_2),(\bar{x}_3)\}$ 
define the set of hard and soft clauses of a MaxSAT formula $\phi$.
There are two MCSs, $C^1=\{(\bar{x}_1),(\bar{x}_3)\}$, and $C^2=\{(\bar{x}_2)\}$. 
The assignments $\nu^1=(1,0,1)$, and $\nu^2=(0,1,0)$ 
correspond to MCSs $C^1$ and $C^2$, respectively. 
Therefore, assignment $\nu^2$, corresponding to $C^2$, is an optimal solution for the MaxSAT instance $\phi$.
\label{ex:maxsat}
\end{example}

Note that linear single-objective Boolean optimization problems can be easily
formulated as MaxSAT. For instance, the MaxSAT formula in Example~\ref{ex:maxsat}
corresponds to the problem of satisfying $\hc$ while minimizing $x_1 + x_2 + x_3$.
Observe that the cost function is encoded into the soft clauses $\sfc$.
Arbitrary linear objective function are formulated using a more
general version of MaxSAT, where each soft clause has an associated weight representing the
cost of not satisfying it. For example,
$2x_1+3x_2$ is encoded into the soft clauses $(\bar{x_1})$ and $(\bar{x_2})$ with weights 2 and 3,
respectively.
In fact, any 0-1 Integer Linear Programming (0-1 ILP) can be encoded into
MaxSAT by encoding the linear constraints into hard clauses and the objective
function as soft clauses. The encoding of linear constraints into CNF formulas
has been extensively studied~\citep{bailleux-cp03,seq,asin-constraints11,totalizer-ictai13,swc,watchdog,bailleux-jsat06,een-jsat06,abio-jair12}.

There exist different ways of encoding the objective function(s) into soft clauses, which has implications on how MCSs and (Pareto-)optimal solutions relate.
In the above example, the encoding relies on directly using the decision variables present in the
objective function as soft clauses (and their coefficients as the corresponding weights). This
idea has already been extended to the multi-objective case, for which it was shown that
each Pareto-optimal solution of a MOBO problem corresponds to an MCS~\citep{DBLP:conf/sat/Terra-NevesLM17}. Hence, all Pareto-optimal solutions can be found by enumerating all MCSs of the
resulting MaxSAT formula.
However, this encoding has the disadvantage that not all MCSs correspond to Pareto-optimal solutions, which means that some dominated solutions may be enumerated as well.
One way to overcome this issue is to use a different encoding for which there is a one-to-one correspondence between nondominated points and MCSs. The encoding of objective functions using a unary representation have this property, as seen in the next sections.

\subsubsection{Encoding the objective function(s) using a unary representation}
\label{sec:mcs-ur}

\noindent Objective functions can be encoded in the MaxSAT formulation by adding,
as soft clauses, new variables that represent (bounds on) the value of the objective function
\citep[see, \eg,][]{DBLP:conf/cp/SohBTB17}.
Assume that the objective function $f: \Bmp \to \oss$
is lower and upper bounded by $\ell$ and $u$,
respectively, \ie, $\ell \leqslant f(\nu) \leqslant u$ for all $\nu\in\Bmp$.
A way of encoding $f(x)$ to CNF with this approach is to include a new set 
of Boolean variables, $y_{d}$, and encode the equivalences $y_{d}=1 \iff f(x) < d$
for $d \in D \subseteq \{\ell,\ell,\ldots,u^+\}$ where, throughout the paper, $u^+$ is assumed to be
a value greater than $u$ (\ie, $u^+ > u$). As discussed at the end of this section, such equivalences can be encoded with an additional set of hard clauses. Variables
$y_{\ell}$ and $y_{u^+}$ are not strictly required in the encoding, as $y_{\ell}$ is
trivially false and $y_{u^+}$ is trivially true. To make the analysis smoother, we will assume
however that $\ell,u^+ \in D$, which is a weak assumption since $\ell$ and $u$ are
supposed to be known lower and upper bounds of the objective function value. The precise definition of
$D$ depends whether we aim at solving our discrete optimization problem exactly or approximately. 
In the first case $D$ must be (a superset of) all values that can be reached by the objective function, \eg, be the set of all integers between $\ell$ and $u^+$ (in this case we assume $u^+=u+1$). If so, $D$ will be called \emph{complete}. In the second case, depending on the required degree of approximation,
$D$ should be defined as an appropriate subset of all integers between $\ell$ and $u^+$.

With this unary representation of the objective function, a MaxSAT formulation of any single 
objective Boolean optimization problem is obtained by defining $\{(y_{d}) \:|\: d \in D\}$ as the
set of soft clauses, with $D \subseteq \{\ell,\ell+1,\ldots,u^+\}$ and $\ell,u^+\in D$, while 
constraints and the previous equivalences are represented by hard clauses. Then, solving the single 
objective Boolean optimization problem amounts to finding the (unique) MCS of the MaxSAT formula. 
Assuming indeed that the optimal objective function value is $d^*$, the corresponding MCS will
be of the form $C=\{(y_{d}) \:|\: d \in D \mbox{ and } d < d^*\}$. Observe first that $C$ always
exists since it contains at least $y_{\ell}$. 
Let $d'$ be the largest index among variables $y_d \in C$ and $d''$ be the smallest index
among variables in $D \setminus C$. Then, any feasible assignment $\nu$ corresponding to $C$ is
such that $d' \leqslant f(\nu) < d''$. Hence, the solutions corresponding to $C$ are within
an approximation guarantee provided by $d'$ and $d''$.
In the particular case where $D$ is complete, any solution $\nu$ corresponding to $C$ is optimal
with objective function value $f(\nu)=d'$.
Consider the following example where $D$ is complete.

\begin{example}
Let $f(x) = 3 x_1 + 2 x_2 + 2 x_3$ be the function to minimize such that 
$\{x_1 + x_2 \geqslant 1, \bar{x}_2 + x_3 \geqslant 1\}$
must be satisfied. 
Observing from the definition of $f$ that it can only take values in $D' = \{ 0, 2, 3, 4, 5, 7\}$ we can define a complete domain $D = D' \cup \{8\}$ and variables $y_d$, for $d\in D$.
Let CNF($f(x)$) denote the CNF formula encoding that $y_{d}=1 \iff f(x) < d$. 
Hence, one can define a MaxSAT formula $\phi = \hc \cup \sfc$ such that 
$\hc = \{(x_1 \vee x_2), (\bar{x}_2 \vee x_3) \} \cup \text{CNF}(f(x))$ and 
$\sfc = \{ (y_0), (y_2), (y_3), (y_4),  (y_5), (y_7), (y_8) \}$.
Note that the optimal solution of $\phi$ corresponds to the optimal solution of the original problem.
Observe that $\phi$ only has a single MCS corresponding to the optimal solution.
In this case, the MCS is $\{(y_0), (y_2), (y_3)\}$ since the optimal solution has a cost of 3.
\label{ex:singleMCS}
\end{example}

\noindent It remains to explain how to actually encode (into hard clauses) the equivalences
$y_{d}=1 \iff f(x) < d$, for all $d\in D$. Since, in this paper, $f(x)$ is a linear expression
defined over a set of Boolean variables, it is possible to develop solvers that natively handle
these expressions.
However, many Boolean solvers perform an encoding of linear Pseudo-Boolean expressions into CNF
in order to take advantage of effective  state of the art SAT solvers.
It is known that such expressions can be represented into
CNF using a polynomial size encoding~\citep{watchdog}.
However, despite its worse case exponential size, it has been observed that 
some unary representations of the value of linear expressions are better handled 
by SAT solvers~\citep{DBLP:journals/jsat/BacchusJM19}. 
Hence, in this paper, the actual encoding of the aforementioned equivalences is done only after encoding the value of the objective function $f(x)$ into CNF. First, we use an encoding using selection networks~\citep{DBLP:journals/constraints/KarpinskiP19} that has been shown to be more compact. Next, the selection networks encoding is extended such that a unary encoding is produced where $y_{d}=1 \iff f(x) < d$. 
Moreover, besides adding the clauses to encode the equivalence $y_{d}=1 \iff f(x) < d$ for all $d\in D$, an additional order encoding~\citep{tamura2008sugar} is also added such that $y_d = 1 \implies y_{d+1} = 1$ and $y_d = 0 \implies y_{d-1} = 0$.
We refer the interested reader to the literature for further details on this and other encodings~\citep{DBLP:series/faia/RousselM09,DBLP:conf/cp/0001MM15,DBLP:journals/constraints/KarpinskiP19,DBLP:conf/sat/KarpinskiP20}.
Finally, recall that in MOBO there are several objective functions.
Therefore, each objective function is represented using the described encoding
into CNF.

\section{Pareto MCSs} 
\label{s:contr}

\noindent In this section we investigate the relation between MCSs and the nondominated points of a MOBO problem (see Section~\ref{s:contr:pmcs}). In the special case where the domains $D_k$ of the objective functions are complete, we prove that MCSs and the nondominated points are in one to one correspondence. A similar result was obtained in~\cite{DBLP:conf/cp/SohBTB17}, in a different setting, using so-called $P$-minimal models, instead of  MCSs. In the general case, the domains are not  complete, meaning that some (or possibly all) values taken by objective function $f_k$ are not present in $D_k$ for $k \in \{1,\ldots,p\}$. Then, we show that MCSs provide some guarantee on the quality of the associated solutions. This lays the foundation for the generation of $(1+\varepsilon)$-approximation sets investigated in Section~\ref{s:approx}.

In the following, let $\ell_k\in\oss$ and $u_k\in\oss$ denote the lower and upper bound values of the $k$-th
objective function of a MOBO problem, \ie, $\ell_k \leqslant f_k(x) \leqslant u_k$, for all $k\in\{1,\ldots,p\}$, and
let $D_k\subseteq\{\ell_k,\ell_k+1,\ldots, u_k, u^+_k\}$ and $\ell_k, u^+_k\in D_k$ where
$u^+_k > u_k$. Moreover, let the variables $y_{k,d}$,
where $k\in\{1,\ldots,p\}$ and $d \in D_k$, encode the values of the $k$-th objective function (see Section~\ref{sec:mcs-ur}),
\ie, $y_{k,d}=1 \iff f_k(x) < d$.
%
Problem~\ref{prob:MOBP} is encoded as a MaxSAT problem, where $\hc$ is a set of hard clauses
encoding (in CNF) the problem constraints,
and encoding each of the objective functions and their corresponding values in $D_1,\ldots,D_p$.
Hence, $\hc$ enforces $x\in X$ and $y_{k,d}=1 \iff f_k(x) < d$, for all $k\in\{1,\ldots,p\}$ and $d \in D_k$. 
Finally, the set of soft clauses, $\sfc$, consists of the set of all literals $y_{k,d}$, \ie, $\sfc=\{(y_{k,d})\:|\: k=1,\ldots,p \mbox{ and } d\in D_k\}$.

We first give two preliminary simple results related to MCSs.
\begin{lemma}\label{lem:consistency}
Let $C\subset \phi^s$ be an MCS for our MaxSAT formulation of a MOBO problem.
\begin{itemize}
    \item[$(i)$] If $y_{k,d} \in C$, then  $y_{k,e} \in C$ for $e \leqslant d$.
    \item[$(ii)$] If $y_{k,d} \notin C$, then $y_{k,e} \notin C$ for $e \geqslant d$.
\end{itemize}
\end{lemma}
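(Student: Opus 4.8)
The plan is to work with a feasible assignment $\nu$ corresponding to the MCS $C$, and to translate membership in $C$ into a statement about the objective value $f_k(\nu)$. Since $C$ is a correction subset, $\hc\cup(\sfc\setminus C)$ is satisfiable, so such a $\nu$ exists; and since $C$ is minimal, $\nu$ must falsify every clause of $C$ (otherwise $\hc\cup(\sfc\setminus C)\cup\{c\}$ would be satisfiable for some $c\in C$, contradicting the definition of an MCS). Combining this with the fact that $\nu$ satisfies every soft clause of $\sfc\setminus C$, I obtain, for every $d\in D_k$,
\[
y_{k,d}\in C \iff \nu(y_{k,d})=0 .
\]

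Next I would invoke the hard clauses, which enforce $y_{k,d}=1\iff f_k(x)<d$. Evaluated at $\nu$, this reads $\nu(y_{k,d})=0\iff f_k(\nu)\geqslant d$. Chaining the two equivalences collapses the whole lemma into a single clean statement,
\[
y_{k,d}\in C \iff f_k(\nu)\geqslant d ,
\]
after which only an elementary comparison of integers remains. For $(i)$, if $y_{k,d}\in C$ then $f_k(\nu)\geqslant d$, so for any $e\leqslant d$ we get $f_k(\nu)\geqslant d\geqslant e$, whence $f_k(\nu)\geqslant e$ and therefore $y_{k,e}\in C$. Part $(ii)$ is simply the contrapositive: if $y_{k,d}\notin C$ then $f_k(\nu)<d\leqslant e$ for every $e\geqslant d$, so $f_k(\nu)<e$ and $y_{k,e}\notin C$. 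Throughout, $e$ is tacitly restricted to $D_k$, so that $y_{k,e}$ is genuinely one of the soft-clause variables.

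I do not expect a real obstacle here, since the content lies entirely in setting up the correspondence correctly. The one point that must be handled with care is the direction ``$y_{k,d}\in C\Rightarrow\nu(y_{k,d})=0$'', which is exactly where the \emph{minimality} of the correction subset is used, rather than merely the correction-subset property: without minimality, a model of $\hc\cup(\sfc\setminus C)$ could gratuitously satisfy some clause of $C$, and the characterization above would fail. (Alternatively, one could argue purely syntactically from the order-encoding clauses $y_{k,d}=1\Rightarrow y_{k,d+1}=1$ built into $\hc$, but the semantic route through $f_k(\nu)$ is cleaner and does not require $D_k$ to consist of consecutive integers.)
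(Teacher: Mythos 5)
Your proof is correct and follows essentially the same route as the paper's: both translate membership of $y_{k,d}$ in $C$ into the statement $f_k(\nu)\geqslant d$ (respectively $f_k(\nu)<d$ for non-membership) via an assignment $\nu$ corresponding to the MCS, and then conclude by comparing integers. The only difference is that you spell out the two steps the paper leaves implicit --- that minimality forces $\nu$ to falsify every clause of $C$ (a fact the paper records earlier, in its definition of an assignment corresponding to an MCS), and the final conversion from $f_k(\nu)\geqslant e$ back to $y_{k,e}\in C$ --- which is a matter of exposition, not substance.
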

\begin{proof}~\\
\vspace{-0.5cm}
\begin{itemize}
    \item[$(i)$] Since $y_{k,d} \in C$, the solution $x$ associated with the current model is such that $f_k(x) \geqslant d$. Then we necessarily have $f_k(x) \geqslant e$ for $e \leqslant d$.
    \item[$(ii)$]  Since $y_{k,d} \notin C$, the solution $x$ associated with the current model is such that $f_k(x) < d$. Then we necessarily have $f_k(x) < e$ for $e \geqslant d$.
\end{itemize}%
\vspace{-0.5cm}
\end{proof}
\noindent Let $D=D_1\times\ldots\times D_p$. To any MCS $C$ we associate the two following points:
\begin{itemize}
    \item its \emph{representative point} $r \in D$ whose components are $r_k =\max\{d \in D_k\:|\:y_{k,d} \in C\}$ 
    \item its \emph{successor point} $r'\in D$ whose components are $r'_k =\min\{d \in D_k\:|\: y_{k,d} \not\in C\}$, for all $k\in\{1,\ldots,p\}$.
\end{itemize}
The following result shows that $r$ and $r'$ correspond to lower and upper bound points of the image of the solutions associated to $C$.
\begin{lemma}\label{lem:bounds}
Let $r$ and $r'$ be the representative and successor point of an MCS $C$. Then any assignment $\nu\in\Bmp$ associated  to $C$ is such that $r \wdom f(\nu) \sdom r'$. Moreover, if domains $D_k$ are complete, for all $k\in\{1,\ldots,p\}$, then $f(\nu) = r$.
\end{lemma}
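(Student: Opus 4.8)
The plan is to establish the two chained inequalities $r \wdom f(\nu)$ and $f(\nu) \sdom r'$ coordinate by coordinate, and then specialize to the complete-domain case. For a fixed objective index $k\in\{1,\ldots,p\}$, I would work entirely with the equivalence $y_{k,d}=1 \iff f_k(\nu) < d$ enforced by the hard clauses, recalling that because $C$ is an MCS and $\nu$ is an assignment corresponding to $C$, the assignment $\nu$ satisfies $\hc\cup(\swc\setminus C)$ and falsifies every literal in $C$. Thus $y_{k,d}\in C$ means $\nu$ sets $y_{k,d}=0$, i.e.\ $f_k(\nu)\geqslant d$, while $y_{k,d}\notin C$ (with $y_{k,d}\in\swc$) means $\nu$ sets $y_{k,d}=1$, i.e.\ $f_k(\nu)<d$.

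First I would prove $r_k \leqslant f_k(\nu)$. By definition $r_k=\max\{d\in D_k \:|\: y_{k,d}\in C\}$, and this maximum is well defined because $y_{k,\ell_k}\in C$ (the literal $y_{k,\ell_k}$ asserts $f_k(\nu)<\ell_k$, which is impossible since $\ell_k$ is a lower bound, so this soft clause can never be satisfied and must lie in every MCS). Since $y_{k,r_k}\in C$, we get $f_k(\nu)\geqslant r_k$ directly from the falsified equivalence. Next I would prove $f_k(\nu) < r'_k$. By definition $r'_k=\min\{d\in D_k\:|\:y_{k,d}\notin C\}$, which is well defined because $y_{k,u^+_k}\notin C$ (the literal asserts $f_k(\nu)<u^+_k$, which always holds since $u^+_k>u_k\geqslant f_k(\nu)$, so this soft clause is satisfiable and, by minimality, not in $C$). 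Since $y_{k,r'_k}\notin C$, the satisfied equivalence gives $f_k(\nu)<r'_k$. As these hold for every $k$, we obtain $r\wdom f(\nu)$ and $f(\nu)\sdom r'$, which is the first claim. Lemma~\ref{lem:consistency} guarantees that $C$ restricted to each block of $y_{k,\cdot}$ variables really is a downward-closed prefix, so that $r_k$ and $r'_k$ are consecutive elements of $D_k$ and there is no ``gap'' between them.

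For the complete-domain case I would argue that $r_k = f_k(\nu)$ for each $k$. When $D_k$ is complete it contains every integer in $\{\ell_k,\ldots,u_k,u^+_k\}$, in particular the integer value $v:=f_k(\nu)$ and its successor $v+1$. Since $f_k(\nu)<d \iff v<d$, the literal $y_{k,v}$ corresponds to the false statement $v<v$ and hence $y_{k,v}\in C$, while $y_{k,v+1}$ corresponds to the true statement $v<v+1$ and hence $y_{k,v+1}\notin C$. By Lemma~\ref{lem:consistency}(ii) no index larger than $v$ can be in $C$, so $r_k=\max\{d\in D_k\:|\:y_{k,d}\in C\}=v=f_k(\nu)$, giving $f(\nu)=r$.

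The only real subtlety, and the step I would treat most carefully, is justifying that the sets defining $r_k$ and $r'_k$ are nonempty so that the $\max$ and $\min$ exist; this is exactly why the assumption $\ell_k,u^+_k\in D_k$ was imposed, and I would spell out the two boundary observations ($y_{k,\ell_k}$ always falsified, $y_{k,u^+_k}$ always satisfied) explicitly. Everything else is a direct reading of the encoded equivalence together with the definition of an assignment corresponding to an MCS, so no heavy computation is needed.
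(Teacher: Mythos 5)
Your proof is correct and follows essentially the same route as the paper: the first claim is exactly the paper's argument ($y_{k,r_k}\in C$ forces $f_k(\nu)\geqslant r_k$ and $y_{k,r'_k}\notin C$ forces $f_k(\nu)<r'_k$), and your explicit justification that the $\max$ and $\min$ defining $r_k$ and $r'_k$ exist (via $y_{k,\ell_k}\in C$ and $y_{k,u^+_k}\notin C$) is a worthwhile addition the paper leaves implicit. The only place you diverge is the complete-domain case: the paper argues there is no feasible value strictly between $r_k$ and $r'_k$ (they are consecutive in $D_k$ by Lemma~\ref{lem:consistency}, and $D_k$ contains all reachable values), whereas you directly identify $r_k=f_k(\nu)$. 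Your version is fine, but note one imprecision: ``complete'' in the paper only means $D_k$ is a superset of the values reachable by $f_k$ (together with $u^+_k$), \emph{not} that it contains every integer in $\{\ell_k,\ldots,u^+_k\}$ --- that is just the ``e.g.''\ instantiation --- so $v+1$ need not belong to $D_k$ and your appeal to $y_{k,v+1}\notin C$ with Lemma~\ref{lem:consistency}(ii) is not licensed in general. The detour is also unnecessary: completeness gives $v=f_k(\nu)\in D_k$, the falsified equivalence gives $y_{k,v}\in C$ and hence $r_k\geqslant v$, and the already-proved $r_k\leqslant f_k(\nu)=v$ yields equality directly. (Also a small notational slip: $\nu$ satisfies $\hc\cup(\sfc\setminus C)$, not $\hc\cup(\swc\setminus C)$.)
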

\begin{proof}
By definition, $y_{k,r_k} \in C$ implying $f_k(\nu) \geqslant r_k$, and  $y_{k,r'_k} \notin C$ implying $f_k(\nu) < r'_k$, for all $k \in \{1,\ldots,p\}$. When domains $D_k$ are complete, there is no feasible solution $x\in X$, such that $r_k < f_k(x) <r'_k$, implying the equality.
\end{proof}

\subsection{Finding the nondominated set}
\label{s:contr:pmcs}
\noindent Assuming that $D_k$ is complete and $\ell_k,u^+_k\in D_k$, for all $k\in\{1,\ldots,p\}$, we show that each MCS corresponds to a nondominated point. It follows that all nondominated points can be obtained by enumerating all MCSs.
We assume that Problem~\ref{prob:MOBP} 
is feasible. Therefore, all hard constraints can be satisfied. MCSs are computed only with respect to variables $y_{k,d}$ for all $k \in \{1,\ldots,p\}$ and $d\in D_k$.

\begin{theorem}
Consider a MOBO problem, with feasible point set $Y$, stated using our MaxSAT formulation with complete domains $D_k$ for all $k\in\{1,\ldots,p\}$. Then nondominated points in $Y_N$ and MCSs of the MaxSAT formulation are in one to one correspondence.
\label{t:mcs-nondom}
\end{theorem}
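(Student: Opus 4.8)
The plan is to establish the claimed one-to-one correspondence by exhibiting a concrete map from MCSs to nondominated points, and then arguing it is both well-defined (\ie, lands in $Y_N$) and bijective. The natural candidate map sends an MCS $C$ to its representative point $r$. By Lemma~\ref{lem:bounds}, since the domains $D_k$ are complete, any assignment $\nu$ associated to $C$ satisfies $f(\nu) = r$, so $r \in Y$ is genuinely a feasible point and the map is well-defined at the level of feasible points. The first substantive step is then to show $r \in Y_N$, \ie, that $r$ is nondominated.

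For that direction I would argue by contradiction. Suppose $r = f(\nu)$ is dominated by some feasible point $z = f(\nu')$ with $\nu' \in X$; that is, $z \wdom r$ and $z \neq r$, so $z_k \leqslant r_k$ for all $k$ with strict inequality in some coordinate. I would then build the correction subset naturally associated to $\nu'$, namely $C' = \{(y_{k,d}) \mid d \in D_k \text{ and } f_k(\nu') \geqslant d\}$, using completeness so that $f_k(\nu') = z_k$ exactly. Because $z_k \leqslant r_k$ coordinatewise (with one strict), the set $C'$ is a proper subset of $C$ (the defining threshold for each coordinate moves down, and strictly down in at least one). The key point is that $C'$ is itself a correction subset: the assignment $\nu'$ satisfies $\hc \cup (\sfc \setminus C')$ by construction of the $y$-variables. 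But $C' \subsetneq C$ being a correction subset contradicts the minimality of $C$ as an MCS. Hence no such dominating $z$ exists and $r \in Y_N$.

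Next I would show the map is injective and surjective. For injectivity, observe that by Lemma~\ref{lem:consistency} an MCS $C$ is completely determined by the thresholds $r_k = \max\{d \in D_k \mid y_{k,d} \in C\}$: part $(i)$ says $C$ contains exactly the variables $y_{k,e}$ with $e \leqslant r_k$ in each coordinate, so $C$ is recovered from $r$, giving injectivity. For surjectivity, take any nondominated point $z \in Y_N$ with $z = f(\nu')$ for some $\nu' \in X$, and form the candidate set $C = \{(y_{k,d}) \mid d \in D_k,\ d \leqslant z_k\}$ (equivalently $f_k(\nu') \geqslant d$). I would verify that $C$ is a correction subset witnessed by $\nu'$, and that it is minimal: removing any single $y_{k,d} \in C$ would require a feasible assignment with $f_k < d \leqslant z_k$ while keeping all other thresholds, and iterating the argument such a point would dominate $z$, contradicting $z \in Y_N$. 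This shows $z$ is the representative point of a genuine MCS, so the map is onto.

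The main obstacle I anticipate is the minimality verification in the surjectivity step, and more generally making precise the claim that the only way to drop a soft clause $y_{k,d}$ from the correction subset is to admit a solution whose $k$-th objective drops below $d$. This relies on completeness of $D_k$ together with the hard-clause semantics $y_{k,d} = 1 \iff f_k(x) < d$: unsatisfying the soft clause $(y_{k,d})$ forces $y_{k,d} = 0$, hence $f_k(x) \geqslant d$, and satisfying it forces $f_k(x) < d$. I would lean on Lemma~\ref{lem:consistency} to reduce the bookkeeping over all of $D_k$ to just tracking the single threshold per coordinate, which keeps the dominance/minimality equivalence clean. Care is also needed to confirm that $\ell_k, u^+_k \in D_k$ guarantee the extremal thresholds are always available so the $\max$ and $\min$ defining $r$ and $r'$ are never taken over empty sets.
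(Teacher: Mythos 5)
Your proposal is correct and follows essentially the same route as the paper: the paper proves that $d\in Y_N$ if and only if the canonical downward-closed set $C=\{y_{k,e}\:|\:e\in D_k,\,e\leqslant d_k\}$ is an MCS, which is exactly your representative-point map, with your well-definedness/nondominance step matching the paper's $\Leftarrow$ direction (a dominating point would yield a proper correction subset, contradicting minimality) and your surjectivity step matching its $\Rightarrow$ direction (completeness plus the $y_{k,d}$ semantics force any witness for a reduced set to dominate $d$). Your explicit injectivity argument via Lemma~\ref{lem:consistency} is left implicit in the paper but is the same observation that every MCS is determined by its per-coordinate thresholds.
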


\begin{proof}
Domains $D_k$ being complete, any point in $Y$ is of the form $(d_1,\ldots,d_k,\ldots, d_p)$, where $d_k \in D_k$, for all $k\in\{1,\ldots,p\}$. We prove that a point $d = (d_1,\ldots,d_k,\ldots, d_p)$ belongs to $Y_N$ if and only if $$C=\{y_{1,\ell_1},\ldots,y_{1,d_1},\ldots, y_{k,\ell_k},\ldots,y_{k,d_k},\ldots, y_{p,\ell_p},\ldots,y_{p,d_p}\}$$
is an MCS of the MaxSAT formulation. 

\medskip
Let $r = d$ and $r'$ be the representative and successor points of $C$, respectively.

\noindent $\Leftarrow:$ 
By Lemma~\ref{lem:bounds}, point $d$, which is the representative point of $C$, belongs to $Y$. Moreover, $d \in Y_N$, since the existence of a feasible solution $x'$ such that $f(x')\dom d$, would require $f_k(x') \leqslant d_k$ for all $k \in \{1,\ldots,p\}$ and $f_q(x') < d_q$ for some $q \in \{1,\ldots,p\}$, meaning that we could remove variable $y_{q,d_q}$ from $C$, contradicting that $C$ is an MCS.

\medskip
\noindent $\Rightarrow:$ 
Since $d$ is feasible, $C$ is a correction subset. Since $d$ is nondominated, we cannot remove from $C$ any variable $y_{q,d_q}$ for some $q \in \{1,\ldots,p\}$. By Lemma~\ref{lem:consistency}$(i)$, we cannot
remove any variable from $C$, which shows that $C$ is an MCS.
\end{proof}

As seen in Theorem~\ref{t:mcs-nondom}, the representative points of MCSs always correspond to nondominated points when $D_k$ is complete for all $k\in\{1,\ldots,p\}$.
A similar reasoning can be used to show that, in the more general case of $D_k$ defined as an
arbitrary subset of $\{\ell_k,\ell_{k+1}\ldots,u_k,u^+_k\}$ where $\ell_k,u^+_k\in D_k$,
the representative points of the MCSs are still mutually nondominated, though are not necessarily
feasible, and the correspondence between nondominated points and MCSs becomes one-to-many.
That is, an MCS is associated to each nondominated point, but more than one
nondominated points may be associated to an MCS (and some dominated points may be as well).
The next section explores this more general definition of $D_k$.

\subsection{Finding a $(1+\varepsilon)$-approximation set}
\label{s:approx}

\noindent Having variables $y_{k,d}$ represent, in an exact way, all possible values of the corresponding
objective function, allows the enumeration of all nondominated points through MCS enumeration.
Although it may be desirable to find the Pareto front, this may require large encodings, \ie,
a large set of hard and soft clauses, and the resulting formula may still be difficult to solve.
Alternatively, finding a $(1+\varepsilon)$-approximation set may be enough, and even helpful if used
as an intermediate step in the search for the Pareto front.
In this section we describe two encodings related to the one used
in Section~\ref{s:contr:pmcs}, but which are smaller and (potentially) easier to solve,
and for which enumerating all MCSs is equivalent to finding a $(1+\varepsilon)$-approximation set.
One of the encodings consists in considering a smaller set of soft clauses
(see Section~\ref{s:approx:int}), and in the other the variables $y_{k,d}$ encode an approximate
version of the objective function values (see Section~\ref{s:approx:coeff}).

\subsubsection{Interval-based approximation}
\label{s:approx:int}

\noindent We assume here that each objective function $f_k$ takes \emph{positive} integer values.
An approximate version of the method, with {\it a priori} guarantee, can be obtained by modifying
the domains $D_k$ of each objective function $f_k$. 
Assume for instance that we want to generate a $(1+\varepsilon)$-approximation $A_{\varepsilon}\subseteq X$,
with $\varepsilon \geqslant 0$.
In this case, we define $D_k$, for each $k\in\{1,\ldots,p\}$, as follows:
$$D_k = \{d_{k,1},\ldots,d_{k,u}\}$$
with $d_{k,1} = \ell_k$, $d_{k,i} = \max\{d_{k,i-1} +1,\lfloor (1+\varepsilon)\cdot d_{k,i-1}\rfloor\}$
for $i \in \{2,\ldots,u\}$ where $u$ is the smallest integer such that $d_{k,u} > u_k$. 
Each value $d_{k,i} \in D_k$, for $i\in \{1,\ldots,u-1\}$, defines the beginning of a new interval 
which is upper bounded by the successor value in $D_k$, \ie, $d_{k,i+1}$.
This definition of $D_k$ admits $\varepsilon=0$, in which case $D_k=\{\ell_k,\ldots,u_k+1\}$,  
which corresponds to the exact version.

Consider the following bi-objective Boolean optimization problem, which is illustrated
in Figure~\ref{fig:approx:int:eps0}:
\begin{equation}\label{prob:ex2}
\begin{array}{rl}
    \min f_1(x) &= 3\bar{x}_1 + \bar{x}_2 + 2\bar{x}_3 + x_4 + x_5 + 2\\
    \min f_2(x) &= x_1 + 2x_2 + 16x_3 + x_5 + 2\\
    
    \mbox{\rm{subject to}:}\\
    &x_1 + x_2 + x_3 + x_4 + x_5 \geqslant 3\\
    &x_1 + x_2 + x_3 + x_4 + x_5 \leqslant 4\\ 
    &x \in \Bmp
\end{array}
\end{equation}
\begin{figure}[t!]
        \centering
        \begin{subfigure}[b]{0.27\linewidth}
           \centering
            \includegraphics[width=\linewidth]{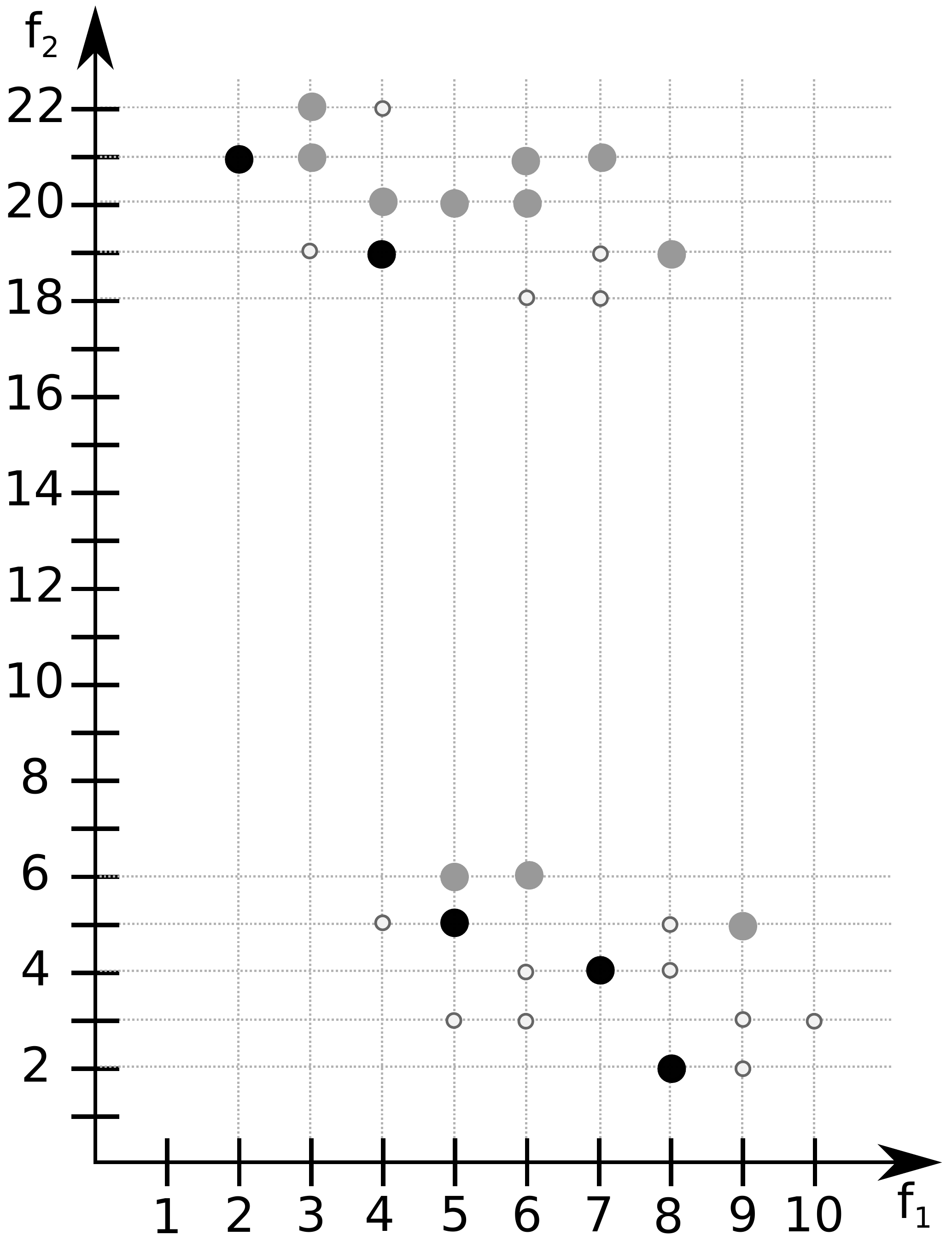}
            \caption[]{$\varepsilon=0$, $D_1=\{2,\ldots,10\}$, $D_2=\{2,\ldots,6,18,\ldots,22\}$}
            \label{fig:approx:int:eps0}    
        \end{subfigure}
     \hspace{1cm}
        \begin{subfigure}[b]{0.27\linewidth}
            \centering
            \includegraphics[width=\linewidth]{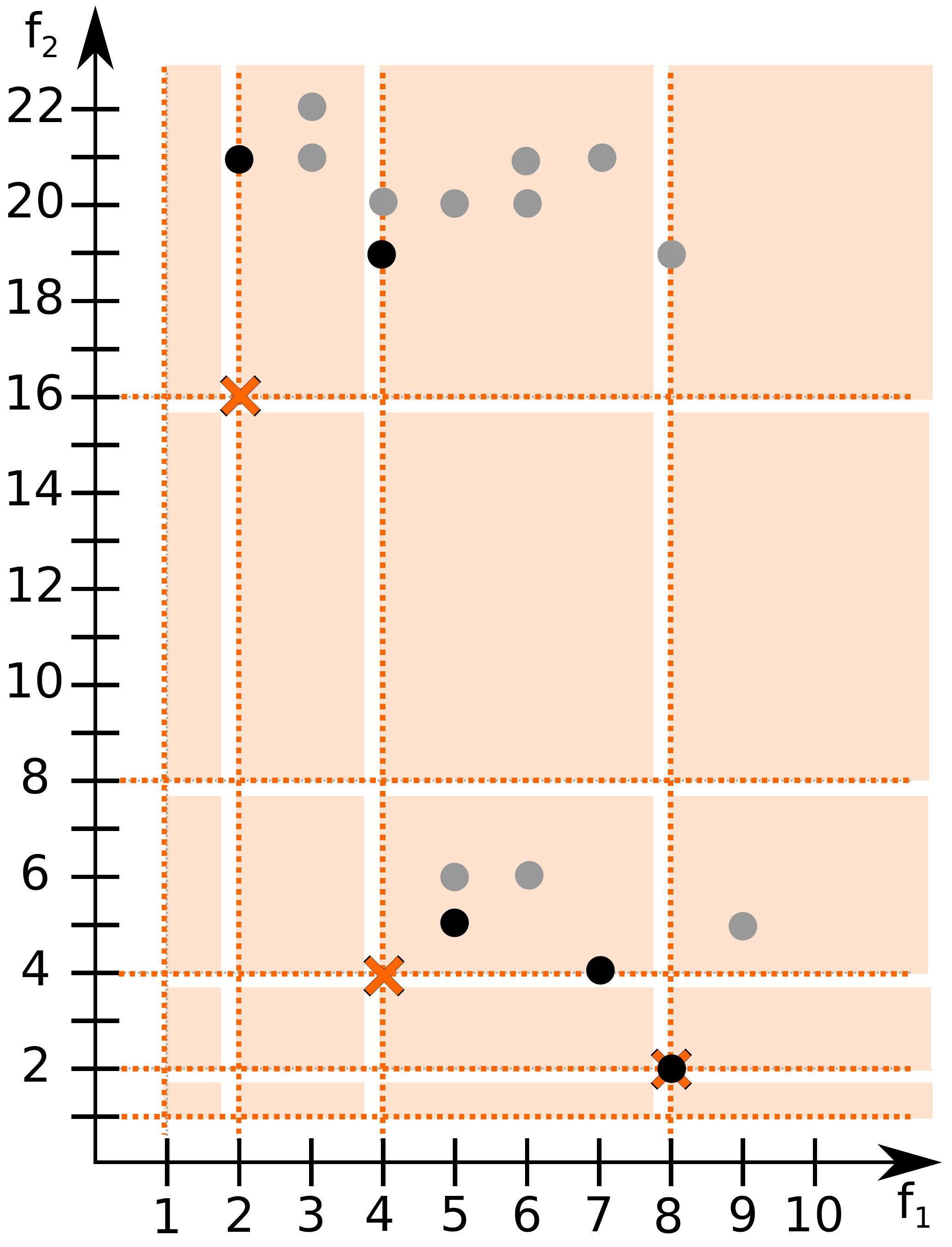}
            \caption[]{$\varepsilon=1$, $D_1=\{1,2,4,8,16\}$, $D_2=\{1,2,4,8,16,32\}$}
            \label{fig:approx:int:eps1}    
        \end{subfigure}
        \caption{An example of a bi-objective problem with different settings of
        $D_1$ and $D_2$, depending on $\varepsilon$. 
        Large and small circles represent the images of feasible and infeasible solutions, respectively.
        The vertical and horizontal lines represent the values in $D_1$ and
        $D_2$, respectively, and the shadowed regions illustrate
        the intervals associated to $D_1$ intersected with those associated to $D_2$.
        The crosses depict the representative points of the MCSs.
        }
        \label{fig:approx:int}
    \end{figure}
The figure shows the image $f(\nu)$ of all assignments $\nu\in\Bmp$, including infeasible ones
(small circles), \ie, that do not satisfy the two cardinality constraints. The nondominated points are coloured black. 
In this case, $f_1$ takes values between $\ell_1=2$ and $u_1=10$,
and $f_2$ takes values between $\ell_2=2$ and $u_2=22$.

In the examples for the particular case of $\varepsilon=0$, we will consider
$D_k=\{f_k(x)\:|\:x \in \Bmp\}\cup\{u_k+1\}$ which is complete.
In Figure~\ref{fig:approx:int:eps0}, these values of $D_k$ are highlighted with dashed lines.
For $\varepsilon=0.5$, we have $D_1=\{2,3,4,5,7,9,13\}$ and $D_2=\{2,3,4,5,7,9,12,16,24\}$. 
For $\varepsilon=1$, $D_1=\{2,4,8,16\}$ and $D_2=\{2,4,8,16,32\}$, 
as illustrated by the dashed lines in Figure~\ref{fig:approx:int:eps1}.
The values $16\in D_1$ and $32\in D_2$ were omitted for convenience.
Note that, in Figure~\ref{fig:approx:int:eps1}, there are empty intervals that
could be excluded from $D_1$ and $D_2$ while still ensuring the approximation ratio
of $2$. In the example, setting $D_1$ and $D_2$ to $\{2,4,8\}$ and $\{2,4,19\}$, respectively,
where $D_k\subseteq\{f_k(x) \:|\: x \in \Bmp\}$ for $k=1,2$, would ensure such approximation ratio.
However, as the image set of $f_k(x)$ may be much larger than $D_k$, computing $D_k$ in this way may not be
advantageous in general, and be advantageous only to particular instances with particular values
of $\varepsilon$.

For $\varepsilon=0$, each MCS $C$ corresponds to a single nondominated point,
which is its representative point.
This is not necessarily the case for $\varepsilon > 0$.
Firstly, although each nondominated point corresponds to a single MCS, to each MCS may correspond
multiple feasible points, even dominated ones, all of which are weakly dominated by the representative 
point of the MCS and are within a $(1+\varepsilon)$ ratio from it.
Secondly, the representative point of an MCS does not correspond, in general, to a feasible point.
%
Let $D=D_1\times\ldots\times D_p$, let
$r,r'\in D$ be such that $r'$ is the successor of $r$, and let the feasible point
$s\in Y$ be such that $r \wdom s \sdom r'$.
In Figure~\ref{fig:approx:int:eps1}, $r$ and $r'$ are
the lower and upper corners of a rectangular orange region, and $s$ is any point in that region.
Any such point $s$ is associated to a model where $y_{1,r_1}=0,\ldots,y_{p,r_p}=0$ and
$y_{1,r'_1}=1,\ldots,y_{p,r'_p}=1$. Therefore, $s$ is weakly dominated by $r$ and is
within a $(1+\varepsilon)$ ratio from $r$. If $r$ is a representative point of
an MCS $C$, then such feasible points $s$ are the only points corresponding to $C$.
In Figure~\ref{fig:approx:int:eps1}, the points $(2,21)$, $(3,21)$, and $(3,22)$ 
correspond to the MCS represented by point $(2,16)$ (orange cross), as their corresponding
models include $y_{1,2}=0$, $y_{2,16}=0$, $y_{1,4}=1$, and $y_{2,32}=1$.

Since the set of soft clauses, $\sfc$, only includes variables $y_{k,d}$ for which $d \in D_k$,
enumerating all MCSs and finding a solution associated to each one of them is equivalent to
finding a $(1+\varepsilon)$-approximation set.
Additionally, since each nondominated point $s\in Y_N$ is associated to an MCS whose representative point $r\in D$ weakly dominates $s$, the set of the representative points of
all MCSs is a lower bound set.
In the example of Figure~\ref{fig:approx:int:eps1}, the lower bound set is
$\{(2,16), (4,4), (8,2)\}$ which is represented by the set of crosses.
Hence, enumerating all MCSs not only provide an approximation set $A_\varepsilon\subseteq X$,
but also a lower bound set $\Lc\subset\osn$, 
for which $I_\varepsilon(f(A_\varepsilon),Y_N) \leqslant I_\varepsilon(f(A_\varepsilon),\Lc) \leqslant (1+\varepsilon)$.

\subsubsection{Coefficient-based approximation}
\label{s:approx:coeff}

\noindent An alternative approximate version, also with \emph{a priori} guarantee, can be obtained by modifying
the coefficients $w_1,\ldots,w_n$ of the objective function $f_k(x)=\sum_{j=1}^{n} w_j x_j$, 
where $k\in\{1,\ldots,p\}$, and $w\in\Nmp$. 
Consider $W_k$ defined as $W_k = \{\hat{w}_1,\ldots,\hat{w}_u\}$
with $\hat{w}_1 = \min\{w_1,\ldots,w_n\}$, and $\hat{w}_i = \max\{\hat{w}_{i-1} +1,\lfloor (1+\varepsilon)\cdot \hat{w}_{i-1}\rfloor\}$ for $i \in \{2,\ldots,u\}$ where $u$ is the smallest integer such that $\hat{w}_u \geqslant \max\{w_1,\ldots,w_n\}$.
Let $w'_1,\ldots,w'_n\in W_k$ denote the modified coefficients where $w'_j=\max\{\hat{w}_i\in W_k \:|\: \hat{w}_i \leqslant w_j\}$. 

\begin{lemma}\label{l:ap:coeffs}
Given the function $f_k(x)=\sum_{j=1}^{n} w_j x_j$ where $w\in\Nmp$ and $x\in \Bmp$,
let the function $f'_k:~\Bmp\to\oss$ be such that $f'_k(x)=\sum_{j=1}^{n}w'_jx_j$
where $w'_j=\max\{\hat{w}_i\in W_k \:|\: \hat{w}_i \leqslant w_j\}$. Then, for any $\varepsilon \geqslant 0$, we have:
\begin{align*}
f_k(x) \leqslant (1+\varepsilon)f'_k(x) \: \text{ for all } \: x\in\Bmp
\end{align*}
\end{lemma}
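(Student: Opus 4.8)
The plan is to prove the stronger \emph{coefficient-wise} inequality $w_j \leqslant (1+\varepsilon)\, w'_j$ for every $j \in \{1,\ldots,n\}$, and then sum. Indeed, once this holds, for any $x\in\Bmp$ we have $x_j\geqslant 0$, so $f_k(x)=\sum_{j=1}^n w_j x_j \leqslant \sum_{j=1}^n (1+\varepsilon)\, w'_j x_j = (1+\varepsilon)\sum_{j=1}^n w'_j x_j = (1+\varepsilon) f'_k(x)$, which is exactly the claimed bound. Thus the whole statement reduces to controlling each modified coefficient against its original.

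First I would check that $w'_j$ is well-defined and locate $w_j$ within the sequence $\hat{w}_1 < \hat{w}_2 < \cdots < \hat{w}_u$. Since $\hat{w}_1 = \min\{w_1,\ldots,w_n\} \leqslant w_j$, the set $\{\hat{w}_i\in W_k \:|\: \hat{w}_i \leqslant w_j\}$ is non-empty, so $w'_j = \hat{w}_i$ for a well-defined index $i$. If $i=u$, then $\hat{w}_u \leqslant w_j$ by definition of $w'_j$, while $\hat{w}_u \geqslant \max\{w_1,\ldots,w_n\}\geqslant w_j$ by the stopping rule; these force $w_j = \hat{w}_u = w'_j$, and the inequality holds trivially. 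Otherwise $i<u$, and because $\hat{w}_i$ is the largest element of $W_k$ not exceeding $w_j$, we have $\hat{w}_i \leqslant w_j < \hat{w}_{i+1}$.

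The heart of the argument is then a consecutive-gap bound, obtained from the recurrence $\hat{w}_{i+1} = \max\{\hat{w}_i + 1,\lfloor (1+\varepsilon)\hat{w}_i\rfloor\}$ together with the integrality of $w_j$. I would distinguish the two cases determining the maximum. If $\hat{w}_{i+1} = \hat{w}_i + 1$, then $w_j < \hat{w}_i + 1$ combined with integrality gives $w_j \leqslant \hat{w}_i = w'_j \leqslant (1+\varepsilon) w'_j$, using $\varepsilon \geqslant 0$. If instead $\hat{w}_{i+1} = \lfloor (1+\varepsilon)\hat{w}_i\rfloor$, then $w_j < \lfloor (1+\varepsilon)\hat{w}_i\rfloor \leqslant (1+\varepsilon)\hat{w}_i = (1+\varepsilon) w'_j$. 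In either case $w_j \leqslant (1+\varepsilon) w'_j$, which completes the coefficient-wise step and hence the lemma.

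The main obstacle, and really the only subtlety, is keeping the boundary and integrality bookkeeping straight: one must $(i)$ verify that $w'_j$ is defined for every $j$, which relies on $\hat{w}_1$ being the minimum coefficient; $(ii)$ handle the top index $i=u$ separately, where the stopping condition pins $w_j$ exactly to $\hat{w}_u$; and $(iii)$ exploit that $w_j$, $\hat{w}_i$, and $\hat{w}_{i+1}$ are integers, so that $w_j < \hat{w}_{i+1}$ upgrades to $w_j \leqslant \hat{w}_{i+1}-1$ where needed. None of these is deep, but ensuring each case lands on $(1+\varepsilon)\hat{w}_i$ rather than on the floor $\lfloor (1+\varepsilon)\hat{w}_i\rfloor$ is where the reasoning must be precise.
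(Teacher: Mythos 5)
Your proof is correct, and its mathematical core --- locating $w_j$ in the window $\hat{w}_i \leqslant w_j < \hat{w}_{i+1}$, splitting on which term attains the maximum in the recurrence defining $\hat{w}_{i+1}$, handling $i=u$ separately via the stopping rule, and using integrality to pin $w_j = \hat{w}_i$ in the $\hat{w}_i + 1$ branch --- is exactly the case analysis in the paper's proof. The difference is organizational: the paper argues by contradiction at the level of sums (it assumes some $x$ satisfies $f_k(x) \geqslant (1+\varepsilon) f'_k(x)$, deduces the existence of an index $j$ with $(1+\varepsilon) w'_j \leqslant w_j$, and refutes that coefficient-wise), whereas you prove the coefficient-wise bound $w_j \leqslant (1+\varepsilon) w'_j$ directly for every $j$ and then sum against the nonnegative $x_j$. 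Your direct formulation is in fact slightly more robust: the paper's passage from the failed sum inequality to a single violating coefficient implicitly requires a term with $x_j = 1$, and its announced ``slightly stronger'' strict-inequality version fails at $x = 0$, where both sides vanish --- subtleties that your non-strict, per-coefficient argument avoids entirely while establishing the same (and, in the floor branch, even a strict) bound.
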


\begin{proof}
The lemma is trivially true for $\varepsilon=0$, as $f(x)=f'(x)$. For $\varepsilon>0$, a slightly stronger version of the lemma, with a strict inequality, will be proved. Assume by contradiction that there exists $x\in\Bmp$ such that $f_k(x) \geqslant (1+\varepsilon)f'_k(x)$, \ie,
$\sum_{j=1}^{n} w_j x_j \geqslant \sum_{j=1}^{n} (1+\varepsilon) w'_j x_j$.
Then, there must exist an index $j\in\{1,\ldots,n\}$ such that
\begin{equation}
(1+\varepsilon) w'_j \leqslant w_j
\label{eq:modified_weights}
\end{equation}
Modified weight $w'_j$ corresponds to a weight $\hat{w}_{i}$ where either $i=u$ or $i<u$. If $i=u$ then $w'_j = w_j (= \hat{w}_u)$ in which case \eqref{eq:modified_weights} is not possible, leading to a contradiction. If $i<u$ then we have $\hat{w}_{i} = w'_j \leqslant w_j < \hat{w}_{i+1}$. Then, by definition, $\hat{w}_{i+1}$ is either $\lfloor(1+\varepsilon) \hat{w}_i\rfloor$ or $\hat{w}_i+1$. In the first case we get $w_j < \lfloor(1+\varepsilon) \hat{w}_i\rfloor = \lfloor(1+\varepsilon) w'_j \rfloor \leqslant (1+\varepsilon) w'_j$ contradicting~(\ref{eq:modified_weights}). The second case implies that $w_j=\hat{w}_i=w'_j$ because $w_j$ must be an integer such that $\hat{w}_i\leqslant w_j <  \hat{w}_{i+1} =\hat{w}_i+1$ also leading to a contradiction, which completes the proof.
\end{proof}

Consider, for example, the objective function $f_1(x)=2x_1+3x_2+5x_3+7x_4$, and let $\varepsilon=1$,
then we have $W_1 = \{2, 4, 8\}$ leading to $f'_1(x)=2x_1+2x_2+4x_3+4x_4$, and any solution
$x\in X$ satisfies $f_1(x)\leqslant 2f'_1(x)$.

\medskip
Let us now prove that a $(1+\varepsilon)$-approximation of Problem~\ref{prob:MOBP}
can be obtained by finding the set of efficient solutions for $f'=(f'_1,\ldots,f'_p)$ where functions $f'_k$ are defined as in Lemma~\ref{l:ap:coeffs} for $k = 1,\ldots,p$.

\begin{theorem}\label{t:ap:coeffs}
Let $\Aeps\subseteq X$ be the set of efficient solutions for $f'=(f'_1,\ldots,f'_p)$ where functions $f'_k$ are defined as in Lemma~\ref{l:ap:coeffs} for $k = 1,\ldots,p$.
Then, $\Aeps$ is a $(1+\varepsilon)$-approximation of Problem~\ref{prob:MOBP}.
\end{theorem}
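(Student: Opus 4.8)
\section*{Proof proposal}

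The plan is to reduce the claim directly to Lemma~\ref{l:ap:coeffs}, combined with two order relations between $f$ and $f'$ and the external stability of the efficient set. First I would record the two monotonicity facts relating the original and modified objectives. From the definition $w'_j = \max\{\hat{w}_i \in W_k \mid \hat{w}_i \leqslant w_j\}$ we have $w'_j \leqslant w_j$ for every $j$, so, since all variables are nonnegative, $f'_k(x) \leqslant f_k(x)$ for all $x \in \Bmp$ and all $k \in \{1,\ldots,p\}$. In the opposite direction, Lemma~\ref{l:ap:coeffs} provides $f_k(x) \leqslant (1+\varepsilon) f'_k(x)$. These are the only two properties of the coefficient modification I will use.

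Next I would invoke external stability of the efficient set of the modified problem. Since $X \subseteq \Bmp$ is finite and nonempty, for any feasible $x' \in X$ there exists an efficient solution $x \in \Aeps$ (with respect to $f'$) such that $f'(x) \wdom f'(x')$, that is, $f'_j(x) \leqslant f'_j(x')$ for all $j \in \{1,\ldots,p\}$. Indeed, if $x'$ is itself efficient for $f'$ we take $x = x'$; otherwise, among the finitely many feasible points weakly dominating $x'$ with respect to $f'$, a minimal one (in the $f'$-domination order) exists precisely because $X$ is finite, and any such minimal point is efficient and weakly dominates $x'$.

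Finally I would chain the three inequalities. Fix an arbitrary $x' \in X$ and choose $x \in \Aeps$ as above. Then, for every $j \in \{1,\ldots,p\}$,
$$
f_j(x) \leqslant (1+\varepsilon)\, f'_j(x) \leqslant (1+\varepsilon)\, f'_j(x') \leqslant (1+\varepsilon)\, f_j(x'),
$$
where the first step is Lemma~\ref{l:ap:coeffs}, the second uses $f'_j(x) \leqslant f'_j(x')$ together with $1+\varepsilon \geqslant 1 > 0$, and the third uses the pointwise bound $f'_j(x') \leqslant f_j(x')$. This is exactly the defining inequality of a $(1+\varepsilon)$-approximation, so $\Aeps$ is one.

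The calculations here are routine; the only step deserving care is external stability. I expect the main (if modest) obstacle to be stating this property cleanly: one must argue that the efficient set of the \emph{modified} problem weakly dominates \emph{every} feasible point of the original problem, and this relies on $X$ being finite. It is worth making this explicit, because without it the chaining argument would merely relate a feasible point to some point it already dominates, rather than guaranteeing that each feasible point $x'$ is covered by a member of $\Aeps$.
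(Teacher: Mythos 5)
Your proposal is correct and follows essentially the same route as the paper: the same three ingredients (Lemma~\ref{l:ap:coeffs}, the pointwise bound $f'_k(x) \leqslant f_k(x)$ from $w'_j \leqslant w_j$, and the weak-domination guarantee of the efficient set for $f'$) chained into the identical inequality $f(x) \wdom (1+\varepsilon)f'(x) \wdom (1+\varepsilon)f'(x') \wdom (1+\varepsilon)f(x')$. The only difference is that you explicitly justify external stability of the efficient set via finiteness of $X$, a point the paper leaves implicit in the phrase ``since $\Aeps$ is the set of efficient solutions for $f'$''; this is a reasonable bit of added care but does not change the argument.
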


\begin{proof}
Since  $\Aeps$ is the set of efficient solutions for $f'$, we have that for any $x \in X$ there exists $x' \in \Aeps$ such that $f'(x') \wdom f'(x)$ and thus such that $(1+\varepsilon) f'(x') \wdom (1+\varepsilon) f'(x)$.\\
Then, by Lemma~\ref{l:ap:coeffs} applied to $x'$, we have $f(x') \wdom (1+\varepsilon) f'(x')$. Finally, by definition of $f$ and $f'$, which ensures that for any $x \in X$ we have $f'(x) \wdom f(x)$, we have $(1+\varepsilon) f'(x) \wdom (1+\varepsilon) f(x)$.\\
It follows that for any $x \in X$ there exists $x' \in \Aeps$ such that :
$$f(x') \wdom (1+\varepsilon) f'(x') \wdom (1+\varepsilon) f'(x) \wdom (1+\varepsilon) f(x)$$
establishing that $\Aeps$ is a $(1+\varepsilon)$-approximation for $f$.
\end{proof}

Let $y_{k,d}$ be defined for all $d\in D'_k$ and $k=1,\ldots,p$, where $D'_k$ is complete for $f'_k$,
\ie, $f'_k(X)\subseteq D'_k$.
Then, by Theorems~\ref{t:mcs-nondom} and~\ref{t:ap:coeffs}, the Pareto front for the approximate
problem can obtained by enumerating all MCSs given $\sfc=\{y_{k,d} \:|\: d \in D'_k \wedge 1\leqslant k \leqslant p\}$,
which corresponds to a $(1+\varepsilon)$-approximation set, $A_\varepsilon$, for the original problem.
Finally, note that $f'(\Aeps)$ is a lower bound set of Problem~\ref{prob:MOBP}.

\section{Computing $(1+\varepsilon)$-approximation sets and tighter approximation ratios}
\label{s:algs}

\noindent This section proposes the algorithms to solve MOBO problems through the corresponding MaxSAT 
formulation, and based on the two approximation versions of the unary encodings of the objective
function values (see Section~\ref{s:contr}).
Such algorithms are guaranteed to find a $(1+\varepsilon)$-approximation set, for any setting
of $\varepsilon \geqslant 0$.
The base algorithm, which is independent from the approximation version, is described in Section~\ref{s:algs:ap}, and then, exact algorithms based on iterative calls to the base
algorithm are proposed in Section~\ref{s:algs:reap}. Section 4.3 discusses the advantages and
disadvantages of each of the two approximation versions.

\subsection{$(1+\varepsilon)$-approximation and lower bound sets}
\label{s:algs:ap}

\noindent The previous section detailed the ideas of how to encode the values of
the objective functions of an optimization problem in such a way that any MCS enumeration algorithm
can be straightforwardly used to find a $(1+\varepsilon)$-approximation set, $\Aeps\subseteq X$,
where $\varepsilon \geqslant 0$.
This includes the case where $\varepsilon=0$, for which the approximation set is the Pareto front.
The algorithm does not have to be aware of the approximation ratio and not even which of the two
approximation methods is used. An advantage of these approximation methods, and consequently of
the algorithm, is to intrinsically provide a lower bound set, $\Lc\subset\osn$, within a ratio
lower than or equal to $(1+\varepsilon)$.
Hence, in practice, the algorithm ensures an approximation ratio tighter than $(1+\varepsilon)$,
which is upper bounded by $I_\epsilon(f(\Aeps),\Lc)$, \ie,
$I_\epsilon(f(\Aeps),Y_N)\leqslant I_\epsilon(f(\Aeps),\Lc)\leqslant (1+\varepsilon)$.

This section describes the MCS enumeration algorithm and gives examples of its application to
three scenarios: 
1) with $\varepsilon=0$;
2) with $\varepsilon > 0$ using the interval-based approximation; and
3) with $\varepsilon > 0$ using the coefficient-based approximation.

\subsubsection{Algorithm}

\noindent Algorithm~\ref{alg:approx}, named \mcskp, shows the pseudocode of the algorithm based on MCS enumeration to compute a $(1+\varepsilon)$-approximation set $A_\varepsilon$ (stored in $\Ac$), as well as a lower bound set $\Lc$.
In this case, $\Ac\subseteq X \times f(X)$ is a set of pairs, where each 
element $(\nu,f(\nu))\in\Ac$ represents a feasible solution $\nu$ and the corresponding image under $f$, $f(\nu)$.
For simplicity, $\Ac$ will be loosely referred as a $(1+\varepsilon)$-approximation set.
The algorithm works for any of the two approximation versions presented in the previous section. If $\varepsilon=0$, then Algorithm~\ref{alg:approx} computes the Pareto front, and in a such case $\Lc$ also corresponds to the Pareto front.

The input of Algorithm~\ref{alg:approx} is the following.
Let $f$ be the vector of objective functions and $f'$ be the modified vector as defined
in Lemma~\ref{l:ap:coeffs} if the coefficient-based approximation is used, otherwise $f'=f$. 
We assume that $f'_k$ is lower and
upper bounded by $l'_k$ and $u'_k$, respectively, for all $k\in\{1,\ldots,p\}$.
Let $\hc$ be the set of hard clauses that encode the constraints of Problem~\ref{prob:MOBP}
and the approximate objective functions $f'_k$ (as described in Section~\ref{sec:mcs-ur})
for all $k\in\{1,\ldots,p\}$. The set $D_k$ denotes the domain of $f'_k$.
In the case of the interval-based approximation (see Section~\ref{s:approx:int}), $D_k$
depends on $\varepsilon$, the greater $\varepsilon$ is, the smaller is $|D_k|$.
For the coefficient-based approximation (see Section~\ref{s:approx:coeff}),
$D_k$ is complete with respect to $f'_k$,
\eg, $D_k=\{l'_k,\ldots,u'_k+1\}$.

First, Algorithm~\ref{alg:approx} verifies if $\hc$ is satisfiable. If so,
the set of soft clauses $\sfc$ is created by adding a unary clause, $(y_{k,d})$,
for each of the values $d\in D_k$ for all $k=1,\ldots,p$ (line~\ref{alg:ap:fs}).
Then, while there are MCSs to find, the algorithm iteratively: i) finds an MCS $C$ and a corresponding (optimal) assignment $\nu$
(line~\ref{alg:ap:mcs}); ii) updates the set of solutions found by adding the assignment $\nu$
and the corresponding image in objective space $f(\nu)$ to $\Ac$ (line~\ref{alg:ap:pt}); iii)
updates the lower bound set by adding the representative point $r$ of $C$ (line~\ref{alg:ap:lbs});
iv) blocks the MCS $C$ (line~\ref{alg:ap:block}).
This blocking clause ensures that the next MCS found is such that there is a $q\in\{1,\ldots,p\}$
such that $y_{q,r_q}=1$. Hence, it prevents the algorithm from finding a solution weakly dominated by $r$
and, therefore, prevents it from visiting the MCS $C$ again.
Note that, typically, all literals in $C$ would have been added to the
clause, but since $y_{k,i}=1$ implies $y_{k,j}=1$ for $i<j$ then the remaining literals are redundant.

\DontPrintSemicolon
\SetKwFunction{SAT}{SAT}
\SetKwFunction{nondom}{NonDominated}
\SetKwFunction{MCS}{MCS}
\SetKwFunction{satf}{SAT}
\SetKwData{unsat}{UNSAT}
\SetKwData{sat}{SAT}
\SetKwData{true}{true}
\SetKwData{st}{st}

\SetKwFunction{BlockClause}{BlockingClause}

\SetVlineSkip{1pt}

\begin{algorithm}[!t]
  \small
  \KwIn{$f$, $f'$, $\hc$, $D$}
  \KwOut{A $(1+\varepsilon)$-approximation set $\Ac$ and a lower bound set $\Lc$}
  
  \If{\satf{$\hc$}}{
    $\sfc \gets \{y_{k,d} \:|\: d \in D_k \wedge 1\leqslant k \leqslant p\}$        \label{alg:ap:fs}\;
    $\Ac \gets \Lc \gets \{\}$\;
    $\st \gets \true$\;
    \While{$(\st = \true)$}{                                                        \label{alg:ap:beginiter} 
        $(\st, \nu, C) \gets$ \MCS{$\hc, \sfc$}                                     \label{alg:ap:mcs}
            \tcp*[r]{MCS $C$ and a corresponding assignment $\nu$ (if $st$ is $\true$)}
        \If{$(\st = \true)$}{
        $\Ac \gets \Ac \cup \{(\nu, f(\nu)\}$\tcp*[r]{$f(\nu)$ is a feasible point of Problem~\ref{prob:MOBP}}\label{alg:ap:pt}
            \For{$(k=1)$ \KwTo $(p)$}{                                              \label{alg:ap:forlb:i}
                $r_k \gets \max \{d \in D_k \:|\: y_{k,d} \in C\}$
                \tcp*[r]{$r$ is the representative point of the MCS $C$}            \label{alg:ap:forlb:e}
            }
        $\Lc \gets \Lc \cup \{(r_1,\ldots,r_p)\}$\tcp*[r]{\footnotesize $(r_1,\ldots,r_p)\wdom f'(\nu)\wdom f(\nu)$ holds} \label{alg:ap:lbs}

            $\cl \gets y_{1,r_{1}} \vee \ldots \vee y_{p,r_{p}}$                    \label{alg:ap:clause}\;
        $\hc \gets \hc \cup \{\cl\}$ \label{alg:ap:block} \tcp*[r]{\footnotesize Block the MCS (\ie, the region weakly dominated by $(r_1,\ldots,r_p)$)}
        }
    }
  }
  \Return{$(\Ac,\Lc)$}\tcp*[r]{\footnotesize Returns the approximation set and the lower bound set}
  \caption{\mcskp algorithm: Find a $(1+\varepsilon)$-approximation set ($\varepsilon\geqslant 0$),
  where $\hc$ encodes the problem constraints and the unary representation of the approximate
  objective functions, $f'=(f'_1,\ldots,f'_p)$, whose domains are given in $D=(D_1,\ldots,D_p)$.}\label{alg:approx}
\end{algorithm}

\subsubsection{Example}

\noindent Figure~\ref{fig:ex1} shows examples of approximation sets returned by Algorithm~\ref{alg:approx},
depending on the algorithms settings, \ie, the chosen approximation ratio and the approximation method, for the following problem:
    \begin{equation}\label{prob:ex1}
    \left\{\begin{array}{l}
        \min f_1(x) = 3x_1 + 3x_2 + x_3 + 2x_4 + 1\\
        \min f_2(x) = 4\bar{x_1} + 5\bar{x_2} + 5\bar{x_3} + 7\bar{x_4} + 1\\
        \mbox{subject to} \;\, x \in \Bmp
    \end{array}\right.
    \end{equation}

For the sake of simplicity, this unconstrained problem is used as example. Nevertheless,
the reasoning used next is the same for constrained problems since the MCS algorithm ensures
that only feasible solutions are computed.

Figure~\ref{fig:ex1:exact} shows all feasible points and, in black, the nondominated points which are
all returned by the exact version.
In such a case, the algorithm input is $\varepsilon=0$, $f=f'$, and $D_1=\{1,\ldots,11\}$
and $D_2=\{1,5,6,8,10,11,12,13,15,17,18,22,23\}$ 
which are complete, \ie, $f_1(X)\subset D_1$ and $f_2(X)\subset D_2$. 
Algorithm~\ref{alg:approx} finds a new nondominated point per iteration.
For example, in one of the iterations, line~\ref{alg:ap:mcs} returns the
MCS $C=\{y_{1,1},\ldots,y_{1,4},y_{2,1},\ldots,y_{2,10}\}$, 
the (only) corresponding model $\nu=(0,0,1,1)$
and $st=\text{``true"}$.
The latter indicates that a new MCS was found ($\hc$ is still satisfiable).
The corresponding assignment $\nu$ and $f(\nu)=(4,10)$
are added to $\Ac$ in line~\ref{alg:ap:pt}.
The representative point of $C$, which in this case is $(4,10)$, is computed and added to the
lower bound set in lines~\ref{alg:ap:forlb:i}-\ref{alg:ap:lbs}.
To prevent the algorithm from returning any other solution weakly dominated by $r$,
lines~\ref{alg:ap:clause}-\ref{alg:ap:block} block the MCS $C$ by constructing and adding the clause 
$(y_{1,4} \vee y_{2,10})$ to the current set of hard clauses, $\hc$. Hence,
any subsequent solution found must be strictly lower than $4$ in objective 1
or strictly lower than $10$ in objective 2.
The algorithm stops when $st=\text{``false"}$ is returned in line~\ref{alg:ap:mcs} which indicates that
there are no more MCSs and, consequently, no more feasible points to be found.
In the example, this will happen only after all 6 nondominated points are found:
$Y_N=\{(1,22), (2,17), (3,15), (4,10), (7,5), (10,1)\}$.

\begin{figure}[t]
        \centering
        \begin{subfigure}[b]{0.27\linewidth}
            \centering
            \includegraphics[width=\linewidth]{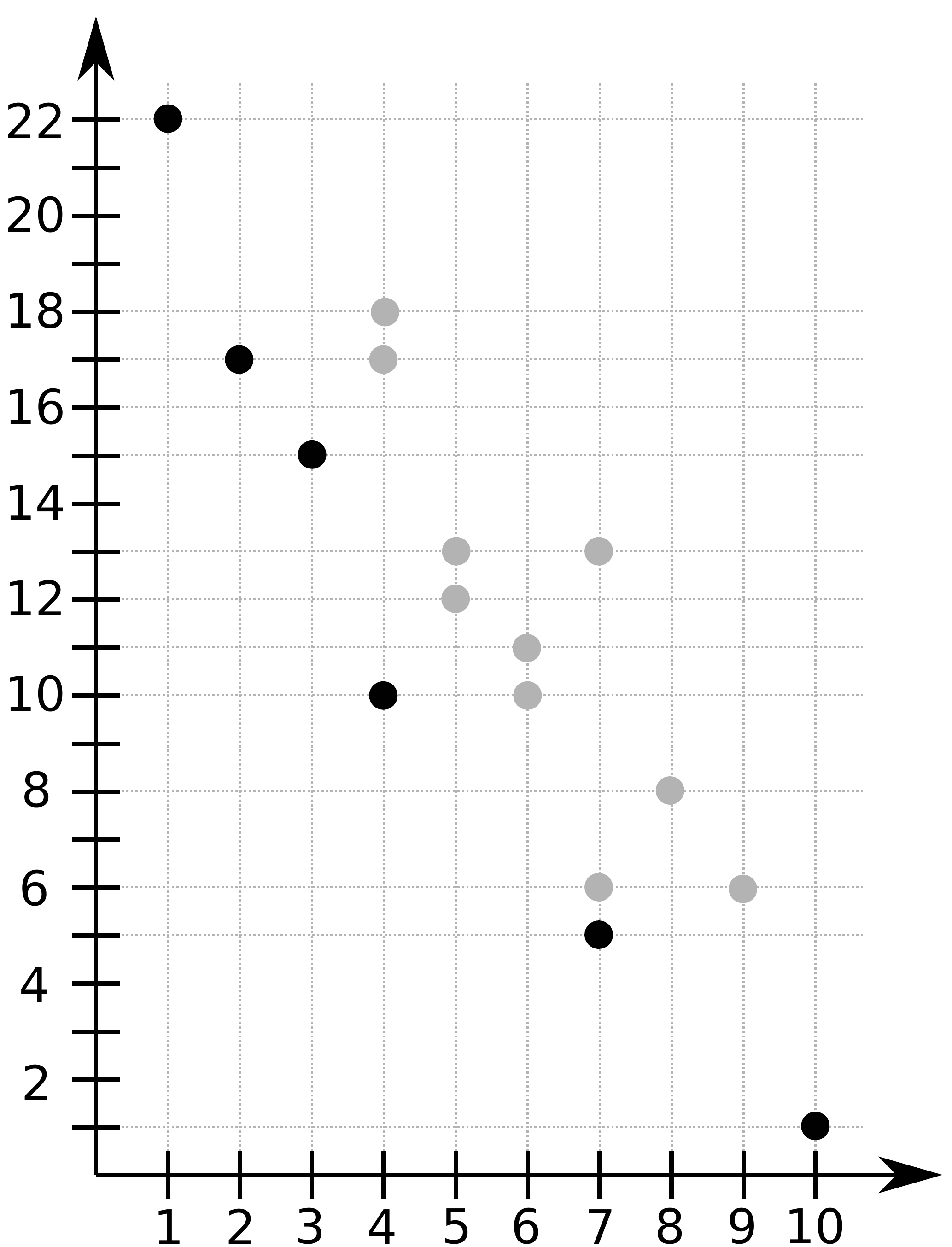}
            \caption[]{$\varepsilon=0$, exact version}
            \label{fig:ex1:exact}    
        \end{subfigure} %
        \begin{subfigure}[b]{0.27\linewidth}  
            \centering 
            \includegraphics[width=\linewidth]{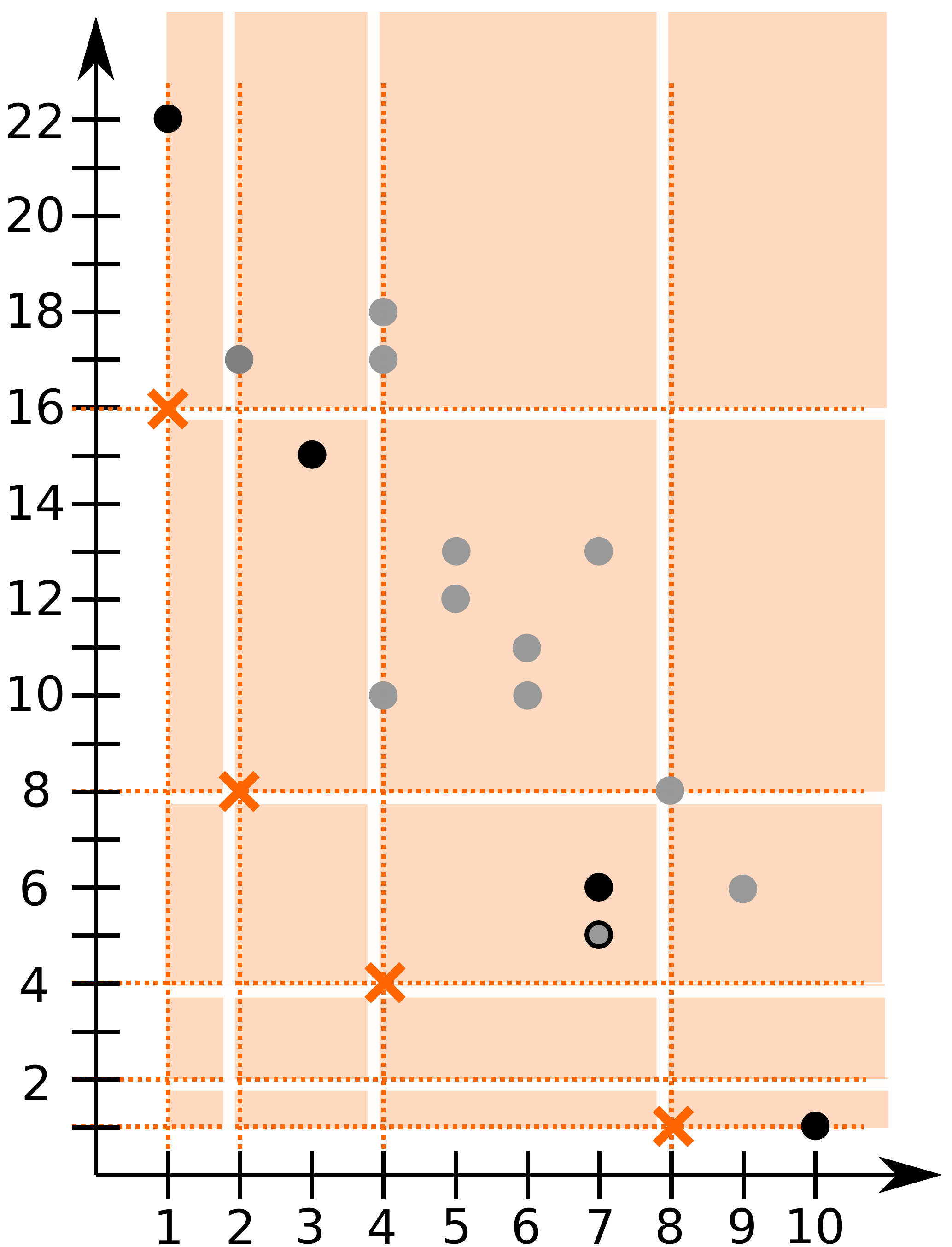}
            \caption[]{$\varepsilon=1$, interval-based}
            \label{fig:ex1:approx1}    
        \end{subfigure}
        \begin{subfigure}[b]{0.27\linewidth}  
            \centering 
            \includegraphics[width=\linewidth]{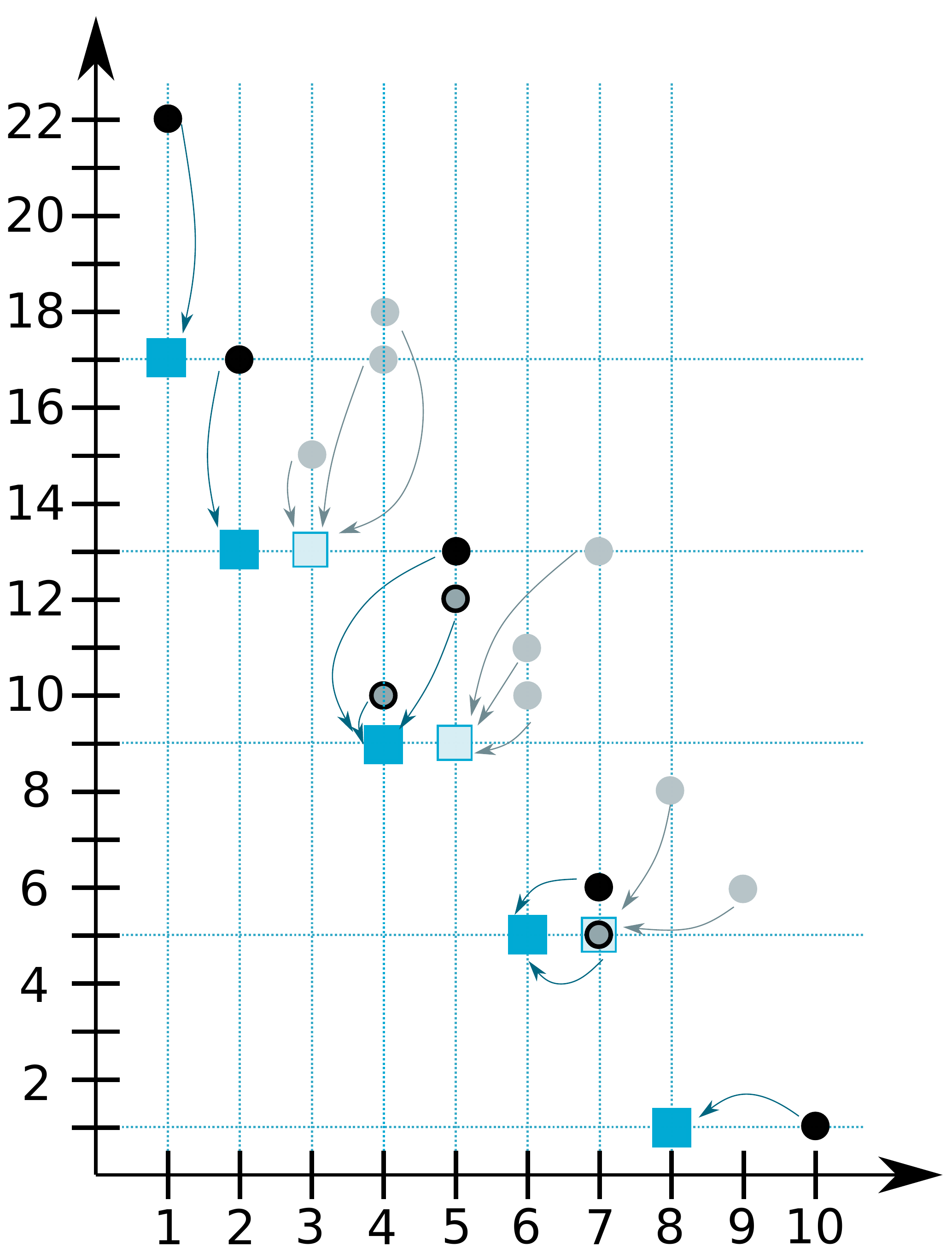}
            \caption[]{$\varepsilon=1$, coefficient-based}
            \label{fig:ex1:approx2}    
        \end{subfigure}
        \caption{Example of the approximation sets returned by Algorithm~\ref{alg:approx} for the problem
        in~\eqref{prob:ex1}, for different settings of $\varepsilon$ and different approximation methods.
        }
        \label{fig:ex1}
    \end{figure}

Figure~\ref{fig:ex1:approx1} shows an example of a $2$-approximation set (black dots),
and the corresponding lower bound set (orange crosses) returned by
Algorithm~\ref{alg:approx} considering the interval-based approximation.
The gray dots with black outline represent other points that could have also been returned.
In such a case, the algorithm input is $\varepsilon=1$, $f=f'$,
and $D_1=\{1,2,4,8,16\}$, $D_2=\{1,2,4,8,16,32\}$. 
The shaded regions show the intersection of the intervals associated to $D_1$ and $D_2$,
which contain all feasible points.
In this case, as there are only four MCSs, Algorithm~\ref{alg:approx} executes four iterations, and in
each one it finds a solution (which may not be an efficient solution) within the $(1+\varepsilon)$ ratio,
and a point of the lower bound set that dominates it. For example, when the
MCS $C=\{y_{1,0},y_{1,2},y_{1,4},y_{2,0},y_{2,2},y_{2,4}\}$
is found, only the solutions mapping to either point
$(7,5)$ or point $(7,6)$
correspond to $C$. In the example, we assume that the solution mapping to the latter point is the
one returned in line~\ref{alg:ap:mcs}. Then, the point added to the lower bound set is $(4,4)$ 
because this is the representative point of $C$, which is computed in
lines~\ref{alg:ap:forlb:i}-\ref{alg:ap:forlb:e}.
By blocking the MCS $C$ in line~\ref{alg:ap:block} with the clause $(y_{1,4} \vee y_{2,4})$, solutions
mapping to a point weakly dominated by $(4,4)$ become infeasible under $\hc$.
In the example, the algorithm returns in $\Ac$ the approximation set $A_\varepsilon$
and the corresponding mapping to $f(A_\varepsilon)=\{(1,22),(3,15),(7,6),(10,1)\}$, and the
lower bound set $\Lc=\{(1,16),(2,8),(4,4),(8,1)\}$, which provide \emph{a posteriori}
guaranteed approximation ratio of $I_\epsilon(A_\varepsilon,\Lc)=1.875\geqslant I_\epsilon(A_\varepsilon,Y_N)=1.5$.

Figure~\ref{fig:ex1:approx2} shows an example of a $2$-approximation set (black dots) returned by
Algorithm~\ref{alg:approx} considering the coefficient-based approximation.
The gray dots with black outline also represent other points that could have also been returned.
In this approximation version with $\varepsilon=1$, the approximate objective functions are:
$f'_1(x) = 2x_1 + 2x_2 + 1x_3 + 2x_4 + 1$ and
$f'_2(x) = 4\bar{x_1} + 4\bar{x_2} + 4\bar{x_3} + 4\bar{x_4} + 1$.

In this case, $D_1=\{1,\ldots,9\}$, $D_2=\{1,5,9,13,17,18\}$ are complete with respect to $f'_1$ and $f'_2$,
respectively (instead of $f_1$ and $f_2$).
In Figure~\ref{fig:ex1:approx2}, the arrows represent, for each
solution $x\in\Bmp$, the mapping of $f(x)$ to $f'(x)$, and the blue squares represent all
feasible points of $f'$ whereas the darkest ones represent the nondominated points under $f'$.
For each of these dark blue squares, \ie, for each nondominated point under $f'$, a solution $\nu$
mapping into it and the corresponding evaluation under $f$ (the original objective functions) are
added to $\Ac$. 
The representative point, which in this case must be $f'(\nu)$, is added to the lower bound set.
Blocking the MCS in this case is equivalent to blocking the region weakly dominated by $f'(\nu)$.
In the example, the algorithm returns in $\Ac$ the approximation set $A_\varepsilon$
and the corresponding mapping to $f(A_\varepsilon)=\{(1,22),(2,17),(5,13),(7,6),(10,1)\}$,
and the lower bound set $\Lc=\{(1,17),(2,13),(4,9),(6,5),(8,1)\}$, \ie, the set of black points, and
the set of dark-blue squares, respectively.
In this case, the \emph{a posteriori}
guaranteed approximation ratio is $I_\epsilon(A_\varepsilon,\Lc)\approx1.4\geqslant I_\epsilon(A_\varepsilon,Y_N)=1.3$.

\subsubsection{Enumerating the efficient set}
\noindent Although Algorithm~\ref{alg:approx} only finds one solution for each feasible point returned in
$\Ac$, the algorithm could be adapted to find all solutions mapping to each such
point (assuming $\varepsilon=0$).
This is achieved by replacing the blocking clause in line~\ref{alg:ap:clause}
by $p$ clauses blocking, in the objective space, the region dominated
by the representative point $r$ of the last MCS found, and a clause blocking, in the decision space,
the assignment $\nu$.
Thus, the representative point $\hat{r}$ of the next MCS found cannot be dominated by $r$
(note that $\hat{r}=r$ is allowed) and the new assignment must be a newly one. Hence, all assignments associated to each MCS would be enumerated with this modified version.
For $\varepsilon=0$, this results in enumerating the whole efficient set.
For $\varepsilon>0$, at least one efficient solution would be enumerated per
MCS but non-efficient solutions could be enumerated as well, which makes this modified version of
lesser value for $\varepsilon>0$.

\subsection{Enumerating all nondominated points through re-approximations}
\label{s:algs:reap}

\noindent The value of $\varepsilon$ has impact on the size of the encoding of the objective functions
and on the search process, that depends on which of the two described approximation methods is used.
In the interval-based approximation, $|D_k|$ represents the number of intervals into which the
range of objective values $\{\ell_k,\ldots,u_k\}$ is split, for $k\in\{1,\ldots,p\}$.
The larger $\varepsilon$ is, the smaller is $|D_k|$, whereas $|D_k|$ tends to $u_k-\ell_k+1$ as
$\varepsilon$ tends to zero. Hence, setting a variable $y_{k,d}$ to 1
allows to skip larger regions of the objective space when $\varepsilon$ is large, than when it is small. 
In the coefficient-based approximation, the coefficients of the approximate
function $f'_k$ tend to 1 and the upper limit $u'_k$ tends to $n$, as $\varepsilon$ grows.
Consequently, the size of the encoding of the objective functions,
and consequently the computational burden thereof, is expected to decrease when increasing $\varepsilon$.

This section describes two algorithms taking advantage of the above observations,
one using coefficient-based approximations (Section~\ref{s:algs:enum:coeff}) and another using
interval-based approximations (Section~\ref{s:algs:enum:int}).
The main idea is to start with a large $\varepsilon$, then to iteratively use Algorithm~\ref{alg:approx}
to find a $(1+\varepsilon)$-approximation for progressively smaller values of $\varepsilon$,
until $\varepsilon$ is small enough.
The approximation ratio of the approximation set is improved from iteration to iteration.
If $\varepsilon=0$ in the last iteration, then an optimal set is returned.

\subsubsection{Coefficient-based re-approximations}
\label{s:algs:enum:coeff}

\DontPrintSemicolon

\SetKwFunction{iapprox}{intervalApproximation}
\SetKwFunction{capprox}{CoeffApprox}
\SetKwFunction{approx}{Approximate}
\SetKwFunction{computeD}{ComputeD}
\SetKwFunction{updateeps}{UpdateRatio}
\SetKwFunction{encodelt}{EncodeLT}
\SetKwFunction{encodeof}{EncodeObjFunction}
\SetKwFunction{satf}{SAT}
\SetKwFunction{enumalg}{\mcskp}
\SetKwFunction{stopc}{StopCriterion}
\SetKwData{stop}{stop}
\SetKwData{break}{break}
\SetKwData{false}{false}
\SetKwData{true}{true}
\SetVlineSkip{1pt}

\begin{algorithm}[!t]
  \small
  \KwIn{$f$, $\phi$, $\varepsilon$}
  \KwOut{An optimal set $\Ac$}
  
  $\Ac \gets \{\}$\;

  \Repeat{$f=f' \textbf{ or}$ \stopc{$\varepsilon$}}{ \label{alg:reap:coeff:r}
        $\hc \gets \phi$\;
        $\Lc \gets \{\}$\;
        \For{$(k=1)$ \KwTo $(p)$}{                                      \label{alg:reap:coeff:f:i}
            $f_k' \gets$ \capprox{$f_k, 1+\varepsilon$}                 \label{alg:reap:coeff:ap}
                \tcp*[r]{\footnotesize Update approximation of $k$-th objective function}
                
            $\phi' \gets$ \encodeof($f_k'$)\;                           \label{alg:reap:coeff:of}
            $\hc \gets \hc \cup \phi'$\;                                \label{alg:reap:coeff:uenc}

            $D_k \gets$ \computeD{$f'_k, 1$}                            \label{alg:reap:coeff:D}
                \tcp*[r]{\footnotesize Compute $D_k$ for $f'_k$}
            \For{$d \in D_k$}{                                          \label{alg:reap:coeff:fy}
                $(y_{k,d}, \phi'') \gets$ \encodelt{$k,d$}              \label{alg:reap:coeff:fy:y}
                     \tcp*[r]{\footnotesize Encode $y_{k,d}=1 \implies f'_k(x) < d$}
                     
                $\hc \gets \hc \cup \phi''$\;                           \label{alg:reap:coeff:fy:o}
            }
    }
    \For{$(\nu, f(\nu)) \in \Ac$}{                                      \label{alg:reap:coeff:forb}
        $\hc \gets \hc \cup \{(y_{1,f'_{1}(\nu)} \vee \ldots \vee y_{p,f'_{p}(\nu)})\}$              \label{alg:reap:coeff:forb:ufb}
        \tcp*[r]{\footnotesize Block region weakly dominated by $f'(\nu)$}
        $\Lc \gets \Lc \cup \{f'(\nu)\}$                                \label{alg:reap:coeff:forb:ufl}
    }
        $(\Ac', \Lc') \gets$ \enumalg{$f, f',\hc, D$}\;                 \label{alg:reap:coeff:mcs}
        $\Ac \gets$ \nondom{$\Ac \cup \Ac'$}                            \label{alg:reap:coeff:uA}
            \tcp*[r]{\footnotesize $\Ac$ stores a $(1+\varepsilon)$-approximation set} 
        $\Lc \gets$ \nondom{$\Lc \cup \Lc'$}                                      
            \label{alg:reap:coeff:uL}
            \tcp*[r]{\footnotesize $\Lc$ is a lower bound set} 
        $\varepsilon \gets$ \updateeps{$\varepsilon$}\;                 \label{alg:reap:coeff:ueps}
  }                                                                     \label{alg:reap:coeff:r:e}
  \Return{$\Ac,\Lc$}\;
  \caption{\rcb algorithm: Enumerate all nondominated points for $f(x)$ subject to the problem constraints
  encoded in $\phi$, through re-approximations using the coefficient-based approximation method.
  The initial approximation ratio is $\varepsilon$.
  }\label{alg:reap:coeff}
\end{algorithm}

\noindent Algorithm~\ref{alg:reap:coeff} describes a procedure to sequentially update an
approximation set $\Ac$ while improving the corresponding approximation ratio. 
Given the objective functions $f$, a set of clauses $\phi$ representing the problem constraints,
and an initial value of $\varepsilon$, the algorithm runs until $\Ac$ represents an optimal
solution set.
In each iteration, the algorithm performs four main steps: 
1) encodes the (approximate) objective functions  in lines~\ref{alg:reap:coeff:f:i}-\ref{alg:reap:coeff:fy:o}
given the current value of $\varepsilon$; 
2) a region in objective space within a $(1+\varepsilon)$ ratio of $\Ac$
is blocked in lines~\ref{alg:reap:coeff:forb}-\ref{alg:reap:coeff:forb:ufb};
3) in lines~\ref{alg:reap:coeff:mcs}-\ref{alg:reap:coeff:uL}, $\Ac$ and $\Lc$ are updated using Algorithm~\ref{alg:approx} to search for solutions mapping to points outside
the blocked region, and to make sure that $\Ac$ becomes a $(1+\varepsilon)$-approximation
and $\Lc$ the respective lower bound set;
4) the value $\varepsilon$ is updated (it is decreased) at the end of each iteration,
in line~\ref{alg:reap:coeff:ueps}.

In the first step, the approximate version, $f'_k$, of the objective function $f_k$ is
computed first for each $k\in\{1,\ldots,p\}$ in line~\ref{alg:reap:coeff:ap},
then the set of clauses $\phi'$ encoding $f'_k$ (\eg, as explained in Section~\ref{sec:mcs-ur})
is computed in line~\ref{alg:reap:coeff:of},
and is added to $\hc$. 
The set $D_k$ is computed in line~\ref{alg:reap:coeff:D}, where the second
argument of the procedure $\computeD$ indicates the interval-approximation factor for computing
$D_k$. In this case (when the coefficient-based approximation is used), this factor must be $1$
and therefore, $D_k$ is complete.
The procedure $\encodelt(k,d)$ in line~\ref{alg:reap:coeff:fy:y} computes the set of
clauses $\phi''$ that encodes the implication $y_{k,d}=1 \implies f'_k(x) < d$.
These clauses, $\phi''$, are then added to $\hc$ in line~\ref{alg:reap:coeff:fy:o}.

The second step is skipped (only) in the first iteration of the repeat-until loop as $\Ac$
is empty, \ie, no solutions are known.
Let $\varepsilon'$ be the value of $\varepsilon$ from the previous iteration, hence, $\varepsilon'>\varepsilon$.
At the beginning of a new iteration, the set $\Ac$ is a $(1+\varepsilon')$-approximation.
Although not all the efficient solutions are,
in general, expected to be within the updated, and tighter, $(1+\varepsilon)$ ratio from the solutions
stored in $\Ac$, some may be.
To take advantage of this, for each solution $\nu$ stored in $\Ac$,
the region weakly dominated by $f'(\nu)$ is blocked in line~\ref{alg:reap:coeff:forb:ufb}.
The union of the blocked regions is represented by $\Lc$ in line~\ref{alg:reap:coeff:forb:ufl}, \ie, $\Lc=f'(\Ac)$.
At this point, $\hc$ encodes the problem constraints, the $p$ approximate objective
functions $f'_1,\ldots,f'_p$, the $y$ variables needed, and the objective-space region
that is blocked. 
Hence, $\hc$ is satisfiable only if there is a solution that satisfies the problem constraints, $\phi$,
and which maps to a point outside the blocked region.

In the third step, Algorithm~\ref{alg:approx} is executed in line~\ref{alg:reap:coeff:mcs}
and returns the set $\Ac'$ that represents a $(1+\varepsilon)$-approximation with respect to the set
of feasible solutions mapping to points outside the blocked region, and the respective lower bound set, $\Lc'$.
Hence, $\Ac'\cup\Ac$ is a $(1+\varepsilon)$-approximation set, and $\Lc'\cup\Lc$ is a lower bound set.
The set of solutions that are not dominated by any other solution in $\Ac'\cup\Ac$ is then stored in $\Ac$
in line~\ref{alg:reap:coeff:uA}. Similarly, only the mutually nondominated points in $\Lc'\cup\Lc$ are
stored in $\Lc$ in line~\ref{alg:reap:coeff:uL}.

Finally, in the last step, the approximation ratio $\varepsilon$ is updated (line~\ref{alg:reap:coeff:ueps}).
Different update strategies may be used, where the only requirement is to decrease the
value of $\varepsilon$.
Setting $\varepsilon$ to zero ensures that all points in the Pareto front are enumerated at
the end of the next iteration.
The algorithm will stop when
all approximate functions are equal to the corresponding original objective functions ($f=f'$),
which will happen if $\varepsilon=0$ but may also happen for small enough values of $\varepsilon$
greater than zero.
Note that the algorithm may be stopped earlier using a user-defined criterion such as when
a time limit or a desired approximation ratio is reached.

\begin{figure}[t]
        \centering
        \begin{subfigure}[b]{0.27\linewidth}
            \centering
            \includegraphics[width=\linewidth]{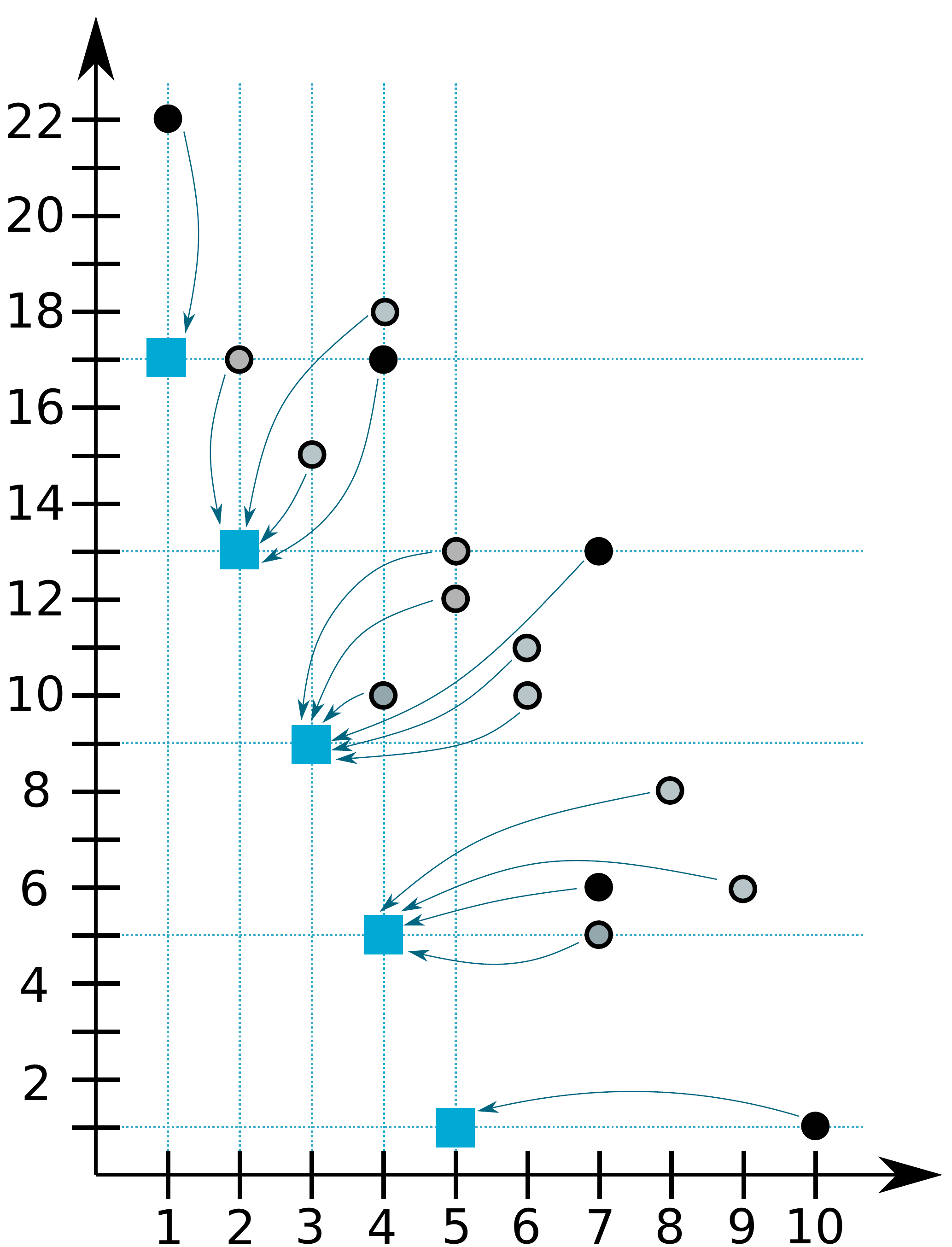}
            \caption[]{$\varepsilon=3$}
            \label{fig:alg:coeffs:eps4}    
        \end{subfigure} %
        \begin{subfigure}[b]{0.27\linewidth}  
            \centering 
            \includegraphics[width=\linewidth]{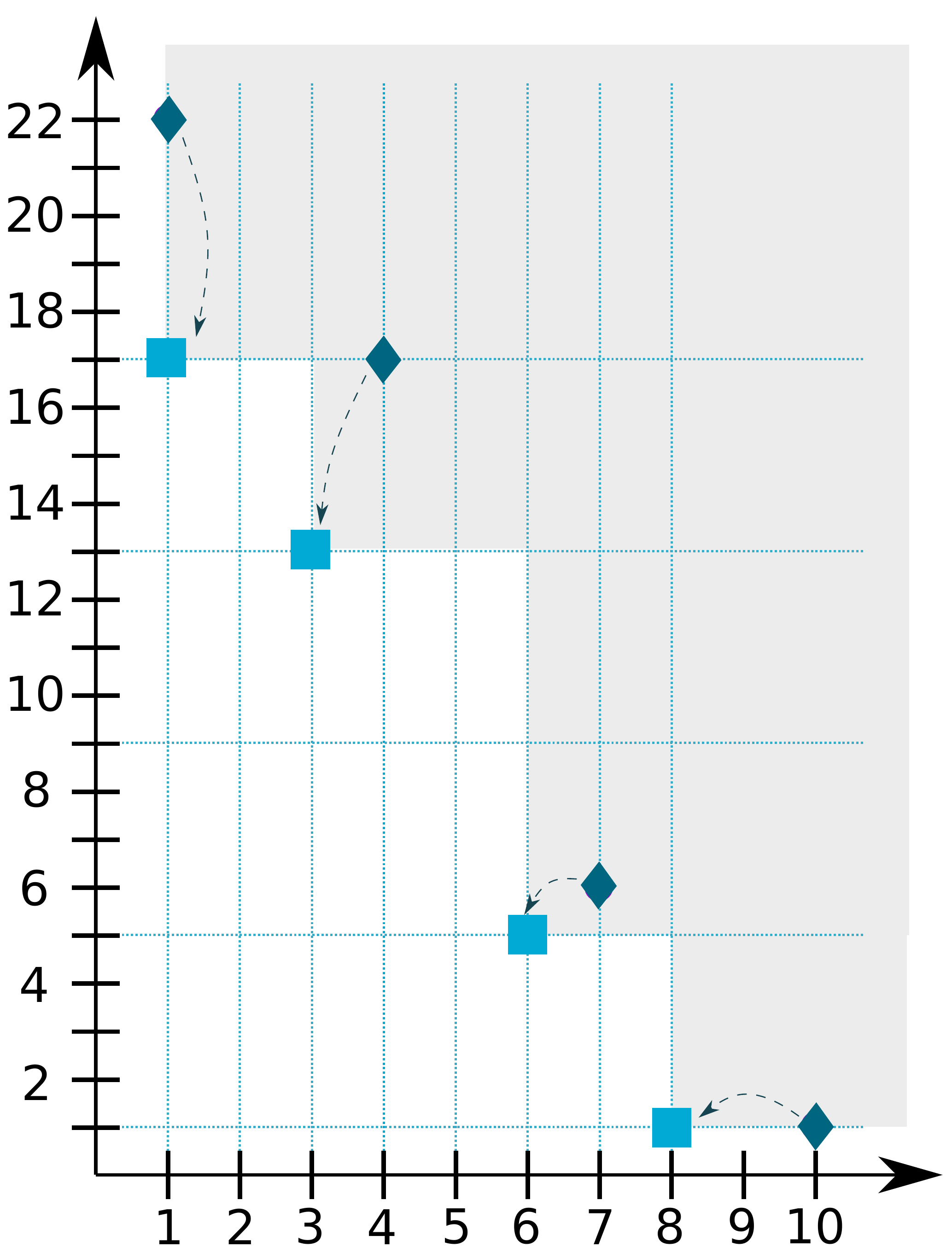}
            \caption[]{$\varepsilon=1$, blocked region}
            \label{fig:alg:coeffs:eps2b}    
        \end{subfigure}
        \begin{subfigure}[b]{0.27\linewidth}  
            \centering 
            \includegraphics[width=\linewidth]{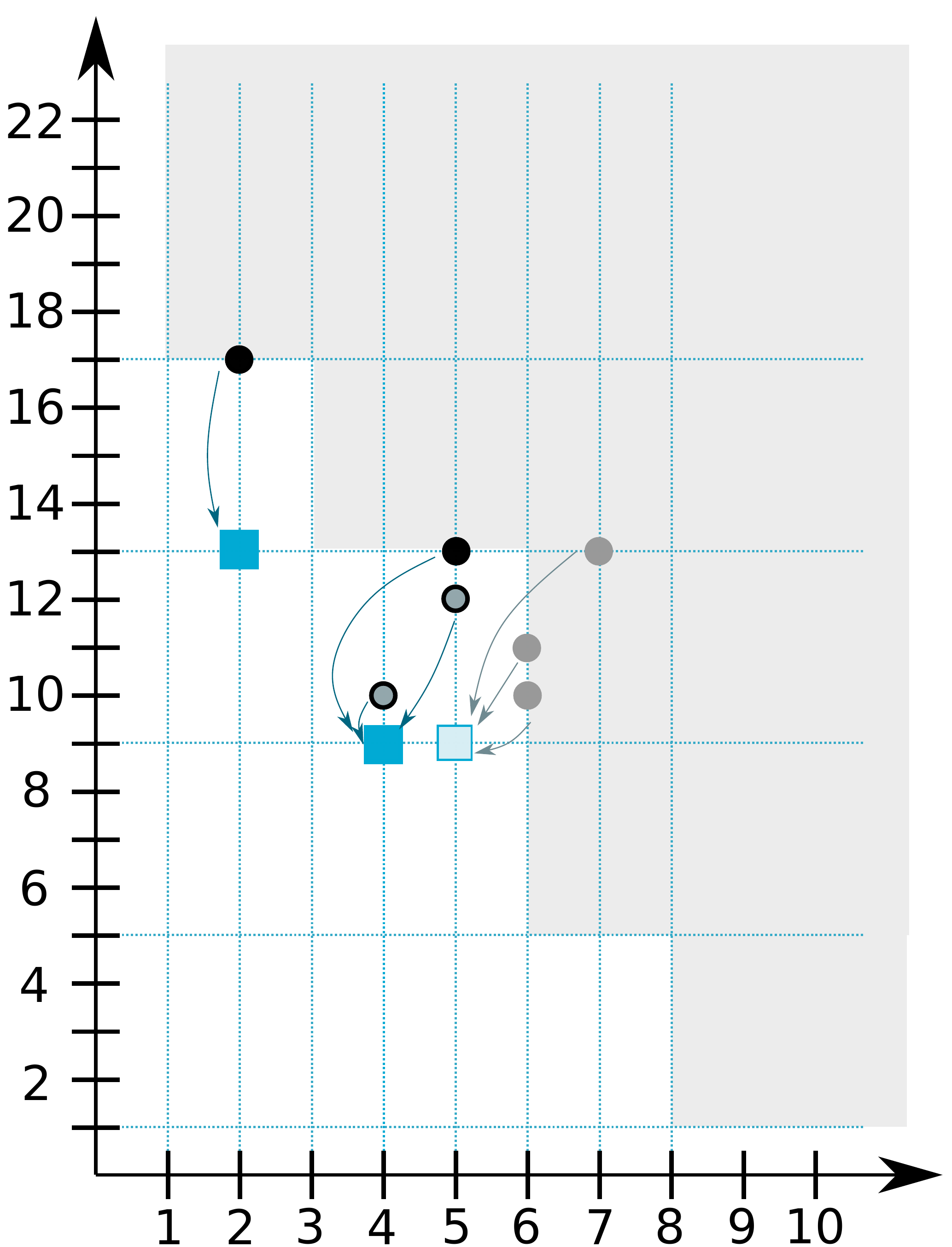}
            \caption[]{$\varepsilon=1$}
            \label{fig:alg:coeffs:eps2}    
        \end{subfigure}
        \caption{Example of the first two iterations of Algorithm~\ref{alg:reap:coeff}:
    (a) illustration of the problem solved in the first iteration for $\varepsilon=3$, 
    (b) the region blocked in the beginning of the second iteration for $\varepsilon=1$, and
    (c) the problem solved after.} 
        \label{fig:alg:coeffs}
    \end{figure}

Figure~\ref{fig:alg:coeffs} illustrates two iterations of Algorithm~\ref{alg:reap:coeff}
for the problem defined in expression~\eqref{prob:ex1}.
Figure~\ref{fig:alg:coeffs:eps4} shows the feasible points under $f'$ given $\varepsilon=3$ (blue squares)
where
$f'_1(x) = x_1 + x_2 + x_3 + x_4 + 1$ and $f'_2(x) = 4\bar{x_1} + 4\bar{x_2} + 4\bar{x_3} + 4\bar{x_4} + 1$.
Since they are all nondominated, Algorithm~\ref{alg:approx} finds one solution mapping to each one
of these points (black circles), which are returned in $\Ac'$ in line~\ref{alg:reap:coeff:mcs}.
Note that, $(7,13)$ is weakly dominated by $(7,6)$ and thus, it is discarded in
line~\ref{alg:reap:coeff:uA}. 
The resulting set $\Ac = \{ ((0,0,0,0),(1,22)),((0,1,0,0),(4,17)),((1,0,1,1),(7,6)),((1,1,1,1),(10,1))\}$ is a $(1+3)$-approximation set. Assuming that
$\varepsilon$ is then set to $1$ in line~\ref{alg:reap:coeff:ueps}, the next iteration encodes the new approximate
objective functions, 
which are now $f'_1(x) = 2x_1 + 2x_2 + x_3 + 2x_4 + 1$ and
$f'_2(x) = 4\bar{x_1} + 4\bar{x_2} + 4\bar{x_3} + 4\bar{x_4} + 1$.
For each solution $\nu$ stored in $\Ac$, $f(\nu)$ and $f'(\nu)$ are represented by a diamond and
square, respectively, in  Figure~\ref{fig:alg:coeffs:eps2b} and they are linked by an arrow.
The shaded region in Figure~\ref{fig:alg:coeffs:eps2b} is the region blocked in lines~\ref{alg:reap:coeff:forb}-\ref{alg:reap:coeff:forb:ufb}. It is lower bounded by the images of the assignments in $\Ac$ according to the newly defined $f'$, namely points (1,17), (3,13), (6,5), and (8,1).
Hence, all solutions $\nu$ such that $f'(\nu)$ is in the shaded region are now considered infeasible.
Figure~\ref{fig:alg:coeffs:eps2} shows the remaining feasible points under $f'$ (squares) and the
points under $f$ associated to them (circles), where the solutions mapping to black circles are
returned in $\Ac'$ in line~\ref{alg:reap:coeff:mcs}.
At the end of the iteration, the set $A_\varepsilon$ represented in $\Ac$,
where $f(A_\varepsilon)=\{(1,22),(2,17),(5,13),(7,6),(10,1)\}$, is a $(1+1)$-approximation and 
the associated lower bound set is $\Lc=\{(1,17),(2,13),(4,9),(6,5),(8,1)\}$.

Note that, in the example, the approximate objective functions $f'_2$ does not change for the two
settings of $\varepsilon$. In this case, there is no need to repeat 
lines~\ref{alg:reap:coeff:of}-\ref{alg:reap:coeff:fy:o} for $k=2$.
The algorithm can be easily adapted to avoid repeating the encoding of approximate functions that do not change.

\subsubsection{Interval-based re-approximations}
\label{s:algs:enum:int}

\DontPrintSemicolon

\SetKwFunction{approx}{Approximate}
\SetKwFunction{computeD}{ComputeD}
\SetKwFunction{updateeps}{UpdateRatio}
\SetKwFunction{encodelt}{EncodeLT}
\SetKwFunction{encodeof}{EncodeObjFunction}
\SetKwFunction{satf}{SAT}
\SetKwFunction{enumalg}{\mcskp}
\SetKwFunction{stopc}{StopCriterion}
\SetKwData{stop}{stop}
\SetKwData{break}{break}
\SetKwData{false}{false}
\SetKwData{true}{true}
\SetKwData{sat}{SAT}
\SetKwData{unsat}{UNSAT}
\SetKwData{st}{st}
\SetVlineSkip{1pt}

\begin{algorithm}[!t]
  \small
  \KwIn{$f$, $\phi$, $\varepsilon$}
  \KwOut{An optimal set $\Ac$}
  $\Ac \gets \Ac' \gets \hc \gets \{\}$\;
    
  \For{$(k=1)$ \KwTo $(p)$}{                                            \label{alg:reap:int:forof}
            $\phi' \gets$ \encodeof{$f_k$}                              \label{alg:reap:int:of}
                    \tcp*[r]{\footnotesize Encode $k$-th objective function}
            $\hc \gets \hc \cup \phi'$\;                                \label{alg:reap:int:uenc}
  }
  \Repeat{$\Ac'=\{\} \textbf{ or}$ \stopc{$\varepsilon$}}{              \label{alg:reap:int:r}
  
    \For{$(k=1)$ \KwTo $(p)$}{                                          \label{alg:reap:int:ford}
        $D_k \gets$ \computeD{$f_k, 1+\varepsilon$}                     \label{alg:reap:int:D}
                    \tcp*[r]{\footnotesize Compute $D_k$ for $f_k$}
        \For{$d \in D_k$}{                                              \label{alg:reap:int:fy}
            $(y_{k,d}, \phi'') \gets$ \encodelt{$k,d$}                  \label{alg:reap:int:fy:y}
                    \tcp*[r]{\footnotesize Encode $y_{k,d}=1 \implies f_k(x) < d$}
            $\hc \gets \hc \cup \phi''$\;                               \label{alg:reap:int:fy:o}
        }
        
    }

    \For{$(\nu, f(\nu)) \in \Ac'$}{                                     \label{alg:reap:int:forb}
        \For{$(k=1)$ \KwTo $(p)$}{                                      \label{alg:reap:int:forb:forp}
            $d_k \gets f_k(\nu)$\;                                      \label{alg:reap:int:forb:forp:d}
            $(y_{k,d_k}, \phi'') \gets$ \encodelt{$k,d_k$}\;            \label{alg:reap:int:forb:forp:y}
            $\hc \gets \hc \cup \phi''$\;                               \label{alg:reap:int:forb:forp:o}
        
        }
        $\hc \gets \hc \cup \{(y_{1,d_{1}} \vee \ldots \vee y_{p,d_{p}})\}$                                 \label{alg:reap:int:forb:ufb}
                    \tcp*[r]{\footnotesize Block region weakly dominated by $f(\nu)$}
    }
    $\Lc \gets \{f(\nu) \:|\: (\nu,f(\nu)) \in \Ac\}$\;                 \label{alg:reap:int:forb:ul}
        $(\Ac', \Lc') \gets$ \enumalg{$f, f,\hc, D$}\;                   \label{alg:reap:int:mcs}
        $\Ac \gets$ \nondom{$\Ac \cup \Ac'$}               
        \label{alg:reap:int:uA}
                    \tcp*[r]{\footnotesize $\Ac$ stores a $(1+\varepsilon)$-approximation set}
        $\Lc \gets$ \nondom{$\Lc \cup \Lc'$}         
        \label{alg:reap:int:uL}\;
        $\varepsilon \gets$ \updateeps{$\varepsilon$}\;                 \label{alg:reap:int:ueps}
  }
  \Return{$\Ac,\Lc$}\;
  \caption{\rib algorithm: Enumerate all nondominated points of $f$ subject to the problem constraints
  encoded in $\phi$, through re-approximations using the interval-based approximation method.
  The initial approximation ratio is $\varepsilon$.}\label{alg:reap:int}
\end{algorithm}

\noindent Algorithm~\ref{alg:reap:int} is similar to Algorithm~\ref{alg:reap:coeff} in the sense that
it follows an identical scheme to iteratively improve an approximation set $\Ac$ and the
corresponding approximation ratio, until $\Ac$ represents an optimal set.
Similarly, the input of Algorithm~\ref{alg:reap:int} are a set of objective functions, $f$,
the problem constraints encoded as a set of clauses $\phi$, and an initial value of $\varepsilon$.
The algorithm has a preprocessing step in lines~\ref{alg:reap:int:forof}-\ref{alg:reap:int:uenc}
where all objective functions are encoded.
Then, in each iteration, the algorithm also performs four main steps:
1) determine the objective functions intervals by computing $D_1,\ldots,D_p$, and encode the
associated $y$ variables, in lines~\ref{alg:reap:int:ford}-\ref{alg:reap:int:fy:o};
2) block the region weakly dominated by $\Ac$, in
lines~\ref{alg:reap:int:forb}-\ref{alg:reap:int:forb:ufb};
3) search for a $(1+\varepsilon)$-approximation set outside the blocked region, and update
$\Ac$ in lines~\ref{alg:reap:int:mcs}-\ref{alg:reap:int:uA};
4) update (\ie, decrease the value of) $\varepsilon$ in line~\ref{alg:reap:int:ueps}.

Unlike Algorithm~\ref{alg:reap:coeff}, Algorithm~\ref{alg:reap:int} directly encodes the original
objective functions $f_1,\ldots,f_p$ 
and does it just once at the beginning of the algorithm (line~\ref{alg:reap:int:of}).
The corresponding clauses are added to $\hc$ (line~\ref{alg:reap:int:uenc}).
In this case, only the sets $D_1,\ldots,D_p$ will (possibly) change from iteration to iteration
as $\varepsilon$ decreases. Therefore, the first main step of the repeat-until loop
is to update the sets $D_1, \ldots, D_p$ accoding to the current value of $\varepsilon$
(line~\ref{alg:reap:int:D}) and the $y$ variables
associated to the intervals they represent (line~\ref{alg:reap:int:fy:y}). Note that, 
the second argument of $\computeD$ is $1+\varepsilon$, so that the values in $D_1,\ldots,D_p$
represent the approximation intervals. 
The second main step is to block the region weakly dominated by each new solution $\nu$, stored in $\Ac'$
(lines~\ref{alg:reap:int:forb:forp}-\ref{alg:reap:int:forb:ufb}).
After line~\ref{alg:reap:int:forb:ul}, $\Lc=f(\Ac)$ represents the blocked region.
Firstly, the
procedure $\encodelt$ is called (line~\ref{alg:reap:int:forb:forp:y}) for the value of each
component of $f(\nu)$, stored in $d_k$, because $f_k(\nu)$ is not necessarily in $D_k$ and thus,
$y_{k,d_k}$ may not have been encoded in the previous for loop (in line~\ref{alg:reap:int:fy:y}).
Then, the clause $(y_{1,d_1}\vee\ldots\vee y_{p,d_p})$ is permamently added to $\hc$
(line~\ref{alg:reap:int:forb:ufb})
to block the region weakly dominated by $f(\nu)$ in the remaining iterations.

Similarly to Algorithm~\ref{alg:reap:coeff},
Algorithm~\ref{alg:approx} is executed in line~\ref{alg:reap:int:mcs}
to find a $(1+\varepsilon)$-approximation with respect to the solutions outside the blocked
region, $\Ac'$, and the associated lower bound set, $\Lc'$. Then $\Ac$ and $\Lc$ are update
with $\Ac'$ and $\Lc'$, respectively, in lines~\ref{alg:reap:int:uA}-\ref{alg:reap:int:uL}. 
Hence, $\Ac$ becomes a $(1+\varepsilon)$-approximation and $\Lc$ the associated lower bound set.
The value of $\varepsilon$ is then updated in line~\ref{alg:reap:int:ueps}.
If Algorithm~\ref{alg:reap:coeff} returns an empty set as $\Ac'$, then $\Ac=\Lc$ is the Pareto
front, and the algorithm terminates.
As in  Algorithm~\ref{alg:reap:coeff}, Algorithm~\ref{alg:reap:int} may be interrupted when a
user-defined criterion is met.

\begin{figure}[t]
        \centering
        \begin{subfigure}[b]{0.27\linewidth}
            \centering
            \includegraphics[width=\linewidth]{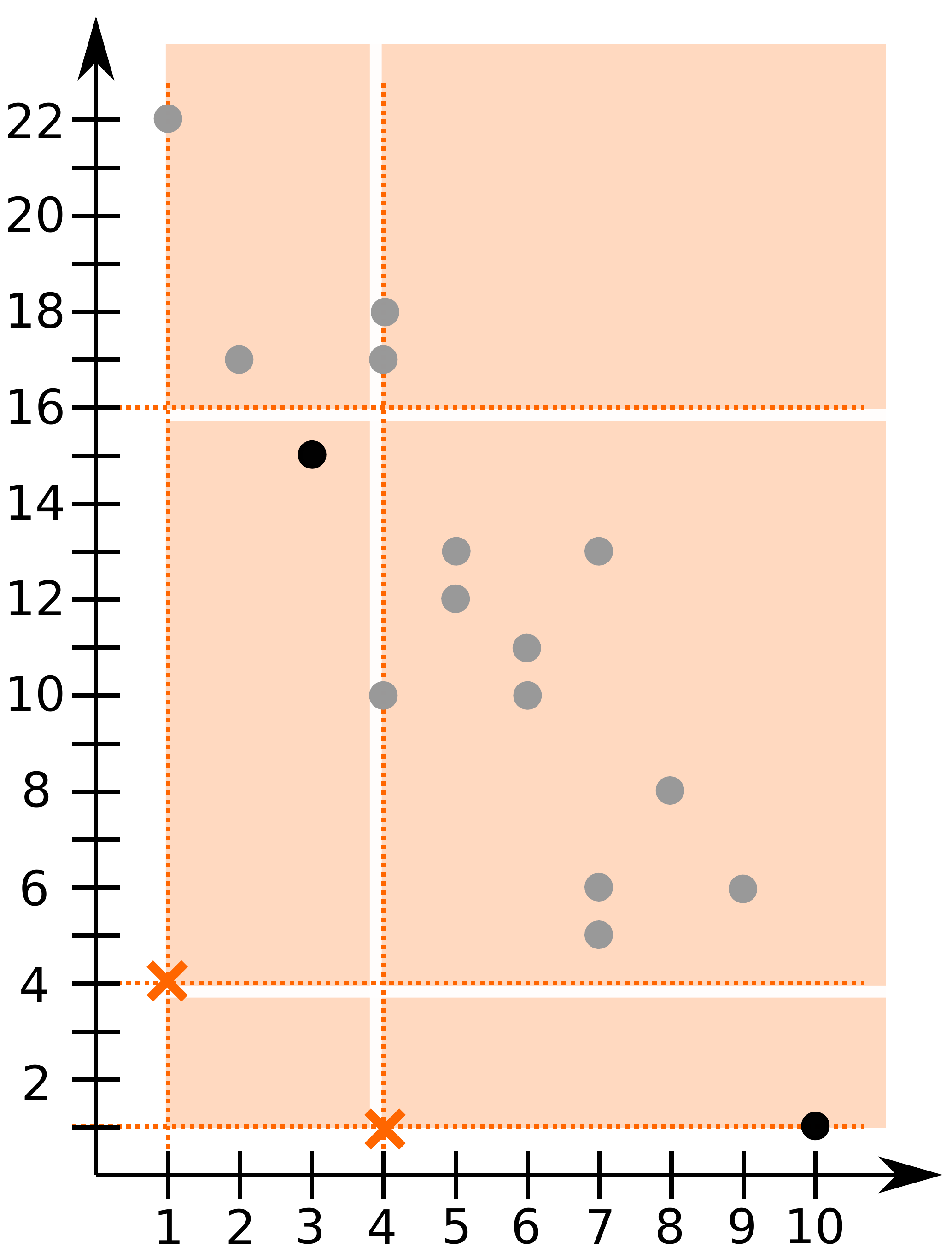}
            \caption[]{$\varepsilon=3$}
            \label{fig:alg:int:eps4}    
        \end{subfigure} %
        \begin{subfigure}[b]{0.27\linewidth}  
            \centering 
            \includegraphics[width=\linewidth]{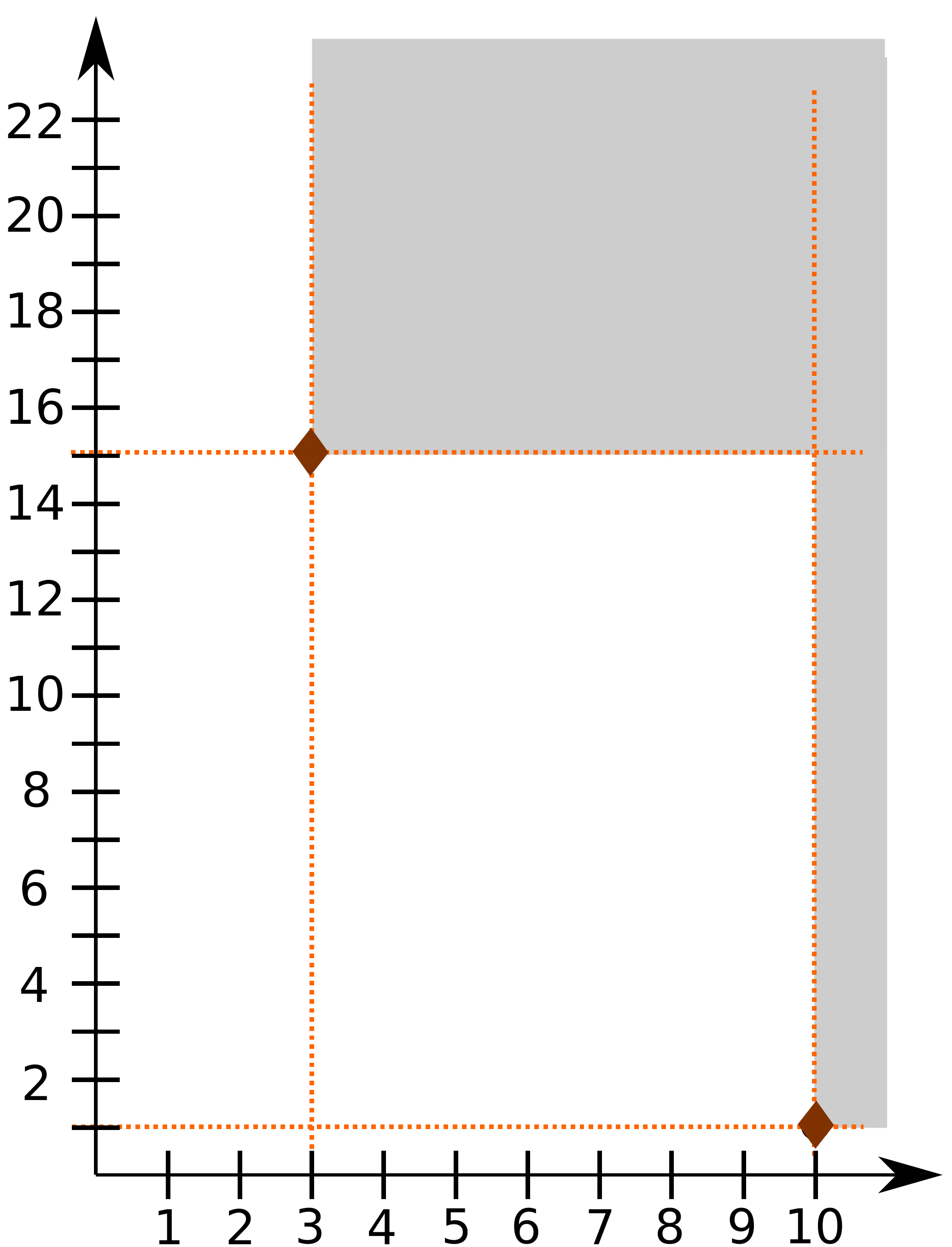}
            \caption[]{blocked region}
            \label{fig:alg:int:eps2b}    
        \end{subfigure}
        \begin{subfigure}[b]{0.27\linewidth}  
            \centering 
            \includegraphics[width=\linewidth]{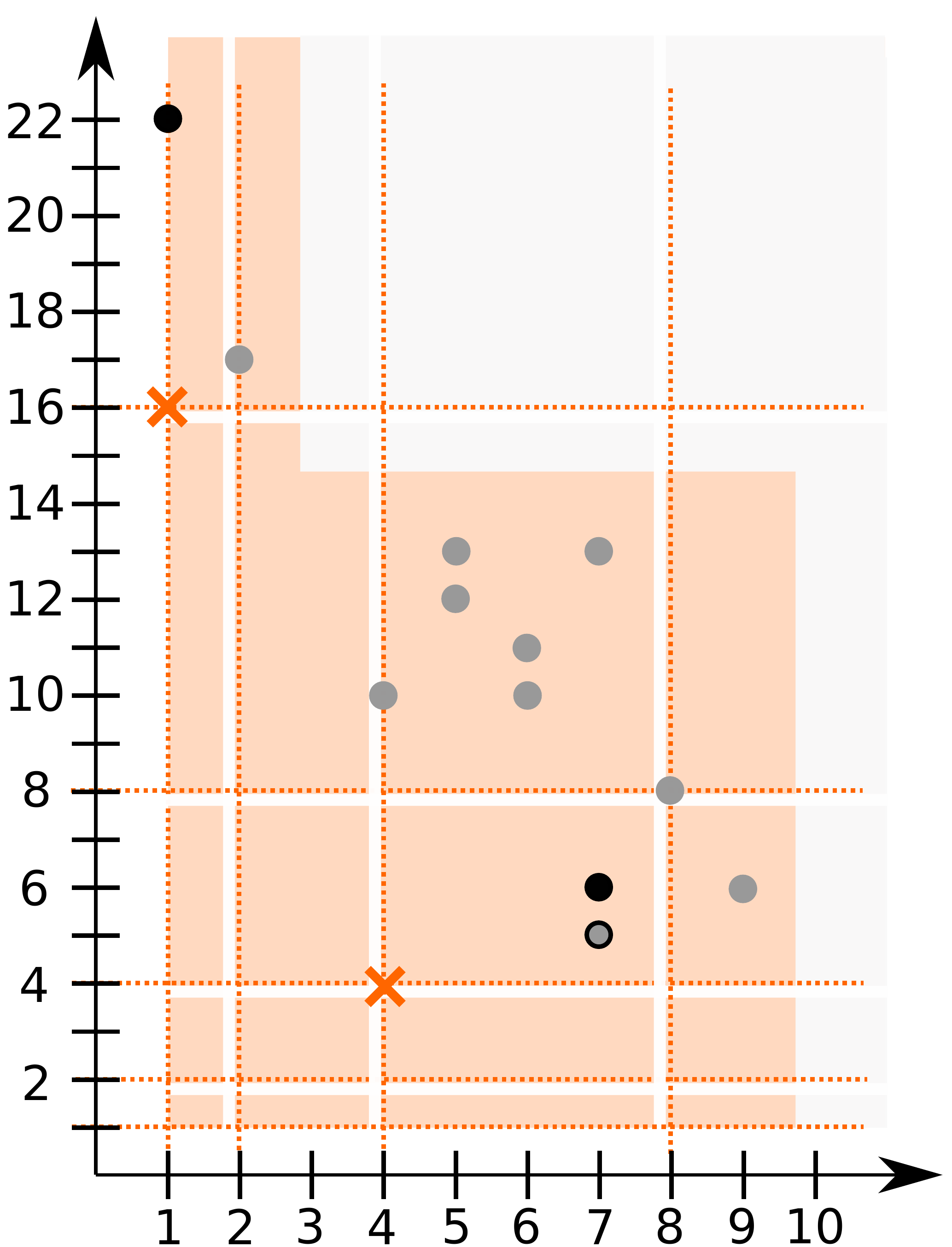}
            \caption[]{$\varepsilon=1$}
            \label{fig:alg:int:eps2}    
        \end{subfigure}
        \caption{Example of the first two iterations of Algorithm~\ref{alg:reap:int}:
    (a) illustration of the problem solved in the first iteration for $\varepsilon=3$, 
    (b) the region blocked in the beginning of the second iteration for $\varepsilon=1$, and
    (c) the problem solved after.} 
        \label{fig:alg:int}
    \end{figure}

Figure~\ref{fig:alg:int} illustrates two iterations of Algorithm~\ref{alg:reap:int} for the
problem in expression~\eqref{prob:ex1}.
With $\varepsilon$ initially set to $3$,
$D_1=\{1,4,16\}$ and $D_2=\{1,4,16,64\}$ in the first iteration (line~\ref{alg:reap:int:D}).
Figure~\ref{fig:alg:int:eps4} 
shows all feasible points (circles), the points returned
in $\Ac'$ in line~\ref{alg:reap:int:mcs} (black circles), \ie, $\{(3,15), (10,1)\}$,
and the points in the respective lower bound set $\Lc'=\{(1,4),(4,1)\}$ (orange crosses).
At the end of the first iteration, the value of $\varepsilon$ is updated to $1$
(line~\ref{alg:reap:int:ueps}).
In the second iteration, after computing $D_1=\{1,2,4,8,16\}$ and $D_2=\{1,2,4,8,16,32\}$, 
the regions weakly dominated by points found in the previous iteration are blocked
in lines~\ref{alg:reap:int:forb}-\ref{alg:reap:int:forb:ufb}. Such regions are shaded
in Figure~\ref{fig:alg:int:eps2b}) where
the dashed lines represent the values $d_1,\ldots,d_p$ computed in
line~\ref{alg:reap:int:forb:forp:d} and used in line~\ref{alg:reap:int:forb:forp:y}
to encode the $y$ variables needed to construct the blocking clauses.
Figure~\ref{fig:alg:int:eps2} shows the remaining feasible points outside the blocked
region, the point set $\{(1,22),(7,6)\}$ returned in $\Ac'$ (black circles) and the respective
lower bound set $\Lc'=\{(1,16), (4,4)\}$ (orange crosses) computed in line~\ref{alg:reap:int:mcs}.
At the end of this iteration, $\Ac=\{(3,15), (10,1), (1,22),(7,6)\}$ is a $(1+1)$-approximation set,
and $\Lc=\{(3,15),(10,1),(1,16),(4,4)\}$ is the associated lower bound set.
Note that, since the region weakly dominated by $\{(3,15), (10,1)\}$ is already blocked,
in the third iteration new clauses will be created to block only the regions weakly
dominated by $(1,22)$ and $(7,6)$.

\subsection{Remarks}


\noindent 
Algorithm~\ref{alg:approx} finds a $(1+\varepsilon)$-approximation set,
provided that (some of) the values of the objective functions were previously encoded using some unary representation.
The (re-approximation) Algorithms~\ref{alg:reap:coeff} and~\ref{alg:reap:int},
named \rcb and \rib, respectively, are responsible for 
such encoding, and iteratively call Algorithm~\ref{alg:approx}.
Both are flexible in the sense that they can
start searching for a $(1+\varepsilon^s)$-approximation set for any $\varepsilon^s \geqslant 0$,
then iteratively refine $\varepsilon^s$, and stop when a $(1+\varepsilon^d)$-approximation set is found,
given a desired value of $\varepsilon^d$. If $\varepsilon^d=\varepsilon^s$, then the algorithm
stops after just one call to Algorithm~\ref{alg:approx}.
\rcb and \rib may also be terminated when a given time budget is reached. 
In any case, they return in $\Ac$ the best solutions found so far, and as long as the first iteration
terminates within the time budget, the
algorithms provide 
a lower bound set, $\Lc$. 
This delimits the location of the Pareto front, and provides an approximation factor
of $I_\varepsilon(\Ac,\Lc)$ which is possibly tighter than $(1+\varepsilon^c)$,
where $\varepsilon^c$ is the value of $\varepsilon$ in the last completed iteration.
Hence, these algorithms are useful to a Decision Maker interested in
learning about the location of the Pareto front, or in
finding the best approximation possible, as soon as possible or, within a time limit.


The anytime performance of the algorithms, \ie, the quality of the obtained approximation set
at any time during their execution, depends on multiple factors such as the problem
instance, the approximation method, the starting $\varepsilon$ value and how it is updated,
how many times Algorithm~\ref{alg:approx} and the SAT solver are called, and so on.
For example, a small initial value for $\varepsilon$ may lead to a good approximation set at
the end of the first iteration, but finding it may not be possible within the time limit.
On the other hand, a larger initial value for $\varepsilon$ will possibly lead to a worse first
approximation, but which is found faster, allowing the algorithm to progressively find better
approximation sets.

The runtime of Algorithm~\ref{alg:approx} is also
influenced by the size of the approximation set found.
\cite{PapYan2000} showed that, for a fixed number of objectives, $p$, there is
a $(1+\varepsilon)$-approximation set of polynomial size in $n$ and $\frac{1}{\varepsilon}$ for
any $\varepsilon > 0$. This is the case even with different $\varepsilon_k$ values for each objective function
$k\in\{1,\ldots,p\}$ and with the $i$-th one set to $\varepsilon_i=0$~\citep{DBLP:journals/jgo/HerzelBRTV21}. The proposed algorithms could be easily adapted to consider this case,
even with different update strategies for each $\varepsilon_k$. 
This indicates that the number of MCSs enumerated by Algorithm~\ref{alg:approx} may be polynomial,
even though $Y_N$ may have exponential size.
In fact, the interval-based approximation ensures such polynomial-size approximation sets because
it relies on the same hypergrid partitioning of the objective space as in the proofs of the
aforementioned articles.
Hence, with an initial $\varepsilon > 0$, \rib (and possibly \rcb for some instances)
can first search for an approximation set of polynomial size, giving an idea of the location and extent of the Pareto front,
and then use the time left to enumerate as many nondominated points as possible
(by updating $\varepsilon$ to zero). The lower bound set could also be updated along the run
to tighten the known approximation factor.
In contrast, with an initial $\varepsilon=0$, the algorithms may only be able to return
a (possibly not well distributed) subset of the Pareto front with no known
approximation factor when $|Y_N|$ is large (\eg, exponential in $n$).

There are a few differences distinguishing the re-approximation algorithms that stem from
the approximation method that each one uses.
With the interval-based approximation, the set of clauses that encode the objective functions
is the same independently of the value of $\varepsilon$.
One advantage of this is that the objective functions are encoded just once in \rib and afterwards,
the $y_{k,d}$ variables (\ie, the unary representation of the objective values) 
are encoded as needed, depending on $\varepsilon$. However, this implies that
a large encoding may be needed even if $\varepsilon$ is large.
Another advantage of \rib is that as new solutions are found, the region weakly dominated by those
points can be permanently blocked.

A disadvantage of the coefficient-based approximation is that, since the
approximate objective functions may change from iteration to iteration, they have to be
re-encoded at the beginning of each iteration, and new clauses have to be added to block the region dominated by the points in $\Ac$ using the new $y_{k,d}$ variables associated
with the newest encodings. Therefore, as $\varepsilon$ tends to zero, the total amount of time spent
in these re-encoding and blocking steps may be more time consuming in \rcb than in \rib. 
An advantage of the coefficient-based method is that the encoding
of the approximate objective functions is potentially much smaller than for $\varepsilon=0$
(and than in \rib).
The size of the encoding depends on three factors: the method used to obtain the unary
encoding, the
coefficients of the function being encoded, and the function upper bound. 
Although the influence of the coefficients may not be easy to
quantify, the smaller they are the smaller is the function upper bound, and the more equal coefficients
in the function the smaller is the number of different values the function can take
(\eg, if they are all equal, then the function can only take up to $n$ different values).
Hence, in general, the larger $\varepsilon$ is, the smaller the encoding is expected to be.

Finally, note that, for some problem instances, running \rcb for any setting of $\varepsilon$
is equivalent to solving the exact case, $\varepsilon=0$, and therefore, the
approximation set returned could be of exponential size. For example, when
all the coefficients of the objective functions are equal, 
or are powers of $(1+\varepsilon)$. In such cases, \rib may be preferred. 
It is also important to note that if $c^{max}$ is the largest coefficient in the
objective functions, then running \rcb with $(1+\varepsilon)>c^{max}$ is equivalent to
running it with $(1+\varepsilon)=c^{max}$ because in either case, the non-zero coefficients of
the approximate functions will all be one.
Additionally, if the $p$ approximate functions are all equal, then the corresponding
approximation sets will have size one.



\section{Computational experiments}
\label{s:exp}

\noindent This section evaluates the performance of the algorithms proposed in Section~\ref{s:algs}.
In particular, we show some preliminary results on the anytime performance
of \rib and \rcb (see Section~\ref{s:exp:anyt}) to highlight their advantages
and potential,
and then we compare them with state-of-the-art solvers (see Section~\ref{s:exp:comp}).

\subsection{Experimental Setup}
\label{s:exp:setup}

\noindent All experimental results were obtained on a server with processor Intel(R)
Xeon(R) CPU E5-2630 v2 @ 2.60GHz with 64GB of memory. 
The \rib and \rcb algorithms were implemented in C++ by extending Open-WBO~\citep{martins-sat14} and
adapting it for the multi-objective case. Glucose SAT solver (version 4.1) was used
in \mcskp and the CLD algorithm~\citep{DBLP:conf/ijcai/Marques-SilvaHJPB13} was used
to compute MCSs.
In the implementations we have some preprocessing steps to reduce the size of the
encodings of the objective functions. For example, if, 
at some point during the execution,
the SAT solver infers the value of some variable, \ie, that the variable holds that
value for any feasible solution, then this variable can be excluded when (re-)encoding
the objective functions. Another example is the computation of an upper bound on the feasible
values of each objective function, tighter than $u_k$, and encoding the function only
up to that value.

The algorithms \rib and \rcb were tested with different input parameters and different
update strategies of the approximation factor.
In the plots shown in this section, the name of the algorithms is followed
by a tuple indicating such settings, which may include one or two components.
The first component corresponds to the value of $(1+\varepsilon)$, and the second,
if present, indicates the value by which $\varepsilon$ is divided in each iteration.
For example, $\rib (2, 10)$ corresponds to running \rib with a starting value
of $\varepsilon=1$ (\ie, $2=1+\varepsilon$), and in the second iteration $\varepsilon$
will be set to $1/10$, in the third it will be set to $0.1/10$, and so on.
Another example is $\rcb (4)$ which corresponds to running
\rcb with $\varepsilon=3$, and terminating the algorithm after the first iteration.
Since \rib and \rcb are equivalent when $\varepsilon$ is initially set to $0$, which
corresponds to encoding the objective functions in an exact way and call
Algorithm~\ref{alg:approx} just once to enumerate all nondominated points,
this particular case will be referred as $\mcskp(1)$.

The following MOBO problems were considered: the Multiobjective Set Covering Problem
(MSCP) \citep{DBLP:conf/cp/BergmanC16,DBLP:conf/cp/SohBTB17}, and
the Multiobjective Development Assurance Problem (MDAP)~\citep{DBLP:conf/safecomp/BieberDS11}.
MSCP is a generalization of the classical set covering problem that consists of
deciding which subsets of a ground set $X$, and pre-specified in a set $A$, to select such
that the total cost associated to selecting the subsets is minimized given that all
elements of $X$ are covered (\ie, each element is in at least one of the selected
subsets).
MDAP encodes different levels of rigor of the development of a software or
hardware component of an aircraft. The Development Assurance Level (DAL) defines the
assurance activities aimed at eliminating design and coding errors that could affect the
safety of an aircraft. The goal is to allocate the smallest DAL to functions in order to
decrease the development costs~\footnote{The benchmark instances are available online at
\url{https://www.lifl.fr/LION9/challenge.php}. Although they define a lexicographic order
to the objective functions, in the context of this paper, we ignore this order and compute
the Pareto frontier.}.

\subsection{Anytime Performance Example}
\label{s:exp:anyt}
\noindent 
To show the potential of \rib and \rcb, we show some results on the anytime performance of 
these algorithms on a $2$-objective instance of MSCP, named ``2scp43A"
\footnote{Instance from ~\url{https://github.com/vOptSolver/vOptLib/blob/master/SCP/scp1998.md}},
for which $n=200$ and $m=40$, and where $l_1=0$, $u_1=21075$, $l_2=0$, and $u_2=21366$.
We compare, against the exact version (\ie, $\varepsilon=0$),
the performance of \rib and \rcb with different initial value of $\varepsilon\in\{1,100\}$,
which is divided by $10$ at each iteration until $\varepsilon < 1e^{-4}$, in which case
$\varepsilon$ is set to $0$. The algorithm runs until it finishes enumerating the Pareto
front, or the algorithm achieves the time limit of $3600$ seconds or the memory limit of 8GB.
For Figure~\ref{f:sc:apC}, we recorded the following information:
each new feasible solution found and the time when it was found, the time when a new
iteration of \rib/\rcb started, and to which value $\varepsilon$ was set.
Figures~\ref{f:sc:apC:hv} shows the quality of the set of solutions found along the run
according to the hypervolume indicator~\citep{ZitzlerT98}. The indicator was computed with the reference point
set to $r=(1,1)$ and after normalizing the objective values of each point by dividing them by
the maximum of the respective objective function plus one, \ie, by 21076 and 21367,
in the case of the  first and the second objectives, respectively.
Figure~\ref{f:sc:apC:eps}
shows the value of $(1+\varepsilon)$ along the run.

\begin{figure}[t]
        \centering
        \begin{subfigure}[b]{0.3\textwidth}
            \centering
            \includegraphics[width=\textwidth]{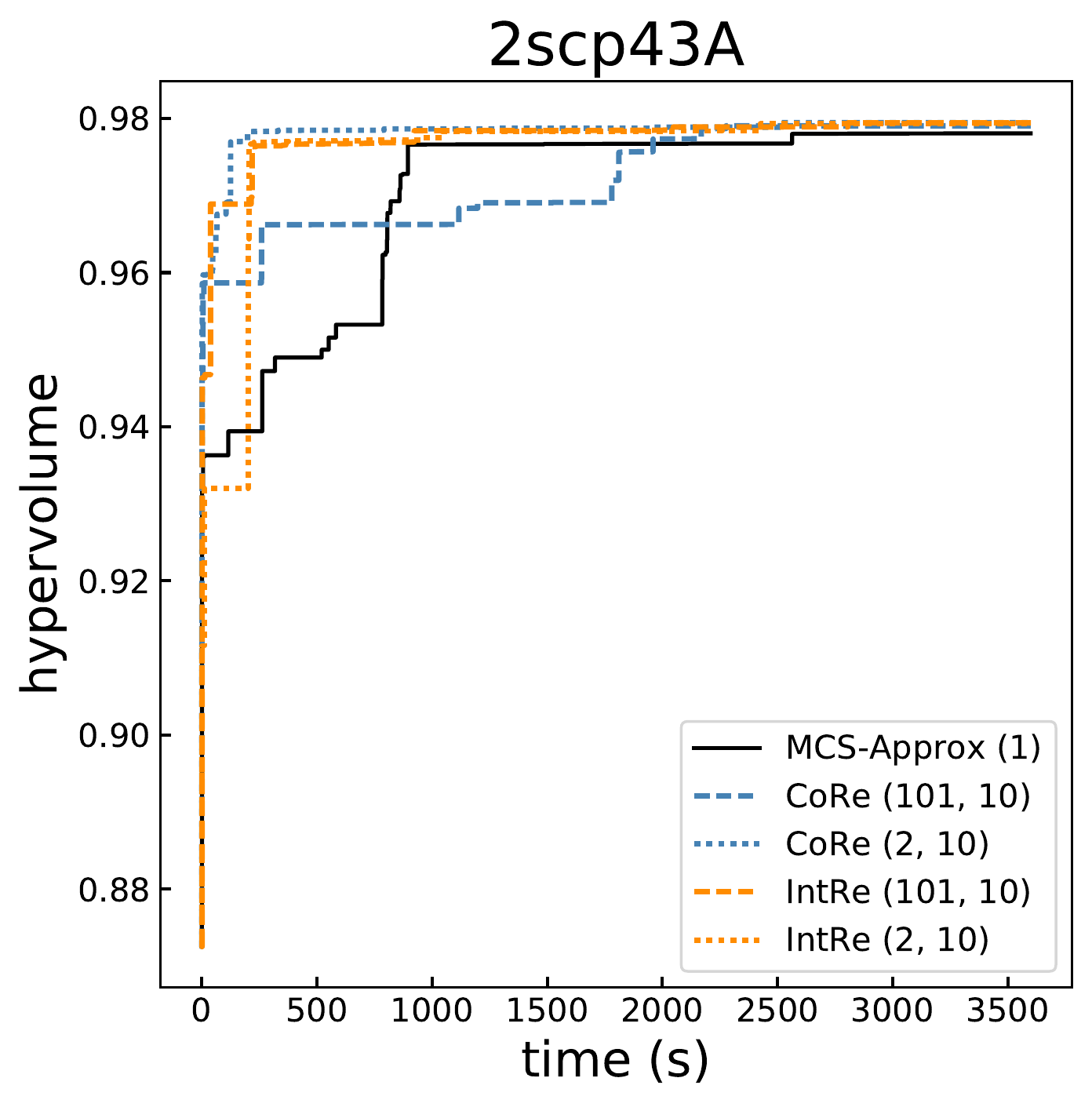}
            \caption[]{}
            \label{f:sc:apC:hv}
        \end{subfigure}
        \begin{subfigure}[b]{0.3\textwidth}  
            \centering 
            \includegraphics[width=\textwidth]{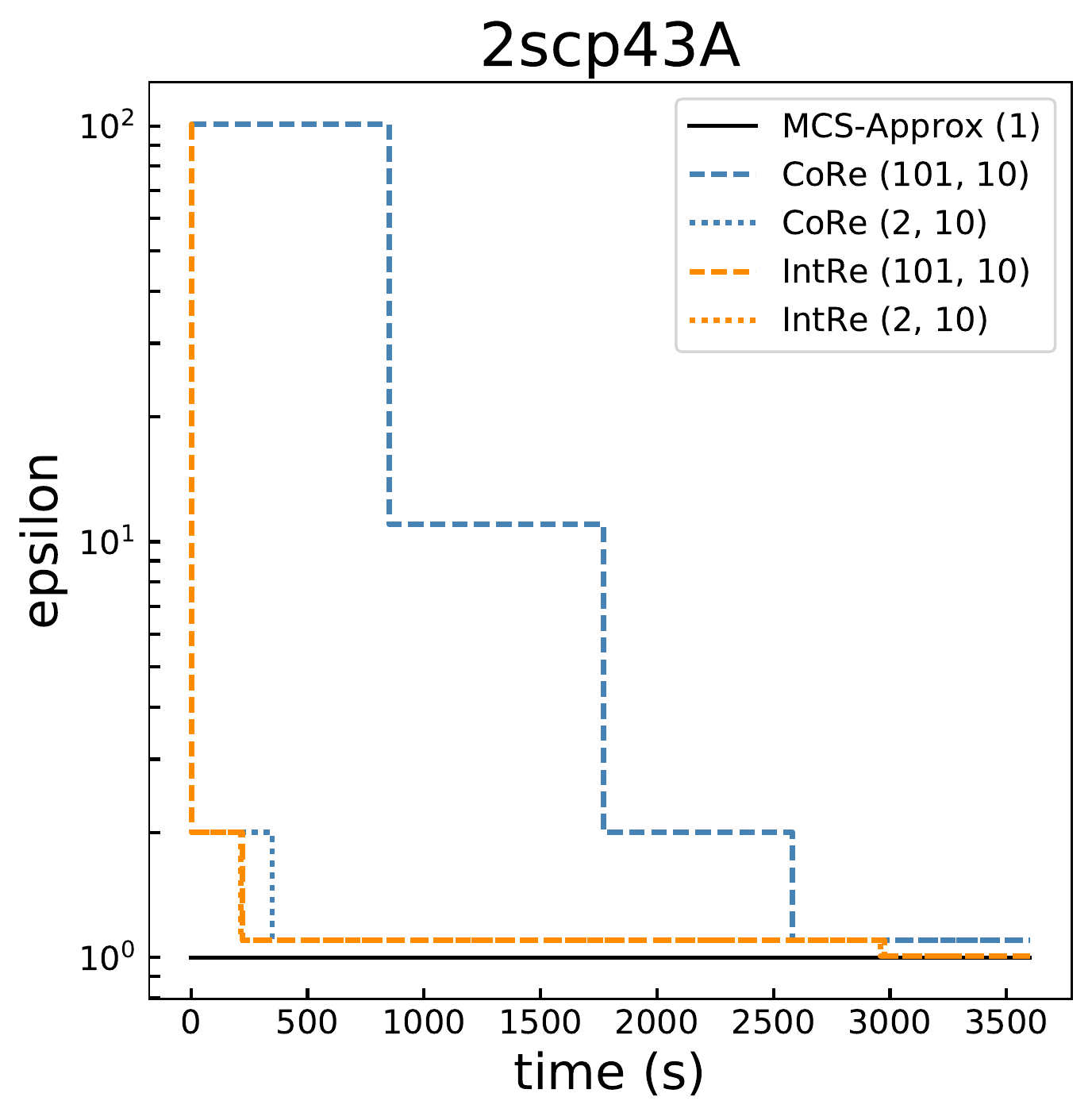}
            \caption[]{}    
            \label{f:sc:apC:eps}
        \end{subfigure}
        \caption{Performance of \rib and \rcb algorithms along the run, at time $t\in[0,3600]$ seconds, on a given MCSP instance:
        \subref{f:sc:apC:hv} the hypervolume indicator of the set of solutions known;
        \subref{f:sc:apC:eps} the value of the current setting of $(1+\varepsilon)$.
        }
        \label{f:sc:apC}
    \end{figure}

Figures~\ref{f:sc:apC:hv} shows that, in this instance, the algorithms that first search
for an approximation of the Pareto front can find better sets of solutions sooner than
the algorithm enumerating all nondominated points from the start. 
For example, the quality of the set of solutions found by $\rib(101,10)$ and 
$\rcb(2,10)$ were always better than $\mcskp(1)$ for any time $t\in[0,3600]$.
Figure~\ref{f:sc:apC:eps} shows that, for \rib, the value of $\varepsilon$ rapidly
decreased to $1$, and therefore, somewhat justifies the high hypervolume values
of the sets of solutions found by $\rib$ early in the run.

Figure~\ref{f:sc:os} 
shows in more detail the sets found by each call of
Algorithm~\ref{alg:approx} within each algorithm tested.
The symbols identify the iteration and the setting of $1+\varepsilon$ for that iteration.
The opaque symbols represent the points found in that iteration, and the faded symbols
represent the lower bound set at the end of the iteration.
For example, in Figure~\ref{f:sc:os:rib:2}, $\rib(2,10)$
found the four blue circle points in the first iteration for
which $1+\varepsilon$ was 2, it found the green triangles filled white points in the
second iteration where $1+\varepsilon=1.1$, and found the yellow-star points in the third
iteration for which $1+\varepsilon=1.01$. The three faded blue circles represent the lower
bound set at the end of the first iteration, and the faded green triangles represent
the lower bound set at the end of the second iteration.
Note that the $(1+\varepsilon)$-approximation set at the end of the $i$-th iteration
is the set of points found in the first $i$ iterations (excluding the dominated ones).
Hence, at the end of the second iteration of $\rib(2,10)$, the set $\Ac$ contains the
points represented by triangles, plus the points represented by circles and which are
not dominated by the triangles.

Figure~\ref{f:sc:os:mcskp} illustrates the nondominated points found
by $\mcskp(1)$. Note that the algorithm did not finish, and therefore, the Pareto front
may be incomplete. The other plots show, from iteration to iteration, how both \rib and \rcb
narrow down the region where the Pareto front is located, and show that, even by setting
$\varepsilon$ to large values, allows to exclude a large portion of the objective space.
Note that, in this instance, $\rcb(101,10)$ found points closer to the Pareto front than
$\rib(101,10)$ did when considering the same iteration. However, since more points were
needed to guarantee the approximation factor by $\rcb$, it took more time to achieve the
same hypervolume values that $\rib$ achieved.

\begin{figure}[t]
        \centering
        \begin{subfigure}[b]{0.3\textwidth}  
            \centering 
            \includegraphics[width=\textwidth]{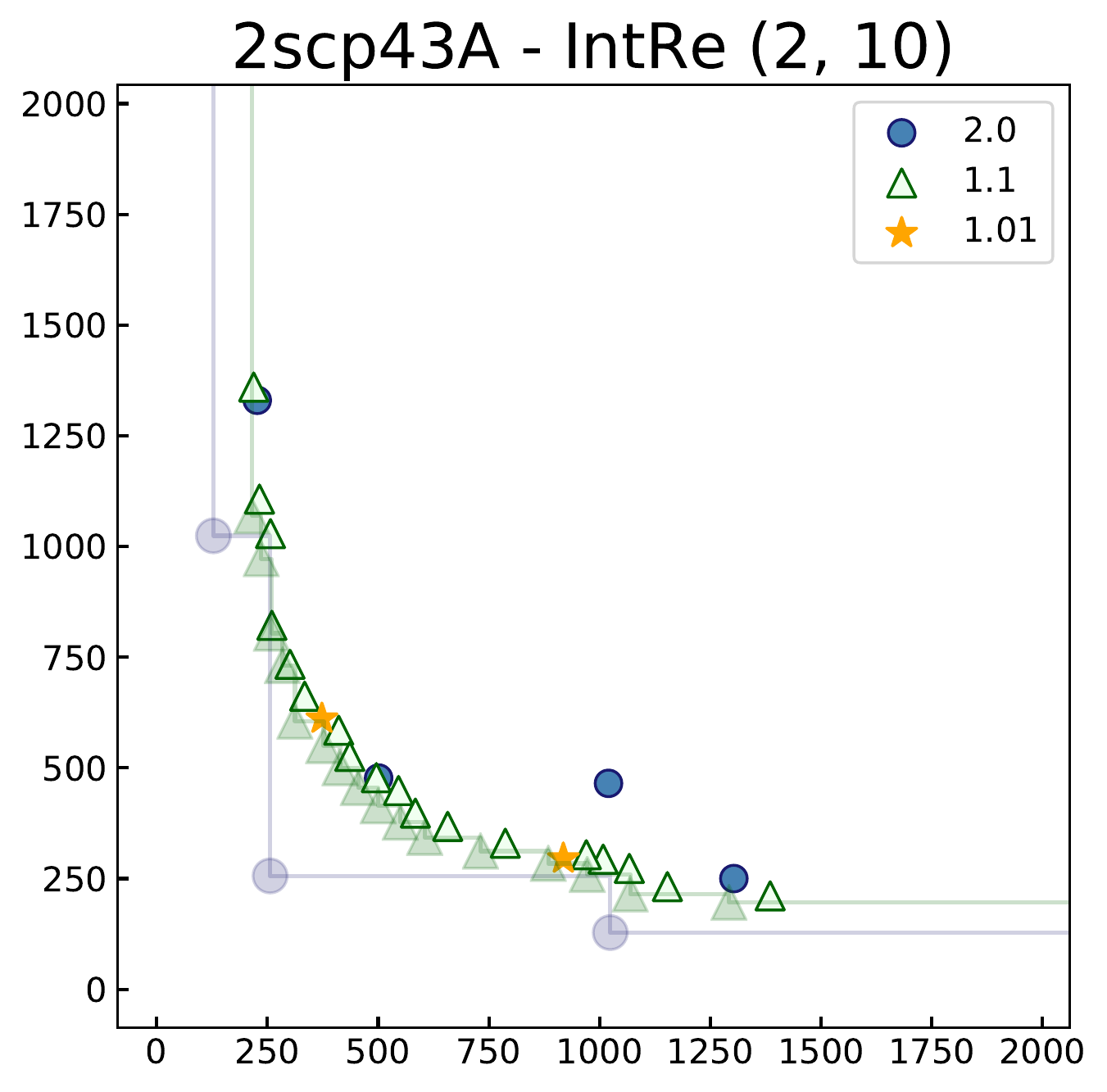}
            \caption[]{}    
            \label{f:sc:os:rib:2}
        \end{subfigure}
        \begin{subfigure}[b]{0.3\textwidth}  
            \centering 
            \includegraphics[width=\textwidth]{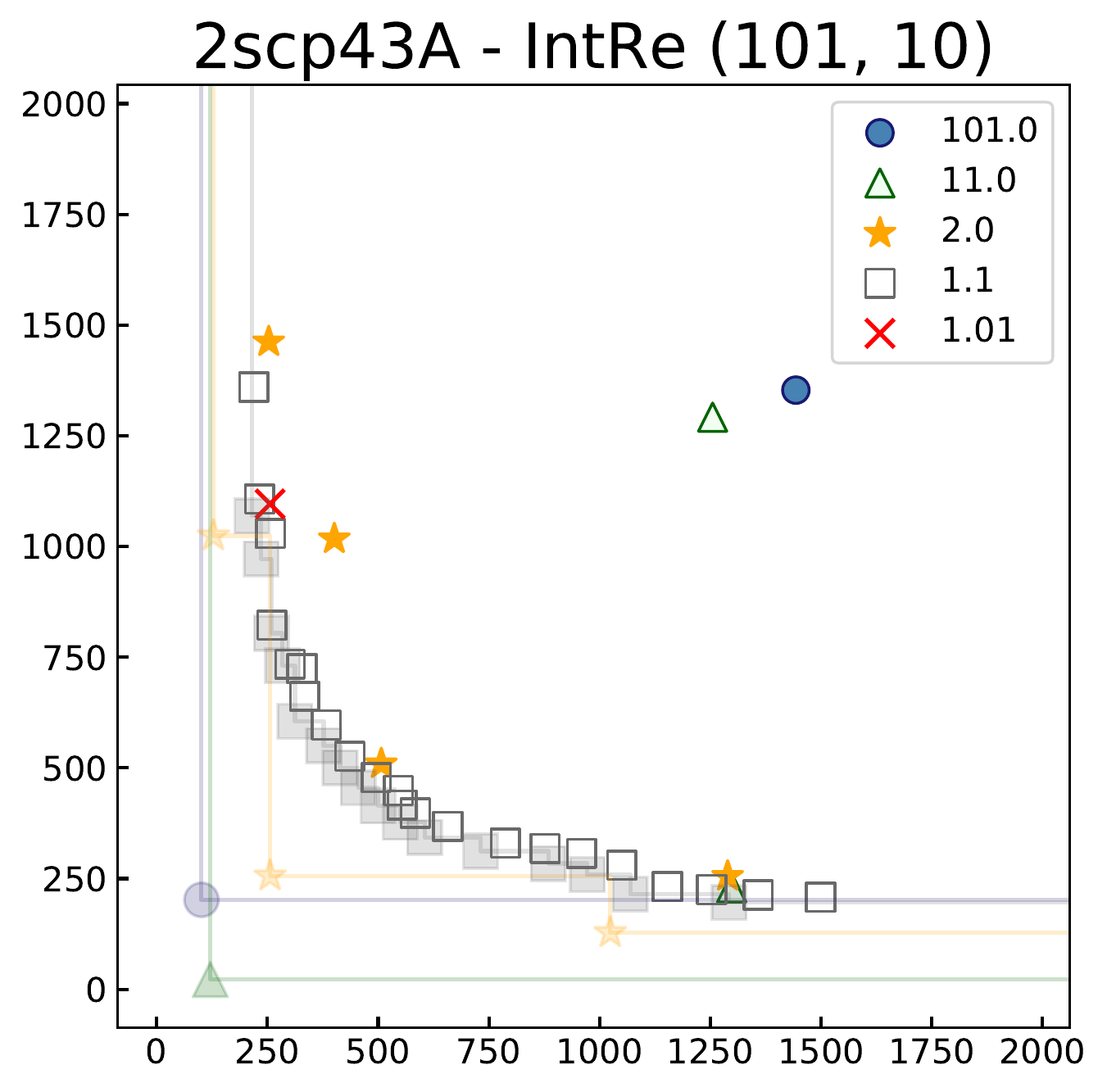}
            \caption[]{}
            \label{f:sc:os:rib:101}
        \end{subfigure}
        \begin{subfigure}[b]{0.3\textwidth}
            \centering
            \includegraphics[width=\textwidth]{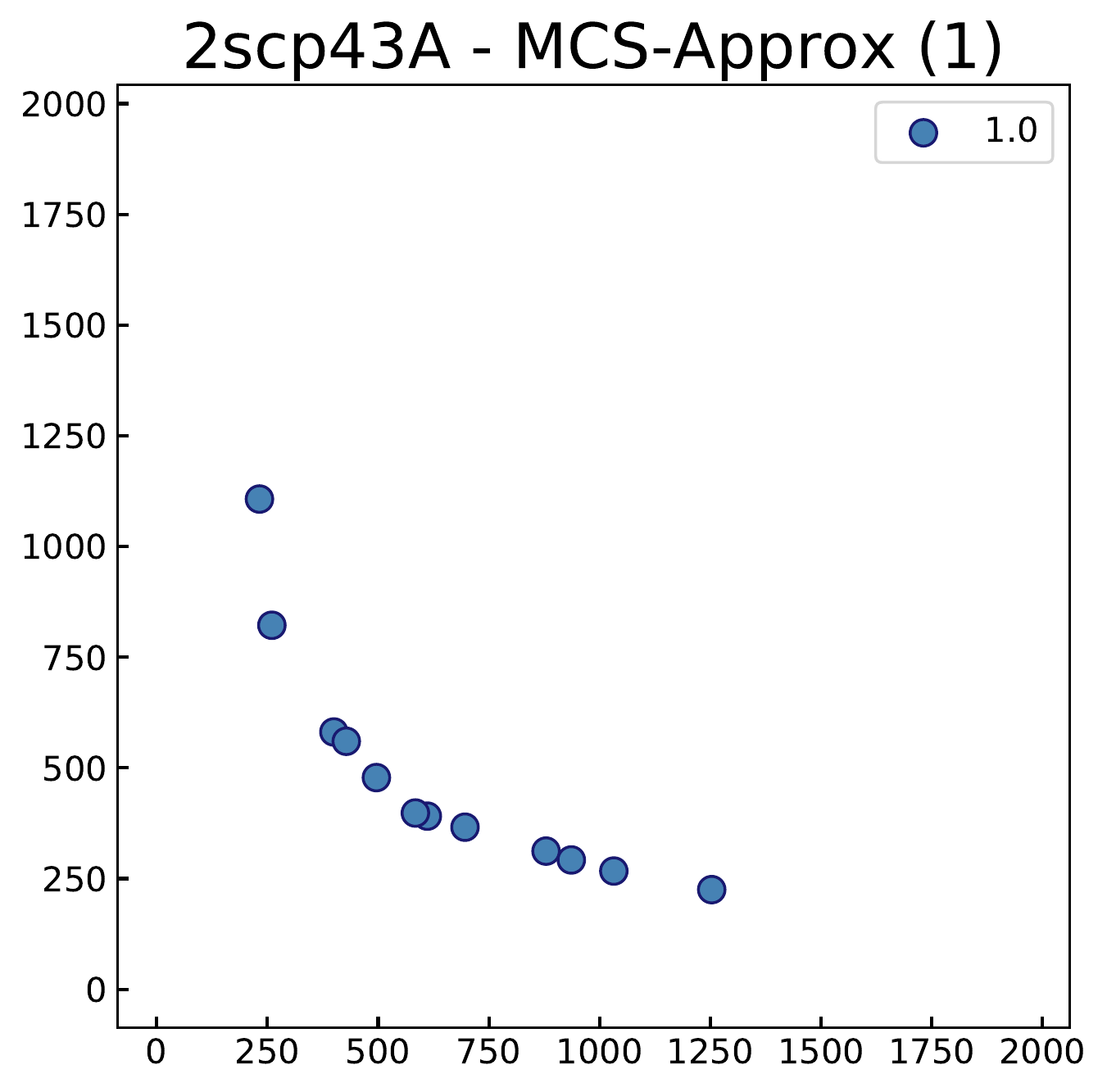}
            \caption[]{}    
            \label{f:sc:os:mcskp}
        \end{subfigure}
        \\
        \begin{subfigure}[b]{0.3\textwidth}  
            \centering 
            \includegraphics[width=\textwidth]{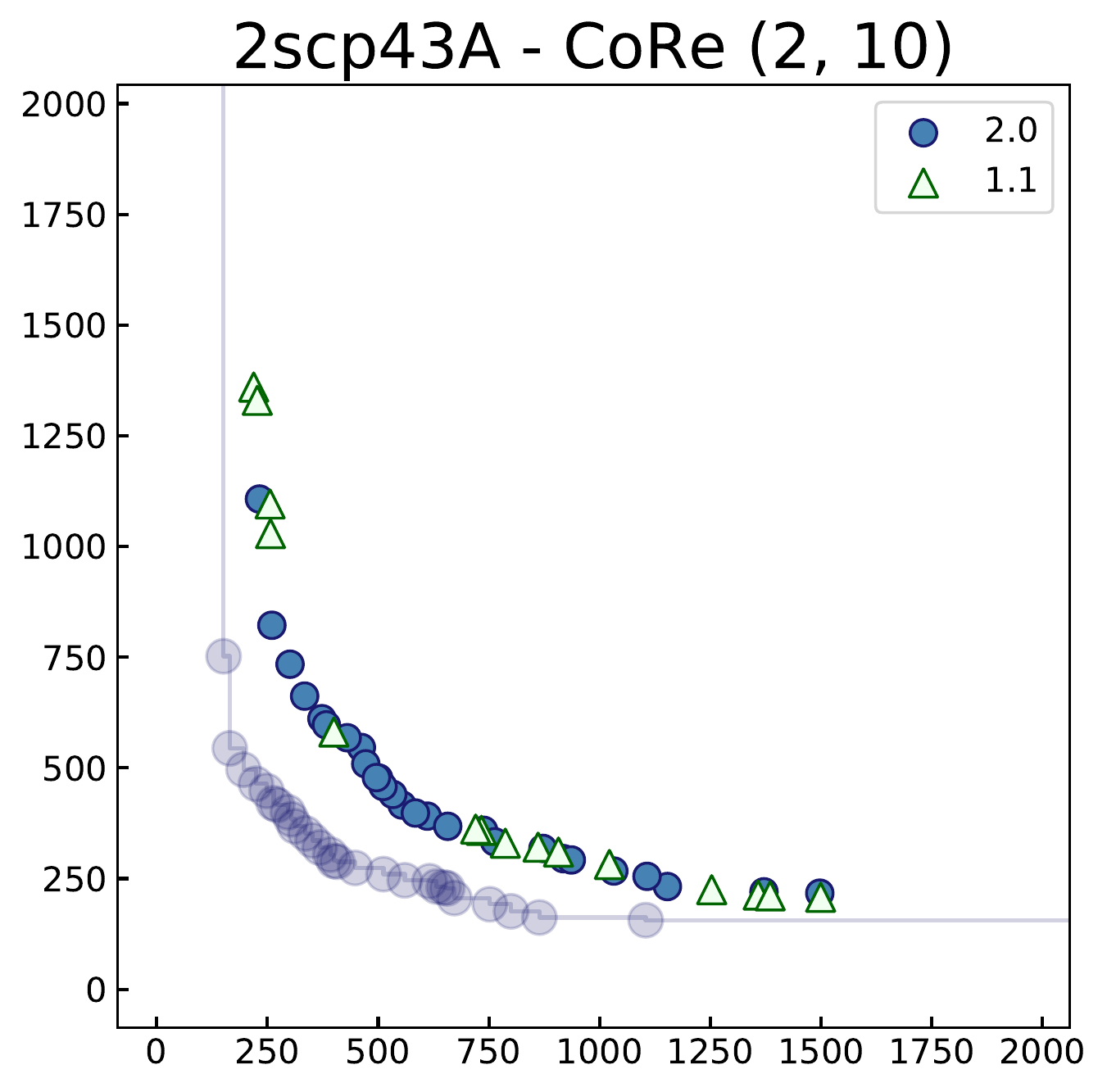}
            \caption[]{}
            \label{f:sc:os:rcb:2}
        \end{subfigure}
        \begin{subfigure}[b]{0.3\textwidth}  
            \centering 
            \includegraphics[width=\textwidth]{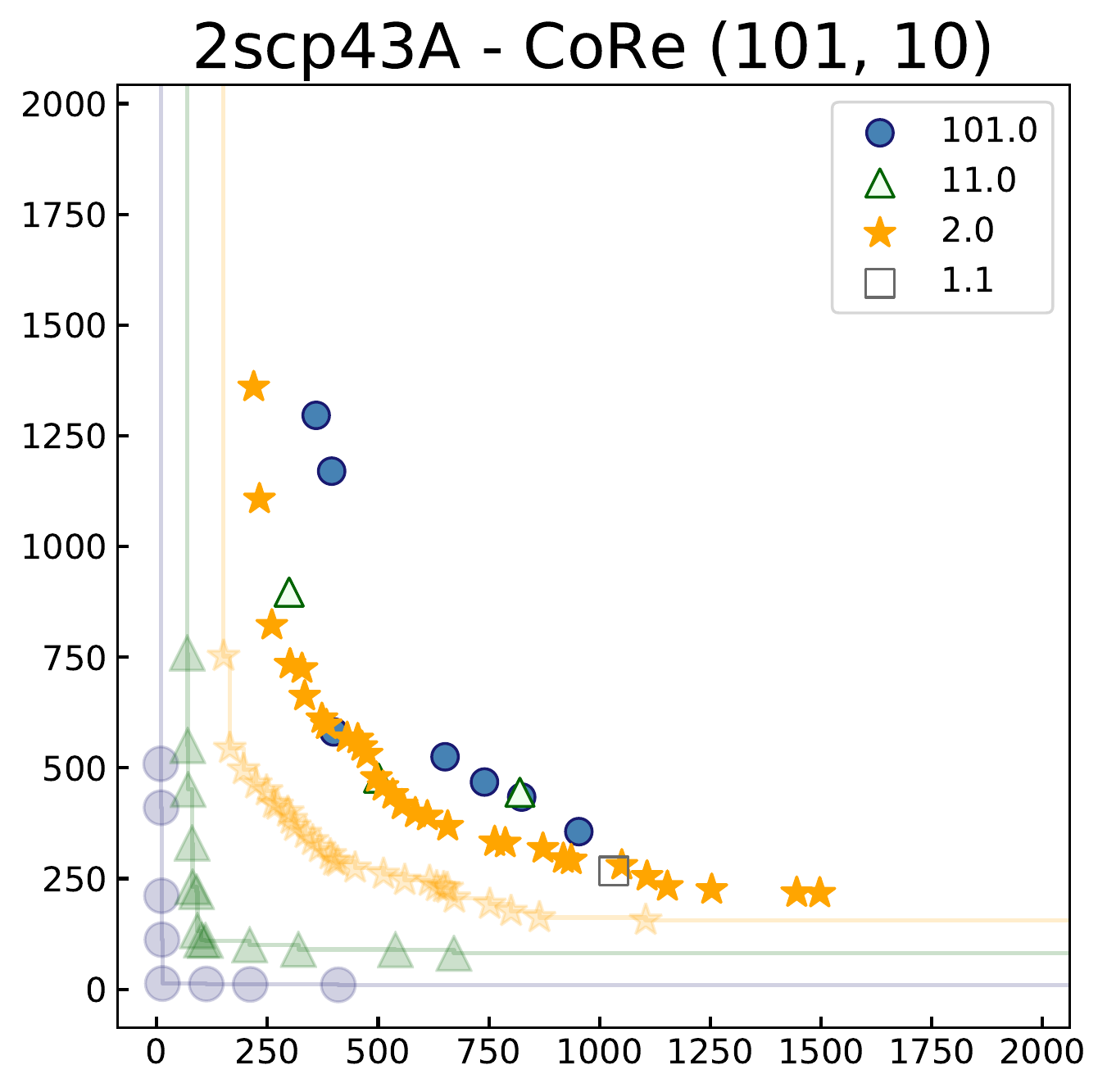}
            \caption[]{}    
            \label{f:sc:os:rcb:101}
        \end{subfigure}
        \hspace{0.3\textwidth}  
        \caption{Illustration of the points found by each algorithm at each iteration.
        The name of the algorithm is in the title of the plot, and the legend indicates
        the symbol used to represent the points found in each iteration,
        and the setting of $(1+\varepsilon)$ on that iteration.} 
        \label{f:sc:os}
    \end{figure}

These results illustrate the potential of \rib and \rcb. In particular, they show that
both \rib and \rcb can outperform algorithms that focus on finding points from the
Pareto frontier from the start, with respect to the quality of the set, and at any 
point in time.
However, the observed behavior might not generalize for other types of problems 
or problem instances. This is discussed in more detail in the next section.

\subsection{Algorithm Comparison}
\label{s:exp:comp}

\noindent The proposed algorithms are evaluated next on multiple instances of MSCP
(see Section~\ref{s:exp:comp:mscp}) and of MDAP (see Section~\ref{s:exp:comp:mdap}).
The size of the formulas needed in each of the proposed algorithms with different settings
are compared, and the quality of the approximation sets computed by them within a time budget is
evaluated and compared against those obtained with state-of-the art algorithms.

\subsubsection{Multiobjective Set Covering Problem}
\label{s:exp:comp:mscp}

\noindent For MSCP,  we used the instances~\footnote{Instances
at~\url{http://www.andrew.cmu.edu/user/vanhoeve/mdd/code/multiobjective\_cp2016.tar.gz}}
by~\cite{DBLP:conf/cp/BergmanC16}.
An instance was randomly selected for each triple $(n,m,p)$ of $n\in\{100,150\}$ variables, 
$m\in\{n-20,n-30,\ldots,n-90\}$ constraints,
and $p\in\{3,4\}$ objectives for a total of 48 instances.
In these instances, each constraint involves 5 non-zero coefficients,
and each coefficient in the first objective function is $1$, and
is an integer between $1$ and $100$ in the remaining objective functions.
The \rib and \rcb algorithms were tested in these instances with different settings
(different starting value of $\varepsilon\in\{1.1,2,11,101\}$) and
the most relevant to the discussion are shown in Figures~\ref{f:sc:comp:hv}
and~\ref{f:sc:comp:eps}. These algorithms are compared against a state-of-the-art algorithm
based on binary decision diagrams~\citep{DBLP:conf/cp/BergmanC16}
which is here referred to as \bdd.

\begin{figure}[t]
        \centering
        \begin{subfigure}[b]{0.4\textwidth}
            \centering
            \includegraphics[width=\textwidth]{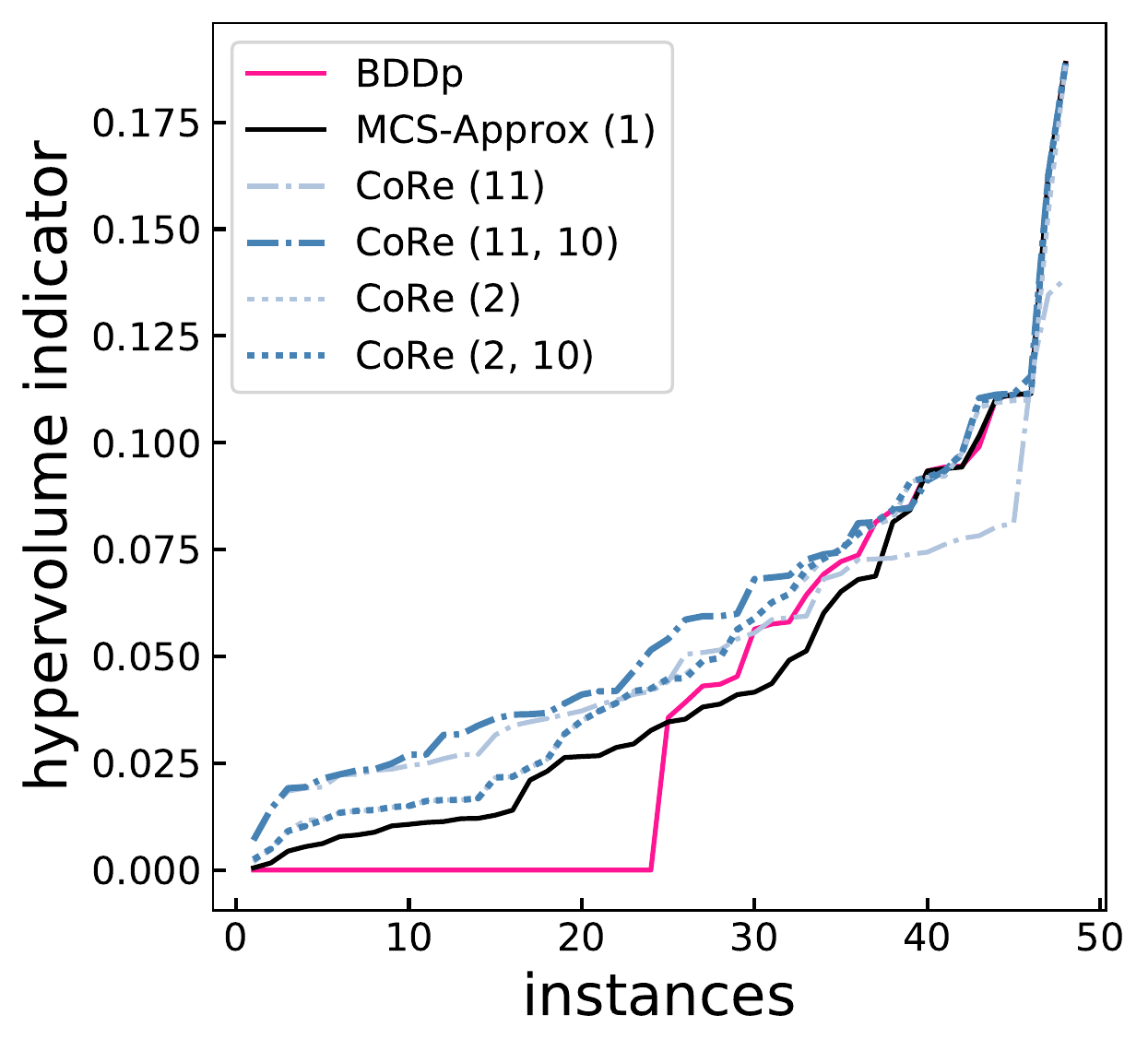}
            \caption[]{}    
            \label{f:sc:comp:hv:rcb}
        \end{subfigure}
        \begin{subfigure}[b]{0.4\textwidth}
            \centering
            \includegraphics[width=\textwidth]{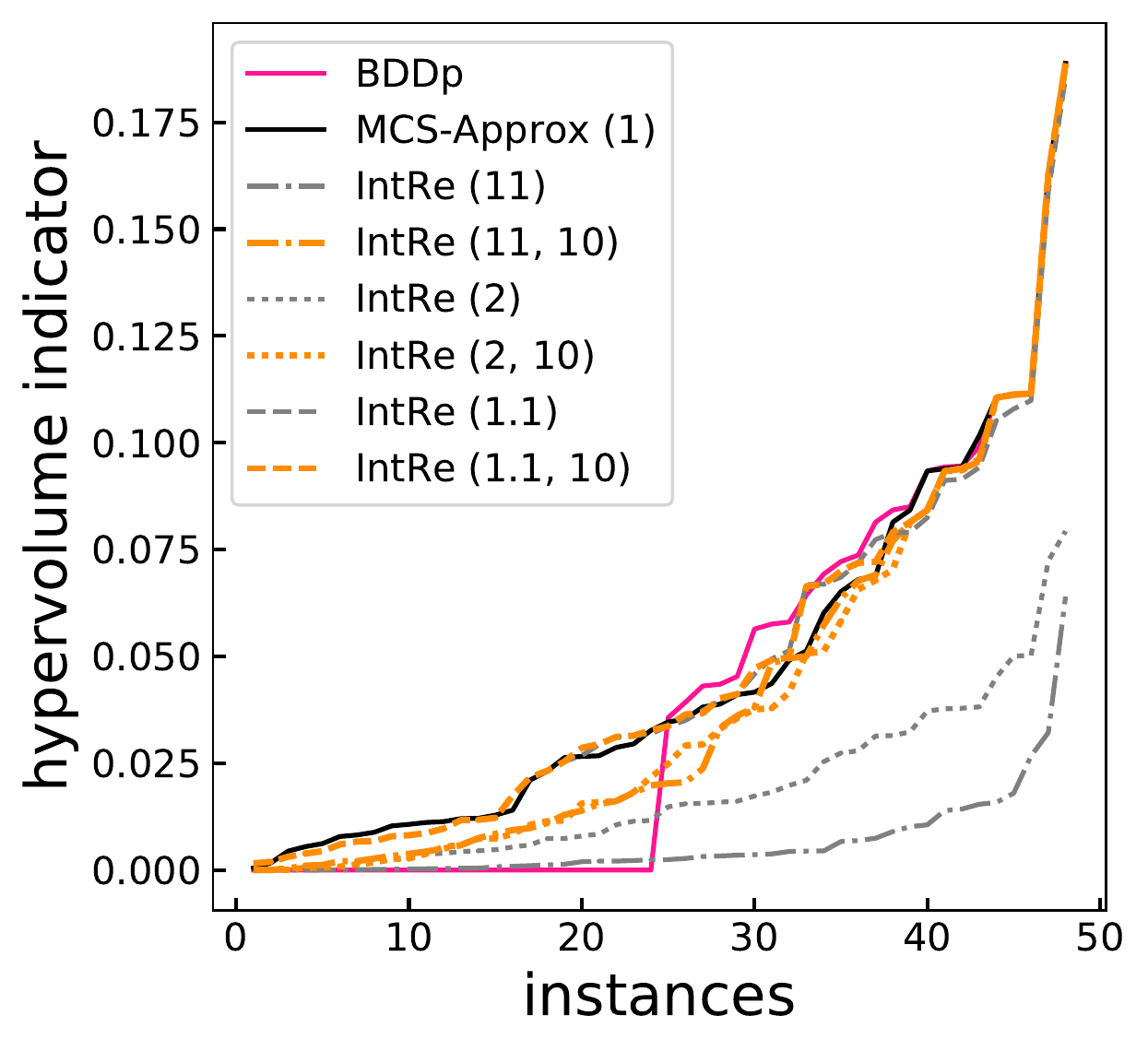}
            \caption[]{}    
            \label{f:sc:comp:hv:rib}
        \end{subfigure}
        \caption{Hypervolume indicator of the output sets obtained with: \subref{f:sc:comp:hv:rcb} $\rcb$ algorithms; and \subref{f:sc:comp:hv:rib} \rib
        algorithms on MSCP instances.} 
        \label{f:sc:comp:hv}
    \end{figure}

\begin{figure}[t]
        \centering
        \begin{subfigure}[b]{0.3\textwidth}  
            \centering 
            \includegraphics[width=\textwidth]{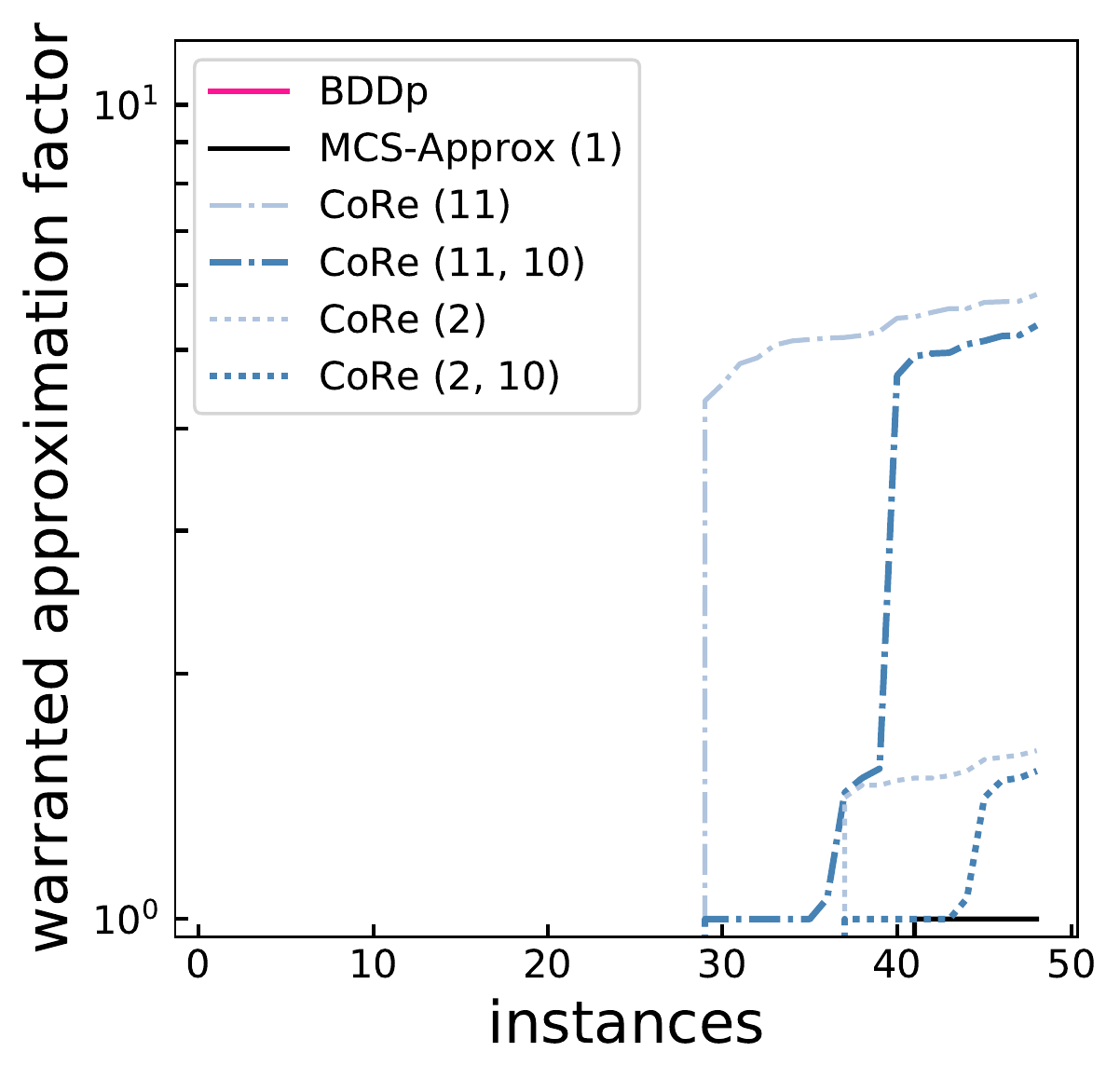}
            \caption[]{}    
            \label{f:sc:comp:eps:o:rcb}
        \end{subfigure}
        \begin{subfigure}[b]{0.3\textwidth}  
            \centering 
            \includegraphics[width=\textwidth]{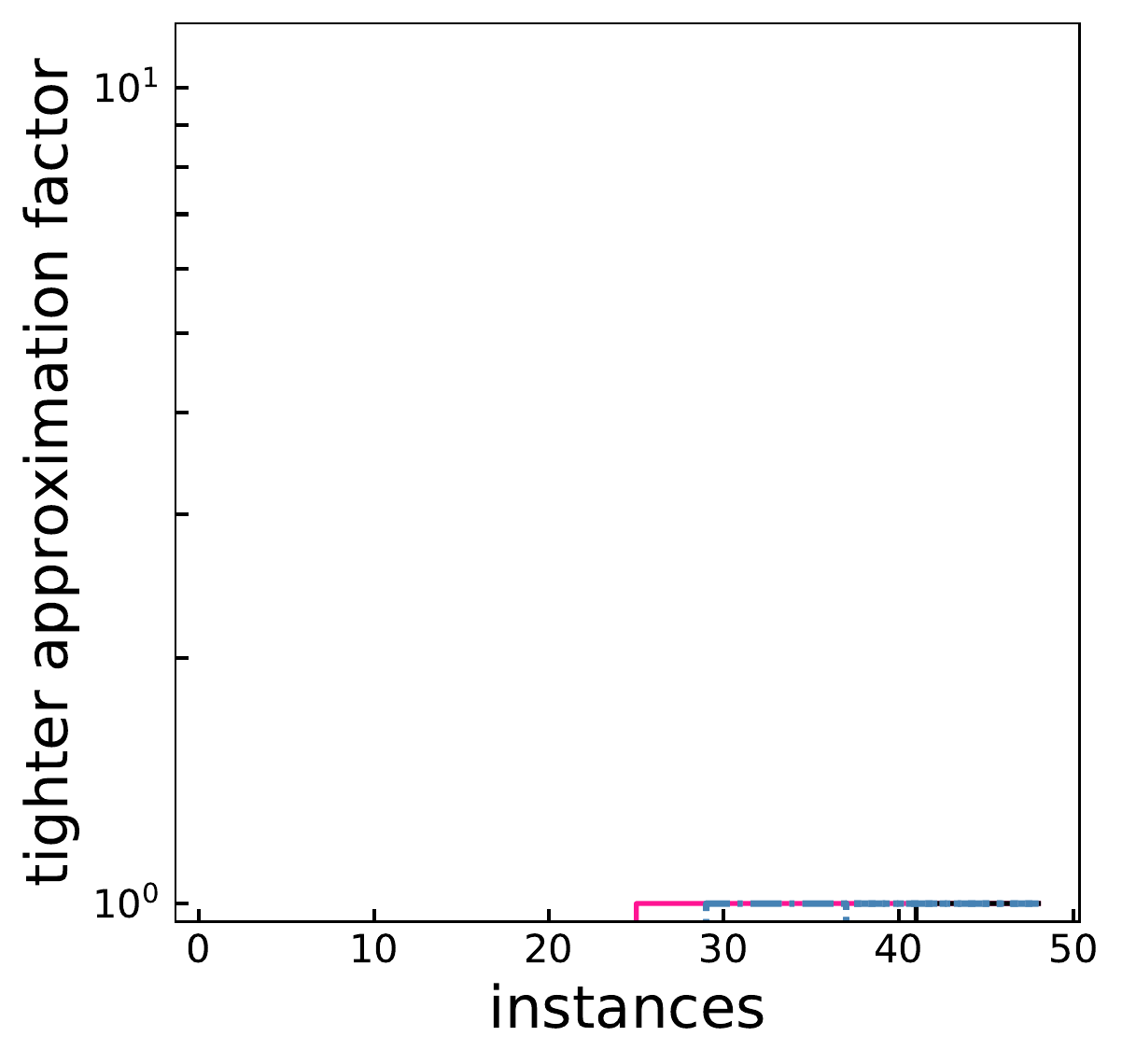}
            \caption[]{}    
            \label{f:sc:comp:eps:r:rcb}
        \end{subfigure}
        \begin{subfigure}[b]{0.3\textwidth}  
            \centering 
            \includegraphics[width=\textwidth]{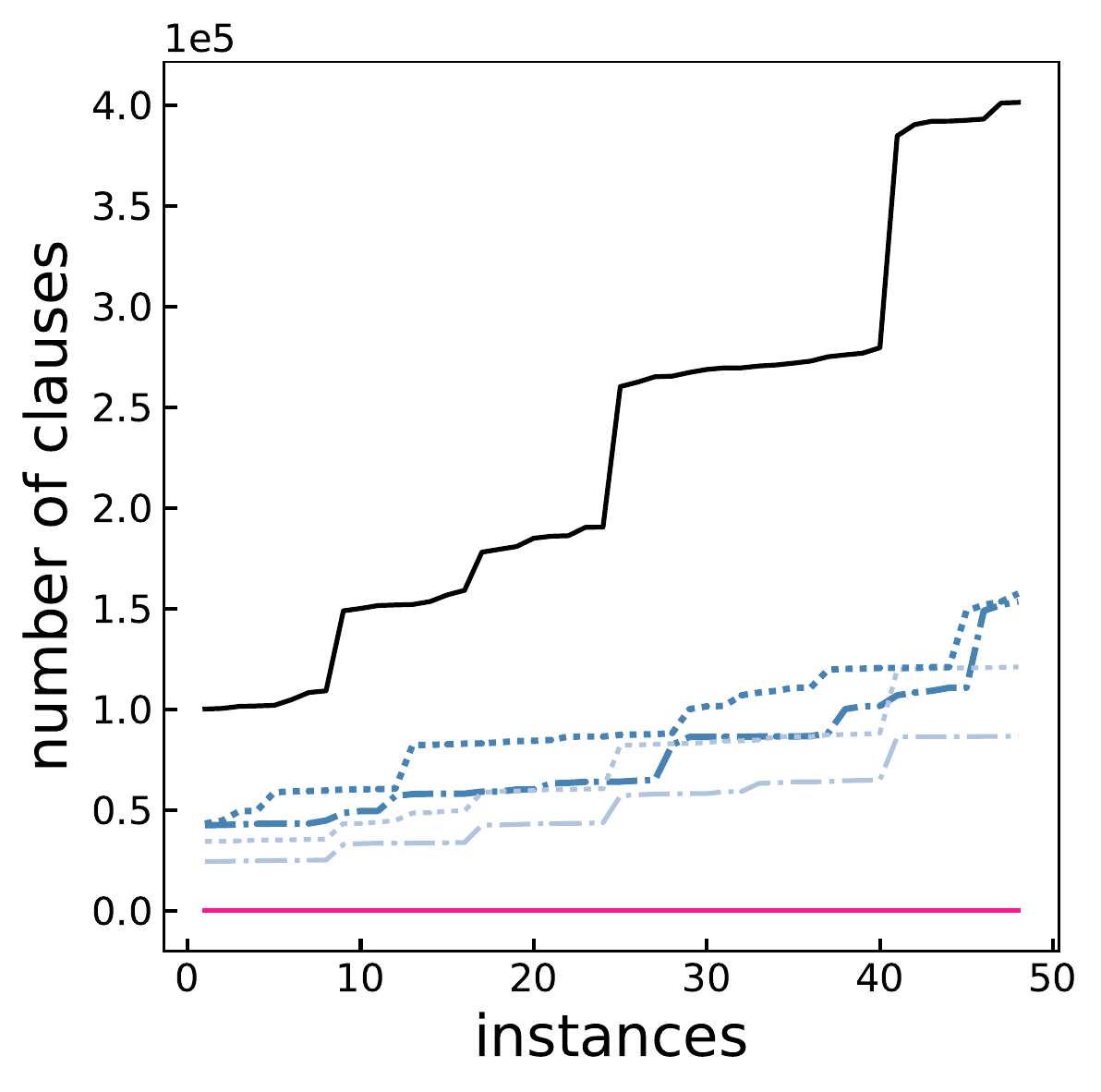}
            \caption[]{}    
            \label{f:sc:comp:nc:rcb}
        \end{subfigure}
        \\
        \begin{subfigure}[b]{0.3\textwidth}  
            \centering 
            \includegraphics[width=\textwidth]{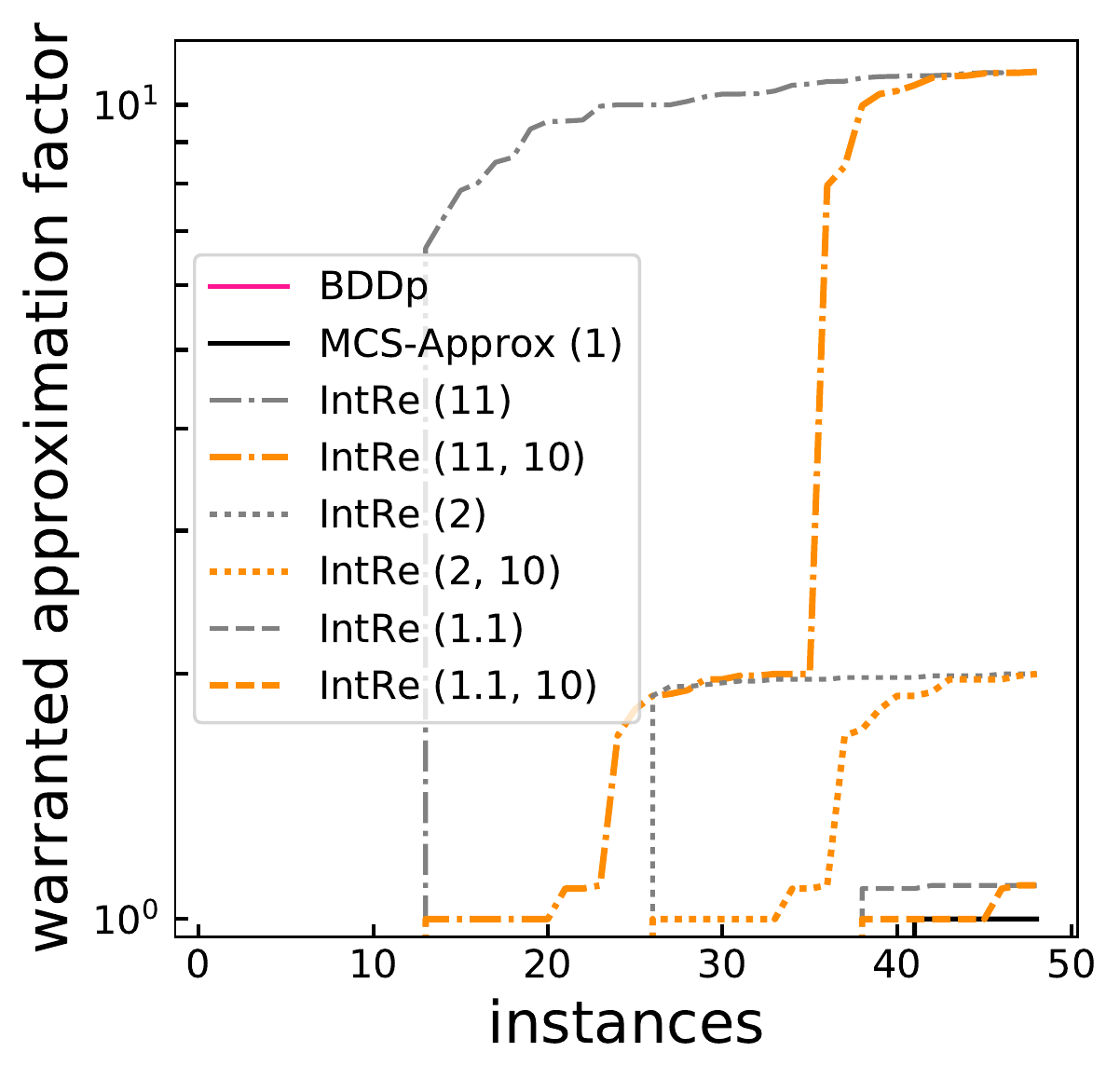}
            \caption[]{}    
            \label{f:sc:comp:eps:o:rib}
        \end{subfigure}
        \begin{subfigure}[b]{0.3\textwidth}  
            \centering 
            \includegraphics[width=\textwidth]{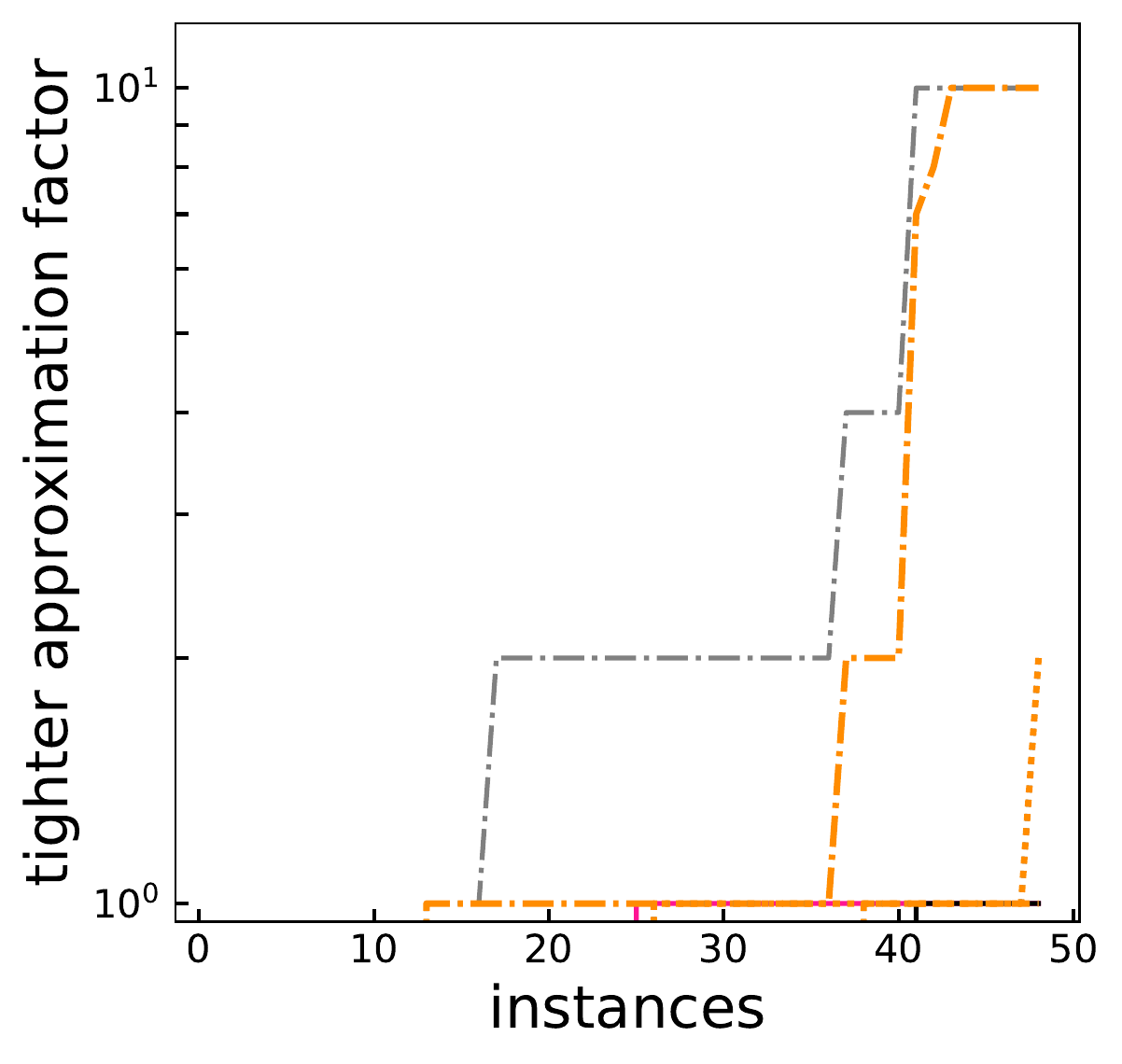}
            \caption[]{}    
            \label{f:sc:comp:eps:r:rib}
        \end{subfigure}
        \begin{subfigure}[b]{0.3\textwidth}  
            \centering 
            \includegraphics[width=\textwidth]{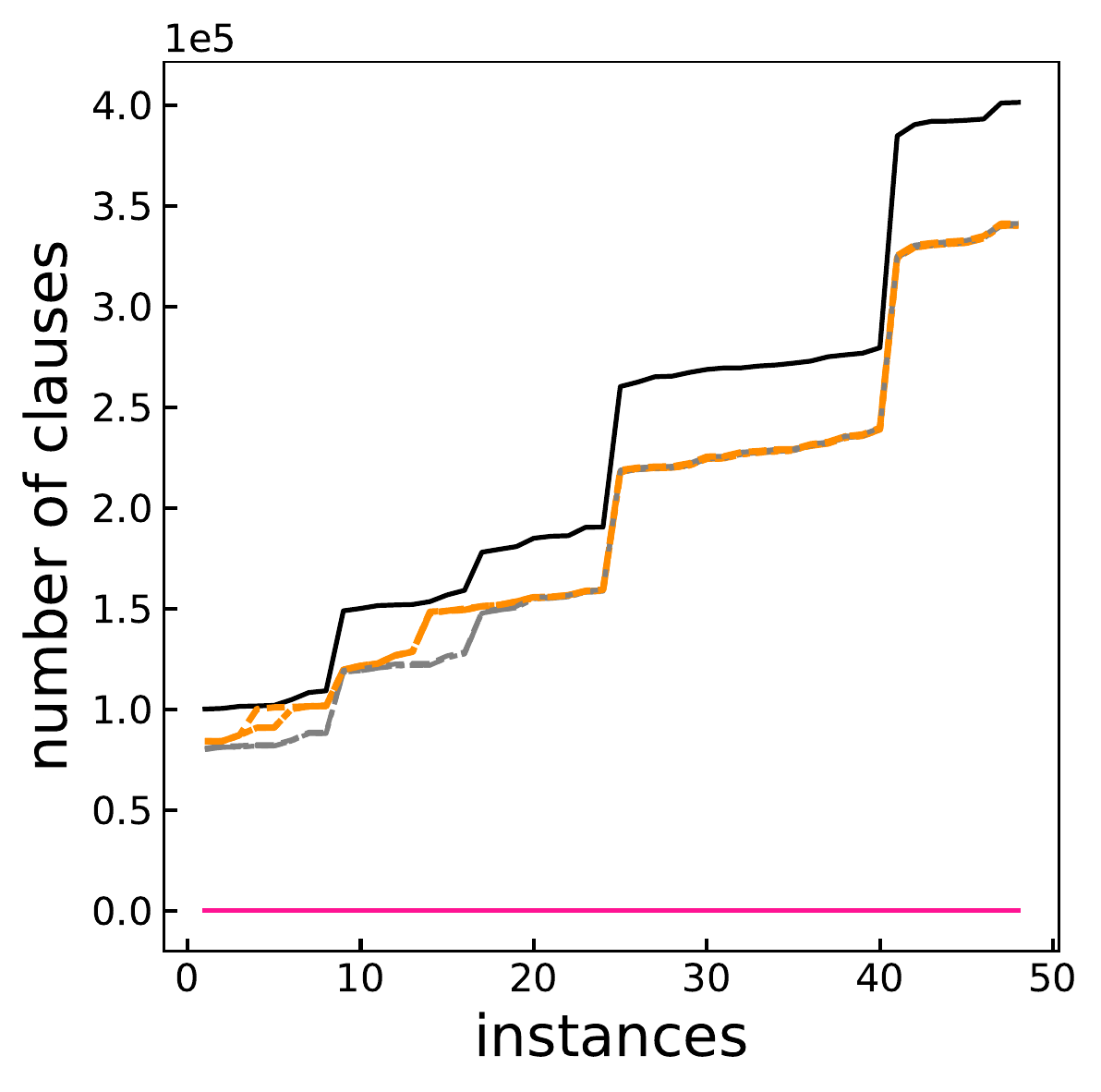}
            \caption[]{}    
            \label{f:sc:comp:nc:rib}
        \end{subfigure}
        \caption{Results on the approximation factor (first two columns) of the output sets,
        and the number of clauses needed to encode the objective functions (last column)
        by the \rcb algorithms (first row) and by the \rib algorithms (last row) on the
        MSCP instances.}
        \label{f:sc:comp:eps}
    \end{figure}

The results in the figures evaluate the algorithms
outputs and total encoding sizes considering
a time limit of 10 minutes per instance and a memory limit of 8GB.
The figures show the empirical cumulative distributive function over the number of instances
for: the hypervolume indicator of the output sets (Figures~\ref{f:sc:comp:hv:rcb} and~\ref{f:sc:comp:hv:rib});
the total number of clauses for the encoding of the objective functions
(Figures~\ref{f:sc:comp:nc:rcb} and~\ref{f:sc:comp:nc:rib});
the warranted approximation factor given by the $\epsilon$-indicator value for the output sets
considering the respective lower bound set, if known
(Figures~\ref{f:sc:comp:eps:o:rcb} and~\ref{f:sc:comp:eps:o:rib});
and a tighter approximation factor given by the smallest $\epsilon$-indicator value obtained considering the Pareto front (if known) and the lower bound sets returned by all algorithms
(Figures~\ref{f:sc:comp:eps:r:rcb} and~\ref{f:sc:comp:eps:r:rib}).
Hence, for each algorithm, the value $t$ for the second axis at value $i\in\{1,\ldots,48\}$
of the first axis, indicates that the evaluation for $i$ instances was less or equal than $t$,
and was greater than $t$ for $48-i$ instances. Therefore, the closer the line is to the
top and to the left of the plot, the better.
 
Figures~\ref{f:sc:comp:hv:rcb} and~\ref{f:sc:comp:hv:rib} show the hypervolume indicator
after normalizing the objective values of the points in the output sets,
and computed considering the reference point $r=(1,\ldots,1)$.
The normalization was performed by considering the largest values for each coordinate among
all output sets from all algorithms and multiplying them by 1.1.
This ensures that, when considering
$r=(1,\ldots,1)$ as the reference point, all points in the output sets contribute to the
hypervolume indicator. 
Figure~\ref{f:sc:comp:hv:rcb} shows that $\rcb(2)$ and $\rcb(11)$ already
perform better than $\mcskp(1)$, \ie, when \rcb searches for an approximation set and stops
after the first iteration, the returned approximation sets
(for $\varepsilon=1$ and $\varepsilon=10$) have greater hypervolume value
than the set of nondominated solutions found by $\mcskp(1)$ in these instances.
The reason for this seems to be related to the fact that the true approximation factor
(\ie, the $\epsilon$-indicator value with the Pareto front as the reference set)
of the returned approximation sets seems to be very close to 1 even though the initial
$\varepsilon$ value may be high. This can be observed in Figures~\ref{f:sc:comp:eps:o:rcb}
and Figure~\ref{f:sc:comp:eps:r:rcb}, where Figure~\ref{f:sc:comp:eps:r:rcb} shows that,
for approximately 20 instances, the tighter approximation factor is very close to 1 and much
smaller than the approximation factor warranted by the algorithms, as observed
in Figure~\ref{f:sc:comp:eps:r:rcb}.
If the \rcb algorithm is allowed to continue to run for tighter approximation factors then the
quality of the returned sets may be improved, and this is visible in the case of $\rcb(11,10)$.
The hypervolume values achieved with \rcb indicate that, even though the points
in the returned sets may not belong to the Pareto front, such sets dominate regions of
the objective space that are not dominated by $\mcskp(1)$. 

The \rib did not perform as well as \rcb in these instances
(see Figures~\ref{f:sc:comp:hv:rcb} and~\ref{f:sc:comp:hv:rib}),
even though it was able to warrant an approximation factor for more instances than \rib
(see Figure~\ref{f:sc:comp:eps:o:rib}), but these factors were not as good as those warranted
by \rcb (see Figures~\ref{f:sc:comp:eps:r:rib} and~\ref{f:sc:comp:eps:r:rcb}).
One reason for this observation may the fact that \rcb is considering the exact
representation of the first objective function (because all coefficients are 1)
and therefore, the approximation factor for the first objective is one.
Another advantage of \rcb over \rib in these instances was the much smaller number
of clauses needed by the former to encode the objective functions, and even though one of
them is encoded in an exact way, as can be observed in Figures~\ref{f:sc:comp:nc:rcb}
and~\ref{f:sc:comp:nc:rib}.

Although \bdd solved, within the time limit, about half of the instances
(see Figure~\ref{f:sc:comp:eps:r:rcb}) including all instances with $n=100$,
for the other half it was not able to finish within the time and memory limits.
For this reason, in the overall, \rcb(11,10) performed much better than \bdd.

\subsubsection{Development Assurance Problem}
\label{s:exp:comp:mdap}

\noindent The algorithms were ran on 48 instances%
\footnote{\url{https://www.cristal.univ-lille.fr/LION9/sampleB.gz}}
of MDAP, which contain 7000 to 18000 variables, 20000 to 70000 constraints, and 7 objective functions.
Since all coefficients of all objective functions are either -1 or 1 in these instances, only \rib
was tested and compared against a state-of-the-art logic-based
algorithm~\citep{DBLP:conf/sat/Terra-NevesLM17}, referred here as \pmcs.

Figure~\ref{f:devAVA} shows plots analogous to those shown in Section~\ref{s:exp:comp:mscp}.
It shows \rib with two initial settings for $\varepsilon$: 1.1 and 101. Though 101 is a very
large value for $\varepsilon$, this allowed \rcb to find, for most of the instances,
an approximation set within this approximation factor (and sometimes providing a much tighter one). This was much more difficult to achieve for an initial setting of 1.1 (see \rib(1.1) and \rib(101) in Figures~\ref{f:devAVA:eps:o} and~\ref{f:devAVA:eps:r}).
Consequently, \rib(101,10) was able to really improve upon the approximation sets found by
\rib(101), and to outperform \mcskp and \pmcs
(see Figures~\ref{f:devAVA:hv} to~\ref{f:devAVA:eps:r}).
The only downside is that \rib needed almost the same number of clauses as \mcskp(1)
to encode the objective functions.

\begin{figure}[t]
        \centering
        \begin{subfigure}[b]{0.24\textwidth}
            \centering
            \includegraphics[width=\textwidth]{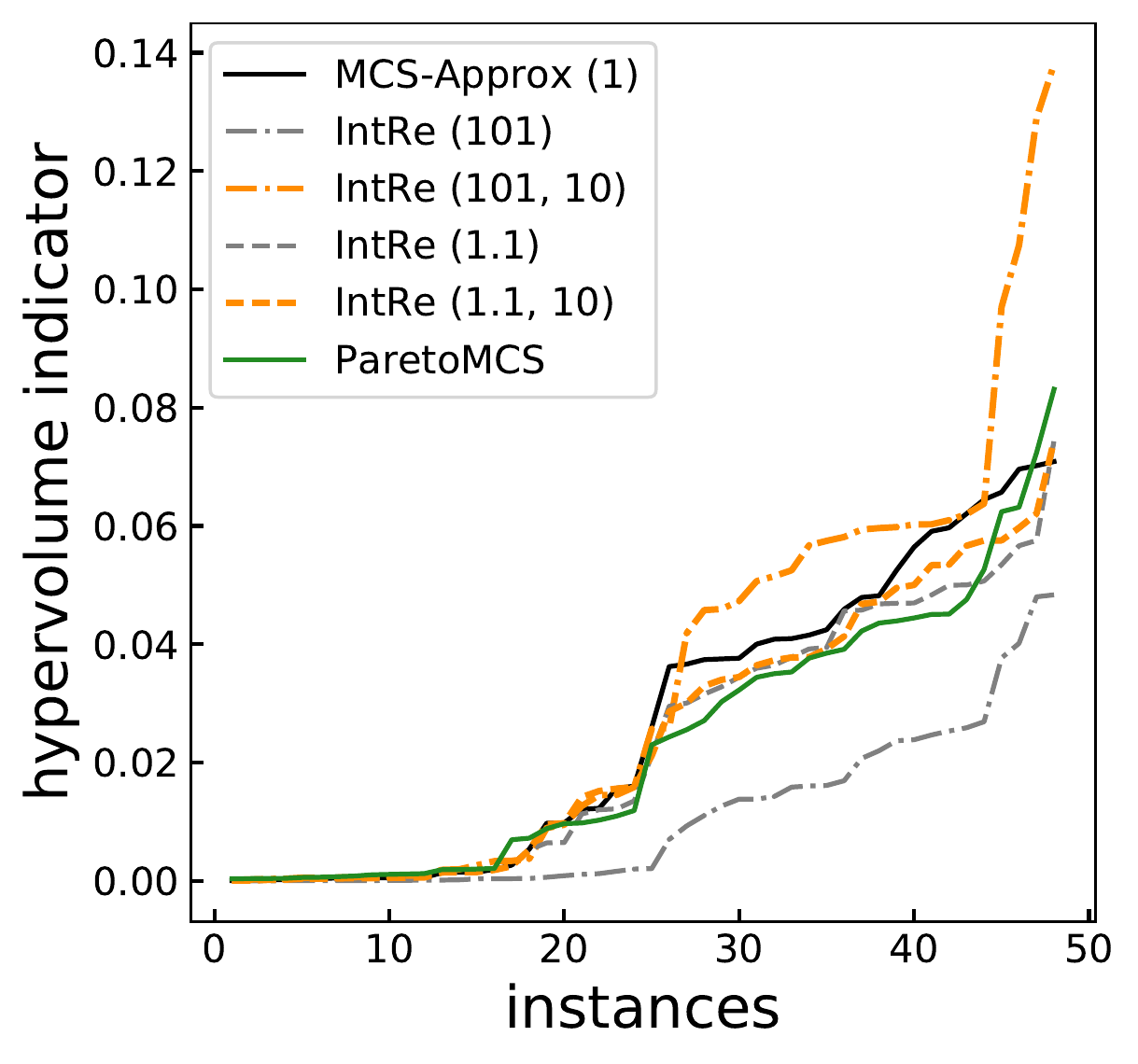}
            \caption[]{}    
            \label{f:devAVA:hv}
        \end{subfigure}
        \begin{subfigure}[b]{0.24\textwidth}  
            \centering 
            \includegraphics[width=\textwidth]{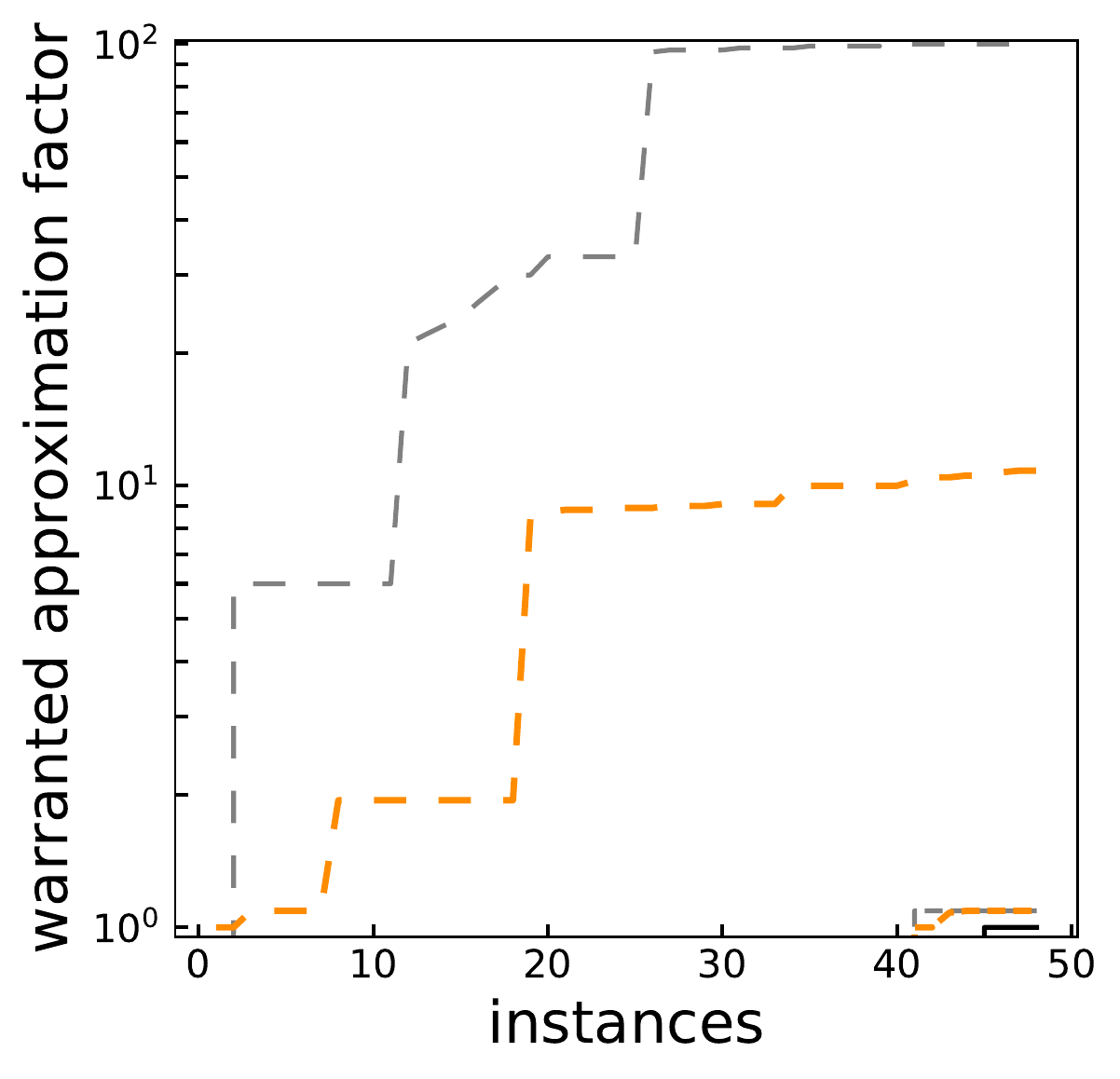}
            \caption[]{}    
            \label{f:devAVA:eps:o}
        \end{subfigure}
        \begin{subfigure}[b]{0.24\textwidth}  
            \centering 
            \includegraphics[width=\textwidth]{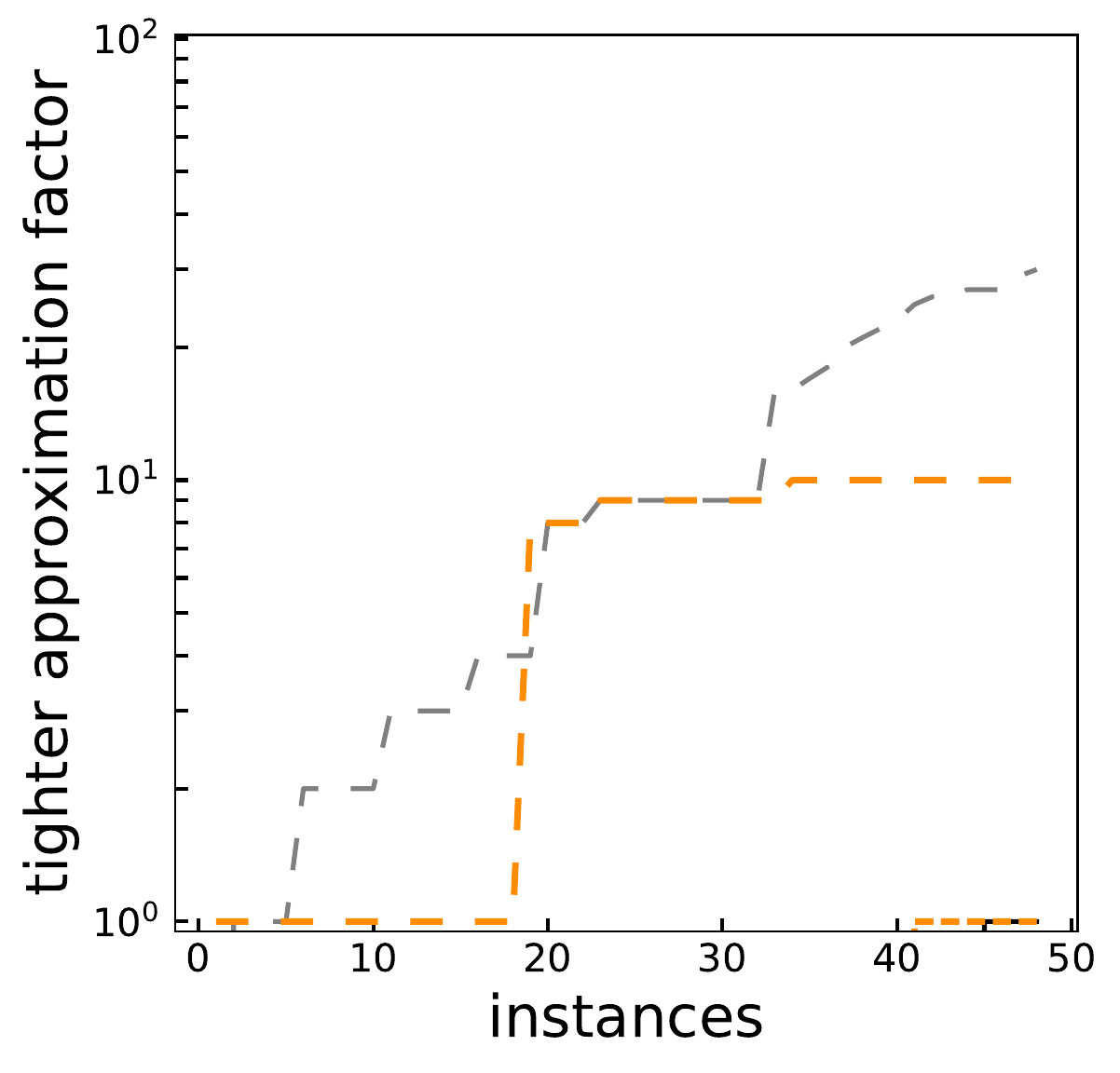}
            \caption[]{}    
            \label{f:devAVA:eps:r}
        \end{subfigure}
        \begin{subfigure}[b]{0.23\textwidth}  
            \centering 
            \includegraphics[width=\textwidth]{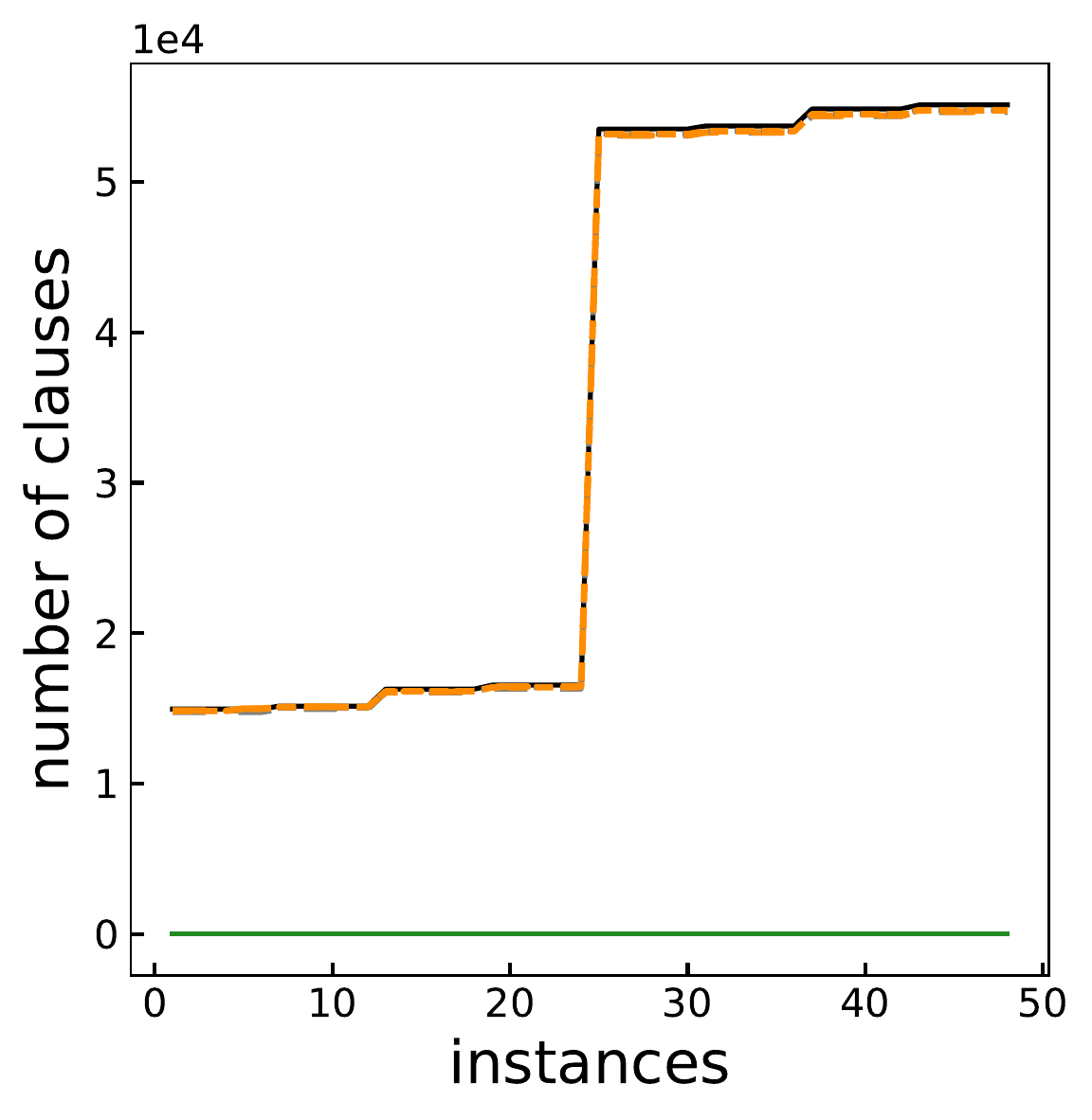}
            \caption[]{}    
            \label{f:devAVA:nc}
        \end{subfigure}
        \caption{Results on the: \subref{f:devAVA:hv} hypervolume indicator;
        \subref{f:devAVA:eps:o}-\subref{f:devAVA:eps:r} approximation factor;
        and \subref{f:devAVA:nc} number of clauses
        of \rib algorithms on the MDAP instances.} 
        \label{f:devAVA}
    \end{figure}

It is not always obvious whether \rib or \rcb will perform better than the other for a given
problem instance, nor with which parameters settings, though for very large objective
functions, \ie, many variables and large coefficients, \rcb may be preferable due to the
smaller number of clauses required.
However, the results shown in this section indicate that they can be competitive against the
state-of-the-art while warranting that the returned approximation set is within some
approximation factor, provided that they are able to complete, at least,
one iteration within the time limit. Moreover, depending on how the value of the desired
approximation factor is updated within \rib and \rcb, the algorithm is able to tighten
the warranted approximation factor along the run.
Additionally, the results also show that it can be helpful to first search for an approximation
set before searching for the whole Pareto front. This allows to ignore a larger portion of
the objective space sooner, and to find better approximation sets, with respect to the
hypervolume indicator, than those found by the algorithms that search for nondominated points
from the beginning of its execution.

\noindent 

\section{Conclusions}
\label{s:conc}

\noindent This paper shows that with a unary encoding of the objective function values, there is
a one-to-one correspondence between the nondominated points and Minimal Correction
Subsets (MCSs). This implies that, with such encoding, any MCS enumeration algorithm can be
used to enumerate all the nondominated points.
This paper also proposes two approximation versions of such encoding which ensure
that MCS enumeration algorithms provide a $(1+\varepsilon)$-approximation set of the
Pareto front, where $\varepsilon\geqslant 0$.
Additionally, two algorithms based on such approximations and on the idea of sequentially
tightening $\varepsilon$ until the set of solutions found has reached a desired approximation
factor, or corresponds to the Pareto front, were proposed. Even if interrupted sooner, \eg, due to reaching
a pre-defined time limit, the algorithms return the set of solutions found, and if at least
the first iteration is completed, then they also provide a lower bound set and, consequently,
an approximation factor tighter than the parameterized one.
In fact, the approximation based simply on modifying the objective function coefficients
can be used with any MOBO solver to obtain a $(1+\varepsilon)$-approximation set.

The preliminary results presented here show that such algorithms based on re-approximations
can provide better anytime performance than the version dedicated to searching only
nondominated points ($\varepsilon=0$) while requiring smaller encodings.
These algorithms find particular relevance for instances for which the Pareto front is very
large, or for which enumeration algorithms are not able to find the full Pareto front within
a reasonable amount of time.
These results show the potential of using such approximation versions of the encoding and the
idea of iteratively finding better and better approximation sets. It remains unclear for
which type of instances each approximation is the best for and with which parameter settings.
This analysis is left for future work.

\noindent 

\vspace{0.5cm}

\section*{Acknowledgements}
\addcontentsline{toc}{section}{\numberline{}Acknowledgements}
\noindent This work was supported by Portuguese national funds through FCT - Fundação para a 
Ciência e Tecnologia, I.P., under projects UIDB/50021/2020, PTDC/CCI-COM/31198/2017 and 
DSAIPA/AI/0044/2018. 
The work by José Rui Figueira was also supported by Portuguese national funds through the FCT 
under the project UIDB/00097/2020. 


\vspace{0.5cm}

\section*{References}


\addcontentsline{toc}{section}{\numberline{}References}

\bibliographystyle{model2-names}
\bibliography{Bib_MSC,sat,references}

\begin{thebibliography}{36}
\expandafter\ifx\csname natexlab\endcsname\relax\def\natexlab#1{#1}\fi
\expandafter\ifx\csname url\endcsname\relax
  \def\url#1{\texttt{#1}}\fi
\expandafter\ifx\csname urlprefix\endcsname\relax\def\urlprefix{URL }\fi
\providecommand{\eprint}[2][]{\url{#2}}
\providecommand{\bibinfo}[2]{#2}
\ifx\xfnm\relax \def\xfnm[#1]{\unskip,\space#1}\fi
\bibitem[{Ab{\'{\i}}o et~al.(2012)Ab{\'{\i}}o, Nieuwenhuis, Oliveras,
  Rodr{\'{\i}}guez{-}Carbonell and Mayer{-}Eichberger}]{abio-jair12}
\bibinfo{author}{Ab{\'{\i}}o, I.}, \bibinfo{author}{Nieuwenhuis, R.},
  \bibinfo{author}{Oliveras, A.},
  \bibinfo{author}{Rodr{\'{\i}}guez{-}Carbonell, E.},
  \bibinfo{author}{Mayer{-}Eichberger, V.}, \bibinfo{year}{2012}.
\newblock \bibinfo{title}{{A New Look at BDDs for Pseudo-Boolean Constraints}}.
\newblock \bibinfo{journal}{Journal Artificial Intelligence Research}
  \bibinfo{volume}{45}, \bibinfo{pages}{443--480}.
\bibitem[{As{\'i}n and Nieuwenhuis(2014)}]{asin2012curriculum}
\bibinfo{author}{As{\'i}n, R.}, \bibinfo{author}{Nieuwenhuis, R.},
  \bibinfo{year}{2014}.
\newblock \bibinfo{title}{{Curriculum-based course timetabling with {SAT} and
  MaxSAT}}.
\newblock \bibinfo{journal}{Annals of Operations Research}
  \bibinfo{volume}{218}, \bibinfo{pages}{71--91}.
\bibitem[{As\'{\i}n et~al.(2011)As\'{\i}n, Nieuwenhuis, Oliveras and
  Rodr\'{\i}guez-Carbonell}]{asin-constraints11}
\bibinfo{author}{As\'{\i}n, R.}, \bibinfo{author}{Nieuwenhuis, R.},
  \bibinfo{author}{Oliveras, A.}, \bibinfo{author}{Rodr\'{\i}guez-Carbonell,
  E.}, \bibinfo{year}{2011}.
\newblock \bibinfo{title}{{Cardinality Networks: a theoretical and empirical
  study}}.
\newblock \bibinfo{journal}{Constraints} \bibinfo{volume}{16},
  \bibinfo{pages}{195--221}.
\bibitem[{Bacchus et~al.(2019)Bacchus, J{\"{a}}rvisalo and
  Martins}]{DBLP:journals/jsat/BacchusJM19}
\bibinfo{author}{Bacchus, F.}, \bibinfo{author}{J{\"{a}}rvisalo, M.},
  \bibinfo{author}{Martins, R.}, \bibinfo{year}{2019}.
\newblock \bibinfo{title}{Maxsat evaluation 2018: New developments and detailed
  results}.
\newblock \bibinfo{journal}{J. Satisf. Boolean Model. Comput.}
  \bibinfo{volume}{11}, \bibinfo{pages}{99--131}.
\bibitem[{Bailleux and Boufkhad(2003)}]{bailleux-cp03}
\bibinfo{author}{Bailleux, O.}, \bibinfo{author}{Boufkhad, Y.},
  \bibinfo{year}{2003}.
\newblock \bibinfo{title}{{Efficient CNF Encoding of Boolean Cardinality
  Constraints}}, in: \bibinfo{booktitle}{Principles and Practice of Constraint
  Programming}, \bibinfo{publisher}{Springer}. pp. \bibinfo{pages}{108--122}.
\bibitem[{Bailleux et~al.(2006)Bailleux, Boufkhad and
  Roussel}]{bailleux-jsat06}
\bibinfo{author}{Bailleux, O.}, \bibinfo{author}{Boufkhad, Y.},
  \bibinfo{author}{Roussel, O.}, \bibinfo{year}{2006}.
\newblock \bibinfo{title}{{A Translation of Pseudo Boolean Constraints to
  {SAT}}}.
\newblock \bibinfo{journal}{Journal on Satisfiability, Boolean Modeling and
  Computation} \bibinfo{volume}{2}, \bibinfo{pages}{191--200}.
\bibitem[{Bailleux et~al.(2009)Bailleux, Boufkhad and Roussel}]{watchdog}
\bibinfo{author}{Bailleux, O.}, \bibinfo{author}{Boufkhad, Y.},
  \bibinfo{author}{Roussel, O.}, \bibinfo{year}{2009}.
\newblock \bibinfo{title}{{New Encodings of Pseudo-Boolean Constraints into
  {CNF}}}, in: \bibinfo{booktitle}{International Conference on Theory and
  Applications of Satisfiability Testing}, \bibinfo{publisher}{Springer}. pp.
  \bibinfo{pages}{181--194}.
\bibitem[{Bergman and Cir{\'{e}}(2016)}]{DBLP:conf/cp/BergmanC16}
\bibinfo{author}{Bergman, D.}, \bibinfo{author}{Cir{\'{e}}, A.A.},
  \bibinfo{year}{2016}.
\newblock \bibinfo{title}{Multiobjective optimization by decision diagrams},
  in: \bibinfo{editor}{Rueher, M.} (Ed.), \bibinfo{booktitle}{Principles and
  Practice of Constraint Programming - 22nd International Conference, {CP}
  2016, Toulouse, France, September 5-9, 2016, Proceedings},
  \bibinfo{publisher}{Springer}. pp. \bibinfo{pages}{86--95}.
\bibitem[{Bieber et~al.(2011)Bieber, Delmas and
  Seguin}]{DBLP:conf/safecomp/BieberDS11}
\bibinfo{author}{Bieber, P.}, \bibinfo{author}{Delmas, R.},
  \bibinfo{author}{Seguin, C.}, \bibinfo{year}{2011}.
\newblock \bibinfo{title}{Dalculus - theory and tool for development assurance
  level allocation}, in: \bibinfo{editor}{Flammini, F.},
  \bibinfo{editor}{Bologna, S.}, \bibinfo{editor}{Vittorini, V.} (Eds.),
  \bibinfo{booktitle}{Computer Safety, Reliability, and Security - 30th
  International Conference, {SAFECOMP} 2011, Naples, Italy, September 19-22,
  2011. Proceedings}, \bibinfo{publisher}{Springer}. pp.
  \bibinfo{pages}{43--56}.
\bibitem[{Bj{\o}rner et~al.(2015)Bj{\o}rner, Phan and
  Fleckenstein}]{DBLP:conf/tacas/BjornerPF15}
\bibinfo{author}{Bj{\o}rner, N.}, \bibinfo{author}{Phan, A.},
  \bibinfo{author}{Fleckenstein, L.}, \bibinfo{year}{2015}.
\newblock \bibinfo{title}{{\(\nu\)Z - An Optimizing {SMT} Solver}}, in:
  \bibinfo{booktitle}{International Conference on Tools and Algorithms for the
  Construction and Analysis of Systems}, pp. \bibinfo{pages}{194--199}.
\bibitem[{E\'en and S\"orensson(2006)}]{een-jsat06}
\bibinfo{author}{E\'en, N.}, \bibinfo{author}{S\"orensson, N.},
  \bibinfo{year}{2006}.
\newblock \bibinfo{title}{{Translating Pseudo-{B}oolean Constraints into
  {SAT}}}.
\newblock \bibinfo{journal}{Journal on Satisfiability, Boolean Modeling and
  Computation} \bibinfo{volume}{2}, \bibinfo{pages}{1--26}.
\bibitem[{Ehrgott(2005)}]{DBLP:books/daglib/Ehrgott05}
\bibinfo{author}{Ehrgott, M.}, \bibinfo{year}{2005}.
\newblock \bibinfo{title}{Multicriteria Optimization {(2.} ed.)}.
\newblock \bibinfo{publisher}{Springer}.
\bibitem[{Feng et~al.(2017)Feng, Bastani, Martins, Dillig and
  Anand}]{DBLP:conf/ndss/FengBMDA17}
\bibinfo{author}{Feng, Y.}, \bibinfo{author}{Bastani, O.},
  \bibinfo{author}{Martins, R.}, \bibinfo{author}{Dillig, I.},
  \bibinfo{author}{Anand, S.}, \bibinfo{year}{2017}.
\newblock \bibinfo{title}{Automated synthesis of semantic malware signatures
  using maximum satisfiability}, in: \bibinfo{booktitle}{24th Annual Network
  and Distributed System Security Symposium, {NDSS} 2017, San Diego,
  California, USA, February 26 - March 1, 2017}, \bibinfo{publisher}{The
  Internet Society}.
\bibitem[{Gr{\'{e}}goire et~al.(2018)Gr{\'{e}}goire, Izza and
  Lagniez}]{DBLP:conf/ijcai/GregoireIL18}
\bibinfo{author}{Gr{\'{e}}goire, {\'{E}}.}, \bibinfo{author}{Izza, Y.},
  \bibinfo{author}{Lagniez, J.}, \bibinfo{year}{2018}.
\newblock \bibinfo{title}{Boosting {MCS}es enumeration}, in:
  \bibinfo{editor}{Lang, J.} (Ed.), \bibinfo{booktitle}{Proceedings of the
  Twenty-Seventh International Joint Conference on Artificial Intelligence,
  {IJCAI} 2018, July 13-19, 2018, Stockholm, Sweden},
  \bibinfo{publisher}{ijcai.org}. pp. \bibinfo{pages}{1309--1315}.
\bibitem[{Herzel et~al.(2021)Herzel, Bazgan, Ruzika, Thielen and
  Vanderpooten}]{DBLP:journals/jgo/HerzelBRTV21}
\bibinfo{author}{Herzel, A.}, \bibinfo{author}{Bazgan, C.},
  \bibinfo{author}{Ruzika, S.}, \bibinfo{author}{Thielen, C.},
  \bibinfo{author}{Vanderpooten, D.}, \bibinfo{year}{2021}.
\newblock \bibinfo{title}{One-exact approximate {P}areto sets}.
\newblock \bibinfo{journal}{J. Glob. Optim.} \bibinfo{volume}{80},
  \bibinfo{pages}{87--115}.
\bibitem[{H{\"{o}}lldobler et~al.(2012)H{\"{o}}lldobler, Manthey and
  Steinke}]{swc}
\bibinfo{author}{H{\"{o}}lldobler, S.}, \bibinfo{author}{Manthey, N.},
  \bibinfo{author}{Steinke, P.}, \bibinfo{year}{2012}.
\newblock \bibinfo{title}{{A Compact Encoding of Pseudo-Boolean Constraints
  into {SAT}}}, in: \bibinfo{booktitle}{{KI} 2012: Advances in Artificial
  Intelligence}, \bibinfo{publisher}{Springer}. pp. \bibinfo{pages}{107--118}.
\bibitem[{Jackson et~al.(2009)Jackson, Estler, Rayside
  et~al.}]{jackson2009guided}
\bibinfo{author}{Jackson, D.}, \bibinfo{author}{Estler, H.},
  \bibinfo{author}{Rayside, D.}, et~al., \bibinfo{year}{2009}.
\newblock \bibinfo{title}{{The Guided Improvement Algorithm for Exact,
  General-Purpose, Many-Objective Combinatorial Optimization}}.
\newblock \bibinfo{type}{Technical Report}
  \bibinfo{number}{MIT-CSAIL-TR-2009-033}. {MIT}.
\bibitem[{Jose and Majumdar(2011)}]{majumdar-pldi11}
\bibinfo{author}{Jose, M.}, \bibinfo{author}{Majumdar, R.},
  \bibinfo{year}{2011}.
\newblock \bibinfo{title}{{Cause clue clauses: error localization using maximum
  satisfiability}}, in: \bibinfo{booktitle}{Programming Language Design and
  Implementation}, \bibinfo{publisher}{ACM}. pp. \bibinfo{pages}{437--446}.
\bibitem[{Joshi et~al.(2015)Joshi, Martins and
  Manquinho}]{DBLP:conf/cp/0001MM15}
\bibinfo{author}{Joshi, S.}, \bibinfo{author}{Martins, R.},
  \bibinfo{author}{Manquinho, V.M.}, \bibinfo{year}{2015}.
\newblock \bibinfo{title}{Generalized totalizer encoding for pseudo-boolean
  constraints}, in: \bibinfo{editor}{Pesant, G.} (Ed.),
  \bibinfo{booktitle}{Principles and Practice of Constraint Programming - 21st
  International Conference, {CP} 2015, Cork, Ireland, August 31 - September 4,
  2015, Proceedings}, \bibinfo{publisher}{Springer}. pp.
  \bibinfo{pages}{200--209}.
\bibitem[{Karpinski and
  Piotr{\'{o}}w(2019)}]{DBLP:journals/constraints/KarpinskiP19}
\bibinfo{author}{Karpinski, M.}, \bibinfo{author}{Piotr{\'{o}}w, M.},
  \bibinfo{year}{2019}.
\newblock \bibinfo{title}{Encoding cardinality constraints using multiway merge
  selection networks}.
\newblock \bibinfo{journal}{Constraints} \bibinfo{volume}{24},
  \bibinfo{pages}{234--251}.
\bibitem[{Karpinski and Piotr{\'{o}}w(2020)}]{DBLP:conf/sat/KarpinskiP20}
\bibinfo{author}{Karpinski, M.}, \bibinfo{author}{Piotr{\'{o}}w, M.},
  \bibinfo{year}{2020}.
\newblock \bibinfo{title}{Incremental encoding of pseudo-boolean goal functions
  based on comparator networks}, in: \bibinfo{editor}{Pulina, L.},
  \bibinfo{editor}{Seidl, M.} (Eds.), \bibinfo{booktitle}{Theory and
  Applications of Satisfiability Testing - {SAT} 2020 - 23rd International
  Conference, Alghero, Italy, July 3-10, 2020, Proceedings},
  \bibinfo{publisher}{Springer}. pp. \bibinfo{pages}{519--535}.
\bibitem[{Li and Many{\`a}(2009)}]{manya-handbook09}
\bibinfo{author}{Li, C.M.}, \bibinfo{author}{Many{\`a}, F.},
  \bibinfo{year}{2009}.
\newblock \bibinfo{title}{{MaxSAT, Hard and Soft Constraints}}, in:
  \bibinfo{booktitle}{Handbook of Satisfiability}. \bibinfo{publisher}{IOS
  Press}, pp. \bibinfo{pages}{613--631}.
\bibitem[{Marques{-}Silva et~al.(2013)Marques{-}Silva, Heras, Janota, Previti
  and Belov}]{DBLP:conf/ijcai/Marques-SilvaHJPB13}
\bibinfo{author}{Marques{-}Silva, J.}, \bibinfo{author}{Heras, F.},
  \bibinfo{author}{Janota, M.}, \bibinfo{author}{Previti, A.},
  \bibinfo{author}{Belov, A.}, \bibinfo{year}{2013}.
\newblock \bibinfo{title}{{On Computing Minimal Correction Subsets}}, in:
  \bibinfo{booktitle}{International Joint Conference on Artificial
  Intelligence}, pp. \bibinfo{pages}{615--622}.
\bibitem[{Martins et~al.(2014)Martins, Manquinho and Lynce}]{martins-sat14}
\bibinfo{author}{Martins, R.}, \bibinfo{author}{Manquinho, V.},
  \bibinfo{author}{Lynce, I.}, \bibinfo{year}{2014}.
\newblock \bibinfo{title}{{Open-WBO: a Modular MaxSAT Solver}}, in:
  \bibinfo{booktitle}{International Conference on Theory and Applications of
  Satisfiability Testing}, \bibinfo{publisher}{Springer}. pp.
  \bibinfo{pages}{438--445}.
\bibitem[{Ogawa et~al.(2013)Ogawa, Liu, Hasegawa, Koshimura and
  Fujita}]{totalizer-ictai13}
\bibinfo{author}{Ogawa, T.}, \bibinfo{author}{Liu, Y.},
  \bibinfo{author}{Hasegawa, R.}, \bibinfo{author}{Koshimura, M.},
  \bibinfo{author}{Fujita, H.}, \bibinfo{year}{2013}.
\newblock \bibinfo{title}{{Modulo Based CNF Encoding of Cardinality Constraints
  and Its Application to MaxSAT Solvers}}, in:
  \bibinfo{booktitle}{International Conference on Tools with Artificial
  Intelligence}, \bibinfo{publisher}{IEEE}. pp. \bibinfo{pages}{9 -- 17}.
\bibitem[{Papadimitriou and Yannakakis(2000)}]{PapYan2000}
\bibinfo{author}{Papadimitriou, C.}, \bibinfo{author}{Yannakakis, M.},
  \bibinfo{year}{2000}.
\newblock \bibinfo{title}{On the approximability of trade-offs and optimal
  access of web sources}, in: \bibinfo{booktitle}{Proceedings 41st Annual
  Symposium on Foundations of Computer Science}, pp. \bibinfo{pages}{86--92}.
\bibitem[{Previti et~al.(2017)Previti, Menc{\'{\i}}a, J{\"{a}}rvisalo and
  Marques{-}Silva}]{DBLP:conf/sat/PrevitiMJM17}
\bibinfo{author}{Previti, A.}, \bibinfo{author}{Menc{\'{\i}}a, C.},
  \bibinfo{author}{J{\"{a}}rvisalo, M.}, \bibinfo{author}{Marques{-}Silva, J.},
  \bibinfo{year}{2017}.
\newblock \bibinfo{title}{Improving {MCS} enumeration via caching}, in:
  \bibinfo{editor}{Gaspers, S.}, \bibinfo{editor}{Walsh, T.} (Eds.),
  \bibinfo{booktitle}{Theory and Applications of Satisfiability Testing - {SAT}
  2017 - 20th International Conference, Melbourne, VIC, Australia, August 28 -
  September 1, 2017, Proceedings}, \bibinfo{publisher}{Springer}. pp.
  \bibinfo{pages}{184--194}.
\bibitem[{Roussel and Manquinho(2009)}]{DBLP:series/faia/RousselM09}
\bibinfo{author}{Roussel, O.}, \bibinfo{author}{Manquinho, V.M.},
  \bibinfo{year}{2009}.
\newblock \bibinfo{title}{Pseudo-boolean and cardinality constraints}, in:
  \bibinfo{editor}{Biere, A.}, \bibinfo{editor}{Heule, M.},
  \bibinfo{editor}{van Maaren, H.}, \bibinfo{editor}{Walsh, T.} (Eds.),
  \bibinfo{booktitle}{Handbook of Satisfiability}. \bibinfo{publisher}{{IOS}
  Press}. volume \bibinfo{volume}{185} of \textit{\bibinfo{series}{Frontiers in
  Artificial Intelligence and Applications}}, pp. \bibinfo{pages}{695--733}.
\bibitem[{Safarpour et~al.(2007)Safarpour, Mangassarian, Veneris, Liffiton and
  Sakallah}]{veneris-fmcad07}
\bibinfo{author}{Safarpour, S.}, \bibinfo{author}{Mangassarian, H.},
  \bibinfo{author}{Veneris, A.G.}, \bibinfo{author}{Liffiton, M.H.},
  \bibinfo{author}{Sakallah, K.A.}, \bibinfo{year}{2007}.
\newblock \bibinfo{title}{Improved design debugging using maximum
  satisfiability}, in: \bibinfo{booktitle}{Formal Methods in Computer-Aided
  Design}, \bibinfo{publisher}{{IEEE} Computer Society}. pp.
  \bibinfo{pages}{13--19}.
\bibitem[{Sinz(2005)}]{seq}
\bibinfo{author}{Sinz, C.}, \bibinfo{year}{2005}.
\newblock \bibinfo{title}{{Towards an Optimal {CNF} Encoding of Boolean
  Cardinality Constraints}}, in: \bibinfo{booktitle}{Principles and Practice of
  Constraint Programming}, \bibinfo{publisher}{Springer}. pp.
  \bibinfo{pages}{827--831}.
\bibitem[{Soh et~al.(2017)Soh, Banbara, Tamura and {Le
  Berre}}]{DBLP:conf/cp/SohBTB17}
\bibinfo{author}{Soh, T.}, \bibinfo{author}{Banbara, M.},
  \bibinfo{author}{Tamura, N.}, \bibinfo{author}{{Le Berre}, D.},
  \bibinfo{year}{2017}.
\newblock \bibinfo{title}{{Solving Multiobjective Discrete Optimization
  Problems with Propositional Minimal Model Generation}}, in:
  \bibinfo{booktitle}{International Conference on Principles and Practice of
  Constraint Programming}, \bibinfo{publisher}{Springer}. pp.
  \bibinfo{pages}{596--614}.
\bibitem[{Tamura and Banbara(2008)}]{tamura2008sugar}
\bibinfo{author}{Tamura, N.}, \bibinfo{author}{Banbara, M.},
  \bibinfo{year}{2008}.
\newblock \bibinfo{title}{Sugar: A {CSP} to {SAT} translator based on order
  encoding}.
\newblock \bibinfo{journal}{Proceedings of the Second International CSP Solver
  Competition} , \bibinfo{pages}{65--69}.
\bibitem[{Terra{-}Neves et~al.(2017)Terra{-}Neves, Lynce and
  Man{\-}qui{\-}nho}]{DBLP:conf/sat/Terra-NevesLM17}
\bibinfo{author}{Terra{-}Neves, M.}, \bibinfo{author}{Lynce, I.},
  \bibinfo{author}{Man{\-}qui{\-}nho, V.}, \bibinfo{year}{2017}.
\newblock \bibinfo{title}{{Introducing Pareto Minimal Correction Subsets}}, in:
  \bibinfo{booktitle}{International Conference on Theory and Applications of
  Satisfiability Testing}, \bibinfo{publisher}{Springer}. pp.
  \bibinfo{pages}{195--211}.
\bibitem[{Zitzler et~al.(2008)Zitzler, Knowles and Thiele}]{ZitzlerKT08}
\bibinfo{author}{Zitzler, E.}, \bibinfo{author}{Knowles, J.D.},
  \bibinfo{author}{Thiele, L.}, \bibinfo{year}{2008}.
\newblock \bibinfo{title}{Quality assessment of {P}areto set approximations},
  in: \bibinfo{editor}{Branke, J.}, \bibinfo{editor}{Deb, K.},
  \bibinfo{editor}{Miettinen, K.}, \bibinfo{editor}{Slowinski, R.} (Eds.),
  \bibinfo{booktitle}{Multiobjective Optimization, Interactive and Evolutionary
  Approaches [outcome of Dagstuhl seminars]}, \bibinfo{publisher}{Springer}.
  pp. \bibinfo{pages}{373--404}.
\bibitem[{Zitzler and Thiele(1998)}]{ZitzlerT98}
\bibinfo{author}{Zitzler, E.}, \bibinfo{author}{Thiele, L.},
  \bibinfo{year}{1998}.
\newblock \bibinfo{title}{Multiobjective optimization using evolutionary
  algorithms - {A} comparative case study}, in: \bibinfo{editor}{Eiben, A.E.},
  \bibinfo{editor}{B{\"{a}}ck, T.}, \bibinfo{editor}{Schoenauer, M.},
  \bibinfo{editor}{Schwefel, H.} (Eds.), \bibinfo{booktitle}{Parallel Problem
  Solving from Nature - {PPSN} V, 5th International Conference, Amsterdam, The
  Netherlands, September 27-30, 1998, Proceedings},
  \bibinfo{publisher}{Springer}. pp. \bibinfo{pages}{292--304}.
\bibitem[{Zitzler et~al.(2003)Zitzler, Thiele, Laumanns, Fonseca and
  da~Fonseca}]{ZitzlerTLFF03}
\bibinfo{author}{Zitzler, E.}, \bibinfo{author}{Thiele, L.},
  \bibinfo{author}{Laumanns, M.}, \bibinfo{author}{Fonseca, C.M.},
  \bibinfo{author}{da~Fonseca, V.G.}, \bibinfo{year}{2003}.
\newblock \bibinfo{title}{Performance assessment of multiobjective optimizers:
  an analysis and review}.
\newblock \bibinfo{journal}{{IEEE} Trans. Evol. Comput.} \bibinfo{volume}{7},
  \bibinfo{pages}{117--132}.

\end{thebibliography}


\end{document}